\documentclass[preprint,12pt]{elsarticle}

\usepackage{amsmath,amssymb}
\usepackage{amsthm}
\usepackage{graphicx}
\usepackage{caption}
\usepackage[table]{xcolor}
\usepackage{booktabs}
\usepackage{tabularx}
\usepackage{longtable}
\usepackage{multirow}
\usepackage{pdflscape}
\usepackage{algorithm}
\usepackage{algpseudocode}
\usepackage{natbib}
\usepackage{hyperref}
\hypersetup{hidelinks}
\usepackage{array}
\usepackage{float}
\usepackage{placeins}
\usepackage{adjustbox}

\definecolor{candidatepoolhighlight}{HTML}{EAF3FA}

\journal{Computer Methods in Applied Mechanics and Engineering}

\theoremstyle{plain}
\newtheorem{principle}{Principle}[section]
\newtheorem{lemma}{Lemma}[section]
\newtheorem{proposition}{Proposition}[section]
\newtheorem{corollary}{Corollary}[section]
\theoremstyle{definition}
\newtheorem{assumption}{Assumption}[section]
\newtheorem{definition}{Definition}[section]
\theoremstyle{remark}

\begin{document}

\begin{frontmatter}
\title{DeSyR: A Decoupled Symbolic Recovery Framework with PINN-Guided Structure Search and Physics-Informed Coefficient Refinement}

\author[aff1]{Pancheng Niu}
\author[aff1]{Jun Guo\corref{cor1}}
\ead{junguo0407@cuit.edu.cn}
\author[aff2]{Qiaolin He}
\author[aff3]{Jingcai Guo}
\author[aff4]{Yanchao Shi}

\cortext[cor1]{Corresponding author}
\affiliation[aff1]{organization={Chengdu University of Information Technology},
            addressline={College of Applied Mathematics},
            city={Chengdu},
            postcode={610225},
            country={China}}
\affiliation[aff2]{organization={Sichuan University},
            addressline={School of Mathematics},
            city={Chengdu},
            postcode={610065},
            country={China}}
\affiliation[aff3]{organization={Hong Kong Polytechnic University},
            addressline={Department of Computing},
            city={Hong Kong},
            postcode={999077},
            country={China}}
\affiliation[aff4]{organization={Southwest Petroleum University},
            addressline={College of Science},
            city={Chengdu},
            postcode={610500},
            state={Sichuan},
            country={China}}

\begin{abstract}
Recovering compact explicit solutions from neural approximations is challenging when imperfect teacher data guide both symbolic topology search and coefficient estimation. We present DeSyR, a decoupled symbolic recovery framework for differential equations. A physics-informed neural network guides repeated searches to construct candidate topologies with provisional constants. Once a topology is fixed, its coefficients are refined solely from the governing equation and prescribed constraints, after which the refined candidates undergo gated selection and verification. For linear fixed-topology parameterizations, we characterize teacher-error inheritance and show that finite-weight mixed data--physics fitting retains an $O(\beta^{-1})$ teacher-dependent contribution when the teacher error has a nonzero projection onto the model space. Under well-posedness, representability, zero-residual attainment, and discrete determinacy, physics-only refinement conditionally recovers the exact coefficients; for nonlinear parameterizations, the corresponding identifiability and convergence guarantees are local. DeSyR is evaluated on 15 differential-equation problems across 18 configurations covering high-order, space--time, multidimensional, nonlinear, and coupled systems. A candidate-level audit yields a 99.23\% convergence rate among free-parameter refits, while every selected refinement involving free coefficients converges. Configuration-level median refined relative $L_2$ errors are $2.31\times10^{-14}$ or lower. In same-topology comparisons with strictly positive errors before and after refinement, refinement reduces the error by eight to fourteen orders of magnitude. Within the tested representable settings, these results indicate that an approximate neural teacher can guide topology discovery without imposing its error scale on the final recovered coefficients, provided that a target-capable topology is retained and physics-only refinement converges.
\end{abstract}

\begin{keyword}
symbolic regression \sep physics-informed neural networks \sep differential equations \sep symbolic solution recovery \sep physics-informed coefficient refinement
\end{keyword}

\end{frontmatter}

\section{Introduction}\label{sec:intro}

Numerical methods are standard tools for solving differential equations in science and engineering. Finite-difference, finite-element, and spectral methods provide well-established discretization techniques with solid theoretical foundations and broad practical applicability \cite{leveque2007finite,brenner2008mathematical,trefethen2000spectral}. More recently, neural methods have emerged as an alternative for representing solutions as continuous functions. Deep Galerkin and Deep Ritz methods approximate individual solution fields, whereas DeepONet and Fourier neural operators learn mappings between function spaces \cite{sirignano2018dgm,yu2018deep,lu2021learning,li2020fourier}. Although these methods can be numerically effective, their solutions are typically represented implicitly, either through discrete degrees of freedom or network parameters. When a solution admits a compact explicit representation, recovering such a representation can facilitate interpretation, differentiation, and reuse in subsequent analysis or computation.

Physics-informed neural networks (PINNs) provide a useful framework for approximating solutions to differential equations by enforcing the governing equations and prescribed constraints during training, without requiring labeled solution data in the interior domain \cite{raissi2019physics,karniadakis2021physics}. Once trained, a PINN defines a continuous approximation that can be evaluated at arbitrary points in the domain. Its accuracy, however, remains strongly dependent on the optimization process and can be affected by loss imbalance, spectral bias, and difficulties associated with stiff or multiscale problems \cite{wang2021understanding,krishnapriyan2021characterizing,wang2022and,niu2025improved}. Several complementary mitigation strategies have therefore been developed, including adaptive reweighting of governing-equation residual and constraint losses, residual-based refinement of collocation points, architectural and activation-function modifications, and spatial or space--time domain decomposition \cite{wang2021understanding,xiang2022self,wu2023sampling,jagtap2020adaptive,jagtap2020xpinn}. Nevertheless, a PINN may capture the overall form of the solution while still exhibiting errors in quantities such as amplitudes, frequencies, offsets, or other coefficients. Moreover, the resulting solution remains implicit in the network parameters rather than being available as an explicit analytical expression.

Symbolic regression provides a means of recovering explicit analytical expressions from numerical solution data. Given a prescribed set of variables, constants, and operators, it searches for compact expressions that approximate the target samples \cite{schmidt2009distilling,cranmer2023interpretable,la2021contemporary}. The search is conducted over a combinatorial expression space whose size grows rapidly with expression complexity, while the prescribed operator set determines the class of admissible expressions \cite{udrescu2020ai,petersen2019deep}. In most symbolic-regression procedures, the coefficients of a candidate expression are estimated from the same data used to guide the search over expression structure. When these data are generated by an approximate neural solution, errors in the teacher can therefore propagate into the estimated coefficients. Consequently, the recovered expression may have the correct, or nearly correct, functional structure while its numerical coefficients remain inaccurate.

Existing approaches that combine symbolic regression with neural or physics-based solution information can be broadly grouped into three main categories. The first fits symbolic expressions to network-generated samples and subsequently applies pruning, compression, or simplification, as in Pruned-DPA \cite{majumdar2023symbolic}, SymTorch \cite{tan2026symtorch}, and related PINN-to-symbolic pipelines for nonlinear PDEs \cite{changdar2024integrating,das2025physics,changdar2026refined}. In these methods, numerical coefficients are estimated primarily from teacher data. The second incorporates governing-equation residuals into the symbolic-search objective, such that expression structure and coefficients are optimized jointly using both data and physics. Examples include PISN \cite{majumdar2022physics}, StruSR \cite{gong2025strusr}, and residual- and structure-sensitivity pruning \cite{gong2026rssp}. A data-free counterpart, SES \cite{garmaev2026data}, instead optimizes a differentiable symbolic model directly from equation and constraint residuals without relying on a neural teacher. The third introduces a partial separation between structure search and coefficient estimation by fixing candidate structures and subsequently re-estimating their coefficients from physical constraints. PR-GPSR \cite{oh2023genetic} follows this strategy, although its search fitness still includes both teacher-data and physics-based terms. Thus, in existing formulations, teacher data may still influence coefficient estimation directly, or structure selection and coefficient estimation may remain coupled through a shared objective.

This coupling links two distinct tasks: identifying an appropriate symbolic structure and estimating the numerical coefficients associated with that structure. The key question is therefore how teacher data and physical information should be assigned different roles at different stages of the recovery process. Neural teachers provide global information about the solution field and can therefore guide the search over candidate symbolic structures. Once a candidate structure has been fixed, however, its coefficients can instead be re-estimated using the governing equation and prescribed constraints. Existing methods have not fully separated these two stages \cite{majumdar2023symbolic,majumdar2022physics,gong2025strusr,garmaev2026data,oh2023genetic}, and the conditions under which coefficients on a fixed structure can be recovered exactly from physical constraints have not been systematically established.

To address this gap, we propose DeSyR, a decoupled symbolic recovery framework that assigns distinct roles to data and physics. PINN outputs guide symbolic searches that generate candidate topologies with provisional fitted constants. Once a topology is fixed, its coefficients are refined using an objective constructed solely from the governing equation and prescribed constraints. By separating topology proposal from the final physics-only coefficient-refinement objective, DeSyR avoids using agreement with teacher data as a surrogate for physical consistency. The resulting expressions are then screened using explicit reliability criteria and assessed through verification residuals for the governing problem.

This work makes three main contributions.

First, we introduce DeSyR, an objective-level decoupled framework for symbolic solution recovery. A PINN teacher guides repeated symbolic searches to construct candidate topologies and provides provisional initializations for their coefficients. Once an expression tree is selected for refinement, its topology is frozen, and its eligible coefficients are re-estimated exclusively from the governing equation and prescribed constraints. Gated selection and verification residuals are then used to identify and assess the final explicit expression. This design prevents teacher-data fitting errors from directly entering the final coefficient-refinement objective.

Second, we develop a fixed-topology theory that characterizes how the choice of coefficient-estimation objective affects recovery. For linear parameterizations, we quantify the inheritance of teacher error and show that a finite-weight mixed data--physics objective retains an $O(\beta^{-1})$ teacher-dependent contribution when the teacher error has a nonzero projection onto the fixed model space. Under well-posedness, representability, zero-residual attainability, and a determining collocation-residual map, physics-only refinement conditionally recovers the exact coefficients. For nonlinear parameterizations, the corresponding results are local and explicitly characterize the roles of identifiability, conditioning, and initialization. These guarantees apply only to fixed-topology coefficient recovery and do not imply guarantees for topology discovery or global nonlinear convergence.

Third, we evaluate DeSyR on 15 differential-equation problems across 18 configurations spanning high-order, space--time, multidimensional, nonlinear, and coupled systems. A candidate-level audit shows a 99.23\% convergence rate among free-parameter refits. Every selected refinement involving free coefficients converges, whereas candidates without free coefficients remain unchanged. At the configuration level, the median refined relative $L_2$ errors do not exceed $2.31\times10^{-14}$. In same-topology comparisons for cases with nonzero errors both before and after refinement, physics-only refinement reduces the errors by eight to fourteen orders of magnitude. Additional studies investigate candidate-pool coverage, objective choice, initialization, local identifiability, operator-library misspecification and enrichment, candidate-level refinement behavior, sensitivity to perturbations in prescribed constraints, selection gates, and computational cost, providing controlled evidence for the mechanisms underlying the proposed framework.

The remainder of this paper is organized as follows. Section~2 formulates the symbolic recovery problem and defines the admissible expression space. Section~3 presents the three-stage DeSyR framework. Section~4 develops the fixed-topology theory of coefficient refinement. Section~5 presents the numerical experiments, including mechanism controls and diagnostic studies. Section~6 discusses the methodological implications and limitations of the framework, and Section~7 concludes the paper.


\section{Problem formulation}\label{sec:problem}
\subsection{Governing problem and scope}\label{sec:governing-problem}

Given a forward differential problem with a known governing equation and
prescribed boundary or initial conditions, we seek to recover a compact explicit
representation of its solution field. Let $u$ denote the solution field, and let
$\mathbf{x}\in\Omega\subset\mathbb{R}^d$ denote the independent variables. For
time-dependent problems, we write $\mathbf{x}=(x_1,\dots,x_{d_s},t)$ with
$d=d_s+1$; for steady problems, the time coordinate is omitted. Let $\Gamma_c$
denote the set on which the prescribed constraints are imposed. The forward
problem is written as
\begin{equation}
\begin{aligned}
\mathcal{N}[u](\mathbf{x}) &= f(\mathbf{x}),
&& \mathbf{x}\in\Omega,\\
\mathcal{B}_{\ell}[u](\mathbf{x}) &= g_{\ell}(\mathbf{x}),
&& \mathbf{x}\in\Gamma_{c,\ell},\quad \ell=1,\ldots,L_c.
\end{aligned}
\label{eq:pde}
\end{equation}
Here, $\mathcal{N}$ denotes the governing differential operator. For each
constraint component $\ell$, $\mathcal{B}_{\ell}$ denotes the corresponding
constraint operator on $\Gamma_{c,\ell}$, and $g_{\ell}$ denotes the prescribed
constraint data. The constraints may include Dirichlet, Neumann, or initial
conditions, and both $\mathcal{N}$ and $\mathcal{B}_{\ell}$ may be nonlinear.
We write
$\Gamma_c=\cup_{\ell=1}^{L_c}\Gamma_{c,\ell}$,
$\mathcal{B}=(\mathcal{B}_1,\dots,\mathcal{B}_{L_c})$, and
$g=(g_1,\dots,g_{L_c})$. The problem data are collected as
\begin{equation}
\mathfrak{P}=(\Omega,\Gamma_c,\mathcal{N},\mathcal{B},f,g).
\label{eq:problem-data}
\end{equation}

Throughout the paper, we adopt the following well-posedness assumption.
\begin{assumption}\label{asm:wellposed}
(Well-posedness; A1.) Problem $\mathfrak{P}$ admits a unique classical solution
in the solution class under consideration, denoted by $u^\star$.
\end{assumption}

The problem considered here is distinct from two related tasks. First, it is not
governing-equation discovery
\cite{brunton2016discovering,rudy2017data,both2021deepmod}: the operator
$\mathcal{N}$ is known and constrains the recovered expression rather than being
inferred. Second, the objective is not to improve the neural solver itself, but
to recover an explicit expression using a fixed neural approximation together
with the prescribed governing problem. Coupled fields are searched separately
and then refined and selected jointly under the shared governing system, as
described in Section~\ref{sec:coupled}. The remainder of this section focuses on
scalar fields.

\subsection{Admissible symbolic representation}\label{sec:expression-space}

Expressions are built from a prescribed operator set $\mathcal{O}$
\cite{la2021contemporary,makke2024interpretable}. For each problem,
$\mathcal{O}$ is specified before symbolic search as part of the problem
configuration and is kept fixed throughout recovery; DeSyR does not adapt or
infer this library during search. The use of problem-specific function
libraries is standard in symbolic regression
\cite{brunton2016discovering,majumdar2022physics,oh2023genetic,gong2025strusr}.
This choice plays the role of an ansatz specification in classical analysis:
the operator library defines the available functional primitives, while the
structure search constructs candidate expressions from these primitives
\cite{oh2023genetic}.

\begin{definition}\label{def:complexity}
(Structural complexity.) A symbolic expression is represented as an expression
tree. Leaves are terminals, including independent variables, numerical
constants, and problem parameters when present. Internal nodes are operators: a
unary operator, such as sine or exponential, acts on one subexpression, whereas
a binary operator, such as addition or multiplication, acts on two
subexpressions. The complexity of an expression is the total number of nodes in
its tree. We distinguish the raw search complexity
$\mathcal{C}_{\mathrm{search}}$, evaluated on the unsimplified tree returned by
symbolic regression, from the final complexity
$\mathcal{C}_{\mathrm{final}}$, evaluated after coefficient refinement and
the Stage-B algebraic simplification together with any accepted Stage-C
cleaning. The former is used in the Pareto search, whereas the latter is used
for Stage-C complexity selection and for reporting the recovered expression.
\end{definition}

\begin{definition}\label{def:adm-space}
(Admissible expression space.) Let $\mathcal{S}(\mathcal{O})$ denote the set of
all expression trees formed from terminals, including independent variables,
numerical constants, and problem parameters when present, and from operators in
$\mathcal{O}$ according to their arities. Each tree is required to be defined on
$\Omega$ and to have the regularity required by $\mathcal{N}$ and the
$\mathcal{B}_{\ell}$. Given a complexity bound $C_{\max}$, the admissible
expression space is
\begin{equation}
\mathcal{S}(\mathcal{O},C_{\max})
=
\left\{
s\in\mathcal{S}(\mathcal{O}):
\mathcal{C}_{\mathrm{search}}(s)\le C_{\max}
\right\}.
\label{eq:adm-space}
\end{equation}
The bound $C_{\max}$ limits the trees explored during structure search.
Numerical evaluation further requires the expression and all derivatives needed
by the operators to take finite values on the discrete point sets. This
finite-value check is a necessary numerical screening step and does not
establish regularity over the entire domain.
\end{definition}

An admissible expression is described by a topology and a set of numerical
coefficients. The topology $\mathcal{T}$ specifies the tree structure,
including the variables, operators, positions of numerical constants, and their
arrangement, but not the numerical values assigned to the constant nodes. The
coefficient vector $\mathbf{a}\in\mathbb{R}^p$ collects these numerical values,
and the corresponding parameterized expression is written
$s_{\mathcal{T}}(\cdot;\mathbf{a})$. This representation separates the discrete
topology from the continuous coefficient vector that can be re-estimated while
the topology remains fixed.

Symbolic recovery seeks an explicit expression $u_{\mathrm{sym}}$ in
$\mathcal{S}(\mathcal{O},C_{\max})$. If
$u^\star\in\mathcal{S}(\mathcal{O},C_{\max})$, the target is exact symbolic
recovery, meaning functional equality with $u^\star$ within the solution class
rather than equality of expression trees. If
$u^\star\notin\mathcal{S}(\mathcal{O},C_{\max})$, the target is instead a compact
admissible expression with small governing-equation and constraint residuals.
In both cases, the expression must be differentiable to the order required by
the governing operator and evaluable without requiring retention of a
neural-network representation. One important source of non-representability is
operator-library misspecification, which is examined in
Section~\ref{sec:library-misspecification}.

\subsection{Physical consistency and verification measures}
\label{sec:measures}

Evaluation of a recovered expression requires reference-free measures of
physical consistency, together with quantities for final reporting.

\begin{definition}\label{def:measures}
(Measures of physical consistency.) For any expression $s$, its agreement with
the prescribed problem is measured by two normalized residuals. The
governing-equation residual is
\begin{equation}
\mathcal{E}_r(s)=
\left[
\frac{1}{|\Omega|}
\int_{\Omega}
\left|\mathcal{N}[s](\mathbf{x})-f(\mathbf{x})\right|^2
\,\mathrm{d}\mathbf{x}
\right]^{1/2},
\label{eq:gov-resid}
\end{equation}
and the constraint residual is
\begin{equation}
\mathcal{E}_c(s)=
\left[
\sum_{\ell=1}^{L_c}
\frac{\omega_\ell}{\mu_\ell(\Gamma_{c,\ell})}
\int_{\Gamma_{c,\ell}}
\left\|
\mathcal{B}_\ell[s](\mathbf{x})-g_\ell(\mathbf{x})
\right\|_2^2
\,\mathrm{d}\mu_\ell(\mathbf{x})
\right]^{1/2}.
\label{eq:con-resid}
\end{equation}
We assume $0<|\Omega|<\infty$ and
$0<\mu_\ell(\Gamma_{c,\ell})<\infty$ for every constraint component. The weights
satisfy $\omega_\ell>0$ and $\sum_\ell\omega_\ell=1$, so that every constraint
component contributes to the measure. The measure $\mu_\ell$ is chosen
according to the constraint set: surface (Hausdorff) measure for boundary and
initial manifolds, and counting measure for isolated point constraints.

All benchmarks considered in this work are nondimensional, so their residual
components can be aggregated directly. For dimensional applications with
heterogeneous constraint units, the residual components should first be
nondimensionalized or scaled by prescribed component-specific factors.

Equations~\eqref{eq:gov-resid} and \eqref{eq:con-resid} define continuous,
reference-solution-free measures of physical consistency. Their discrete
verification counterparts are defined in Section~\ref{sec:stage-c}, where the
corresponding point sets and weights are specified explicitly. By contrast, the coefficient-refinement objective is an unnormalized weighted
sum of squared pointwise residuals. It is used solely to enforce the governing
equation and prescribed constraints during optimization, with its point
allocation and weights controlling their relative numerical emphasis. Its value
is therefore not interpreted as, or reported as, an empirical estimate of the
normalized verification measures in
\eqref{eq:gov-resid}--\eqref{eq:con-resid}.
\end{definition}

\begin{definition}\label{def:criteria}
(Verification quantities and complexity.) A recovered expression
$u_{\mathrm{sym}}$ is characterized by its physical-consistency measures
$\mathcal{E}_r(u_{\mathrm{sym}})$ and
$\mathcal{E}_c(u_{\mathrm{sym}})$, together with its final structural complexity
$\mathcal{C}_{\mathrm{final}}(u_{\mathrm{sym}})$. In numerical evaluation, the
physical-consistency measures are represented by the discrete verification
residuals defined in Section~\ref{sec:stage-c}. These quantities do not
constitute the complete candidate-selection rule: Stage C also applies
convergence-eligibility, teacher-compatibility, and physics-equivalence gates
before complexity-based selection.
\end{definition}

When a manufactured or otherwise known reference solution is available, its
pointwise values are used only for post hoc computation of the relative $L_2$
error. In a manufactured problem, the reference solution may be used to define
the prescribed problem data during benchmark construction, but its values are
not supplied as PINN supervision and are not used for checkpoint selection,
structure search, candidate selection, or coefficient refinement.


\section{The DeSyR framework}\label{sec:framework}

DeSyR separates topology proposal from the final coefficient-refinement
objective. Teacher samples are used to propose candidate topologies, fit
provisional search-stage constants, and assess candidate compatibility after
refinement. The coefficients on each frozen topology are then re-estimated
using an objective constructed solely from the governing equation and
prescribed constraints. Reference-solution values are not used in these three
stages; for manufactured benchmarks, an analytic reference may nevertheless be
used to define the forcing and prescribed data before recovery and is used
afterward only for error evaluation. The same division of roles extends to
coupled multi-field problems, where candidate groups are refined and selected
jointly.

Figure~\ref{fig:desyr-framework} provides an overview of this division of roles.
Its five process blocks separate teacher construction, sampling, symbolic
search, coefficient refinement, and selection with verification. The
physics-refit block emphasizes the defining operation in DeSyR: the symbolic
topology is frozen, while the provisional constants inherited from
teacher-guided search are re-estimated using only the governing equation and
prescribed constraints.

\begin{figure}[H]
\centering
\includegraphics[width=\textwidth]{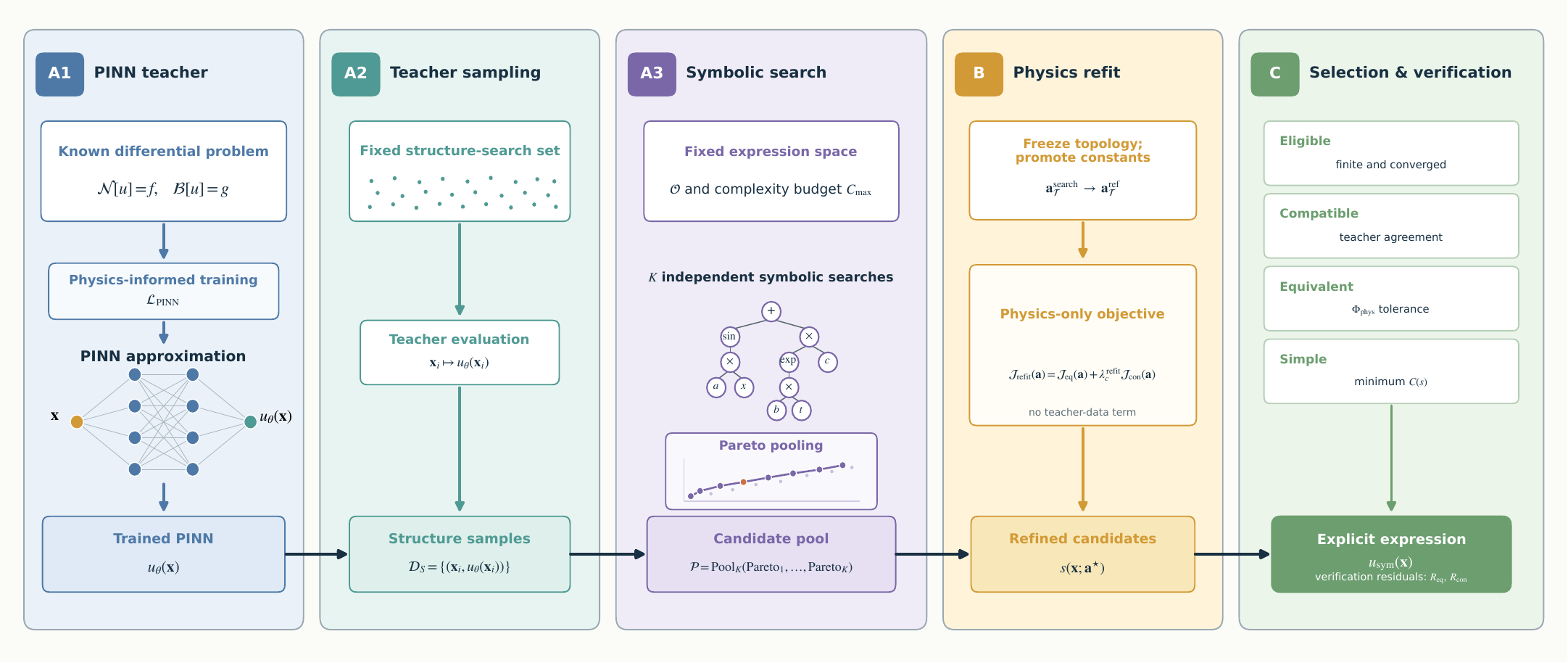}
\caption{Information flow in DeSyR. Stage A uses samples from a frozen PINN
teacher to construct a pooled set of candidate symbolic topologies. Stage B
freezes each topology and re-estimates its constants with a physics-only
objective. Stage C applies convergence-eligibility, teacher-compatibility,
physics-equivalence, and complexity gates, then reports verification residuals
for the selected explicit expression.}
\label{fig:desyr-framework}
\end{figure}

\subsection{The search--refinement decoupling principle}
\label{sec:decoupling}

DeSyR divides symbolic recovery into three stages. Stage A uses teacher samples
to generate candidate topologies and provisional coefficients. Stage B freezes
each candidate topology and re-estimates its coefficients using an objective
constructed solely from the governing equation and prescribed constraints.
Stage C selects the final expression from the refined candidate pool. This
workflow can be summarized as
\begin{equation}
\begin{aligned}
&\text{Stage A (search):} &&
\mathcal{P}
=
\mathrm{Pool}_K
\left(
\mathrm{Pareto}_1,\ldots,\mathrm{Pareto}_K
\right),\\
&\text{Stage B (refinement):} &&
\mathbf a^{\mathrm{opt}}_{\mathcal{T}}
\in
\arg\min_{\mathbf{a}}
\mathcal{J}_{\mathrm{refit}}(\mathbf{a}),
\qquad
(\mathcal{T},\mathbf{a}^{\mathrm{search}}_{\mathcal{T}})
\in\mathcal{P},\\
&\text{Stage C (selection):} &&
(\widehat{\mathcal{T}},\widehat{\mathbf{a}})
=
\mathrm{Gate}
\left\{
(\mathcal{T},\hat{\mathbf a}_{\mathcal{T}})
:
(\mathcal{T},\mathbf{a}^{\mathrm{search}}_{\mathcal{T}})
\in\mathcal{P}
\right\}.
\end{aligned}
\label{eq:stages}
\end{equation}

Here, $\mathrm{Pareto}_k$ denotes up to five representative candidates selected
from the empirical Pareto front returned by the $k$-th symbolic search according
to the Stage-A retention rule;
$\mathbf{a}^{\mathrm{search}}_{\mathcal{T}}$ denotes the provisional
coefficients fitted to topology $\mathcal{T}$ from teacher samples, and
$\mathbf a^{\mathrm{opt}}_{\mathcal{T}}$ denotes an ideal minimizer of the
physics-only objective, whereas $\hat{\mathbf a}_{\mathcal{T}}$ denotes the
numerical solution retained from the multi-start procedure. The operator
$\mathrm{Pool}_K$ merges the Pareto candidates retained from $K$ independent
symbolic searches into the candidate pool $\mathcal{P}$. In Stage B, each
candidate tree is kept fixed, and
$\mathcal{J}_{\mathrm{refit}}$ is formed only from the governing equation and
prescribed constraints; it contains no teacher-data term and re-estimates only
the numerical constants attached to the tree. In Stage C, the gate applies the
convergence-eligibility, teacher-compatibility, physics-equivalence, and
complexity rules in sequence to select the final expression.

Table~\ref{tab:roles} summarizes this separation by indicating where each
information source is used in the DeSyR workflow.

\begin{table}[htbp]
\centering
\scriptsize
\setlength{\tabcolsep}{2.5pt}
\renewcommand{\arraystretch}{1.10}
\caption{Information-source usage in DeSyR.}
\label{tab:roles}
\begin{tabularx}{\textwidth}{@{}
>{\raggedright\arraybackslash}p{0.21\textwidth}
>{\centering\arraybackslash}p{0.08\textwidth}
>{\centering\arraybackslash}p{0.13\textwidth}
>{\centering\arraybackslash}p{0.15\textwidth}
>{\centering\arraybackslash}p{0.12\textwidth}
>{\centering\arraybackslash}X
@{}}
\toprule
Source & Search & Refinement & Selection & Verification & Post hoc evaluation \\
\midrule
Teacher samples $\mathcal{D}_S$ & \checkmark & -- & Teacher compatibility & -- & -- \\
\cmidrule(lr){1-6}
Governing equation and constraints & -- & \checkmark & Physics equivalence & Residuals & -- \\
\cmidrule(lr){1-6}
Reference-solution values & -- & -- & -- & -- & Rel. $L_2$ error \\
\bottomrule
\end{tabularx}
\end{table}

\begin{principle}[Search--refinement decoupling]
\label{pr:decoupling}
Within the structure-search objective, teacher samples provide the only fitting
signal; no governing-equation or constraint residual is included. The same
teacher samples are used after refinement to assess candidate compatibility.
The search stage jointly proposes topologies and provisional constants. For
each frozen tree, the constants are then re-estimated by a physics-only
objective containing no teacher-data term. Final selection is performed by
explicit gates over the refined candidate pool, rather than by a single mixed
teacher--physics objective that jointly determines the final expression.
\end{principle}

This separation has two direct consequences. First, physical residuals do not
enter the combinatorial symbolic-search objective; candidate generation is
driven by teacher fit and expression complexity. Second, teacher fitting does
not enter the Stage-B objective. The search provides teacher-fitted candidates
and initial coefficient values, whereas the refinement re-estimates the
constants without changing the expression trees.
For nonlinear parameterizations, the stationary point reached by the refinement
may still depend on the initial values through the basin of attraction.
Section~\ref{sec:theory} gives the corresponding conditional statement of this
objective-level decoupling for the fixed-topology coefficient problem.

For the representable setting targeted by exact symbolic recovery, the
effectiveness of this procedure depends on the following operational
conditions, each of which is associated with a checkable component of the
implementation.

\begin{itemize}
\item \textbf{Condition C1 (Teacher guidance adequacy).} The teacher-based
ranking and compatibility signals are sufficiently informative that, whenever
the pooled candidate set contains a topology capable of representing the target
solution, at least one such topology remains admissible under the Stage-C
teacher-compatibility gate after refinement. The implementation seeks to
support this condition through physics-validation checkpoint selection in
Section~\ref{sec:stage-a} and by retaining multiple Pareto-optimal candidates
from the $K$ independent searches.

\item \textbf{Condition C2 (Candidate coverage).} The candidate pool contains at
least one topology capable of representing the target solution. This condition
is addressed by the $K$ independent symbolic searches and Pareto pooling in
Section~\ref{sec:stage-a}.

\item \textbf{Condition C3 (Refinement convergence).} For a topology entering
Stage B, the fixed-topology minimization terminates with a finite converged
solution. This condition is assessed using the convergence status defined in
Section~\ref{sec:stage-b}.
\end{itemize}

Together, C1--C3 identify three operational conditions that support the
exact-recovery pathway in the representable setting. They concern,
respectively, the adequacy of the teacher-based ranking and compatibility
signals, the coverage of the symbolic candidate pool, and the successful
convergence of at least one target-capable fixed-topology refinement. They are
not sufficient conditions for exact symbolic recovery; the additional
fixed-topology requirements are analyzed in Section~\ref{sec:theory}. Their
empirical behavior is examined in
Sections~\ref{sec:candidate-pool-coverage}, \ref{sec:selection-cost}, and
\ref{sec:robustness-identifiability}.

\subsection{Stage A: teacher-guided structure search}\label{sec:stage-a}

Stage A generates candidate topologies from samples of a trained teacher. We use a physics-informed neural network (PINN) as the teacher because it can be trained from the governing equation and prescribed constraints in \eqref{eq:pde}, without requiring labeled solution values in the interior domain, and can be evaluated at arbitrary points after training \cite{raissi2019physics,karniadakis2021physics}. The PINN represents the solution through a differentiable neural network $u_\theta$ and is trained by minimizing empirical residuals of the governing equation and prescribed constraints at collocation points. The corresponding residual components are
\begin{equation}
\begin{aligned}
\mathcal{L}_r(\theta)
&=
\frac{1}{N_r^p}
\sum_{i=1}^{N_r^p}
\left|
\mathcal{N}[u_\theta](\mathbf{x}_i^{r,p})
-
f(\mathbf{x}_i^{r,p})
\right|^2,\\
\mathcal{L}_{c,\ell}(\theta)
&=
\frac{1}{N_{c,\ell}^p}
\sum_{j=1}^{N_{c,\ell}^p}
\left\|
\mathcal{B}_\ell[u_\theta](\mathbf{x}_{j,\ell}^{c,p})
-
g_\ell(\mathbf{x}_{j,\ell}^{c,p})
\right\|_2^2,
\qquad \ell=1,\ldots,L_c.
\end{aligned}
\label{eq:pinn-comp}
\end{equation}
where $\mathbf{x}_i^{r,p}$ denote the interior collocation points and
$\mathbf{x}_{j,\ell}^{c,p}$ denote the collocation points associated with the
$\ell$-th constraint component. The PINN training objective is
\begin{equation}
\mathcal{L}_{\mathrm{PINN}}(\theta)
=
\lambda_r^{\mathrm{PINN}}\mathcal{L}_r(\theta)
+
\sum_{\ell=1}^{L_c}
\lambda_{c,\ell}^{\mathrm{PINN}}
\mathcal{L}_{c,\ell}(\theta).
\label{eq:pinn-loss}
\end{equation}
The weights are nonnegative and fixed by the problem configuration, and remain
unchanged throughout training. The constraint terms encode the boundary or
initial information required to determine the solution. The per-component
weights are specified to balance the relative numerical influence of the
governing-equation residual and the individual constraint residuals during
training. The loss in \eqref{eq:pinn-loss} is first minimized with Adam and then
further optimized with L-BFGS. Both optimizers use the same fixed Hammersley
collocation points generated by DeepXDE \cite{lu2021deepxde}.

Training produces a sequence of checkpoints, and the selected checkpoint
defines the teacher used for structure search. To avoid using
reference-solution pointwise values, checkpoint selection is based on a
physics-validation score. Let $v_i(\theta)$ denote the unweighted mean-square
error of the $i$-th validation component. The $n_{\mathrm{val}}$ components
consist of the interior residual and the individual constraint residuals. The
interior validation component is evaluated on domain points independent of the
training collocation points, whereas the constraint components are evaluated on
the fixed prescribed constraint points. The interior validation points are also
independent of the structure-search samples. Each component is scaled by
\begin{equation}
b_i=\max\left\{v_i(\theta^{(0)}),1\right\},
\label{eq:validation-scale}
\end{equation}
where $\theta^{(0)}$ denotes the initial network parameters. The constant $1$
prevents a component whose initial value is close to zero from dominating the
score because of round-off effects. The physics-validation score is
\begin{equation}
\mathcal{S}_{\mathrm{val}}(\theta)
=
\left[
\frac{1}{n_{\mathrm{val}}}
\sum_{i=1}^{n_{\mathrm{val}}}
\left(
\frac{v_i(\theta)}{b_i}
\right)^2
\right]^{1/2}.
\label{eq:val-score}
\end{equation}
The teacher is chosen as the checkpoint with the smallest
$\mathcal{S}_{\mathrm{val}}$. This physics-based rule does not use
reference-solution pointwise values and is intended to support Condition C1
without introducing such values into teacher selection.

After selection, the teacher is frozen. Its samples are used to generate
structure-search candidates and, after refinement, to assess candidate
compatibility; they do not enter the physics-only refinement objective or the
verification residuals. The structure-search point set $X_S$ is generated by
uniform sampling for problems with a single independent variable and by Latin
hypercube sampling when the independent-variable domain is multidimensional.
The resulting point set is kept fixed throughout recovery. By construction,
$X_S$ is disjoint from the refinement points used in
Section~\ref{sec:stage-b} and the interior verification points used in
Section~\ref{sec:stage-c}. Thus, teacher samples serve as the fitting signal for
structure search and the compatibility signal in Stage C, but do not enter
coefficient refinement or residual verification.

The structure-search sample set is
\begin{equation}
\mathcal{D}_S=
\left\{
\left(\mathbf{x}_i^S,u_\theta(\mathbf{x}_i^S)\right)
\right\}_{i=1}^{N_S},
\label{eq:teacher-samples}
\end{equation}
namely, the values of the frozen teacher $u_\theta$ evaluated on
$X_S=\{\mathbf{x}_i^S\}_{i=1}^{N_S}$. Stage A solves the bicriteria problem
\begin{equation}
\operatorname*{minimize}_{s\in\mathcal{S}(\mathcal{O},C_{\max})}
\left(
\mathcal{E}_S(s),
\mathcal{C}_{\mathrm{search}}(s)
\right),
\label{eq:search-bicriteria}
\end{equation}
where the teacher-fit loss is
\begin{equation}
\mathcal{E}_S(s)
=
\frac{1}{N_S}
\sum_{i=1}^{N_S}
\left|s(\mathbf{x}_i^S)-u_\theta(\mathbf{x}_i^S)\right|^2.
\label{eq:teacher-fit-loss}
\end{equation}
The candidate $s$ includes the provisional constants fitted during symbolic
search. Rather than fixing a scalarization weight a priori, the search returns
an empirical Pareto front representing the trade-off between teacher-fit loss
and candidate complexity. The bicriteria search objective in
\eqref{eq:search-bicriteria} contains no governing-equation or constraint
residual term. Accordingly, in DeSyR, the constants fitted during this stage
are treated as provisional search-stage coefficients rather than as
physics-refined coefficients.

Symbolic search is performed with PySR \cite{cranmer2023interpretable} by
optimizing the teacher-fit loss and search complexity in
\eqref{eq:search-bicriteria}. The operator set $\mathcal{O}$ and complexity
bound $C_{\max}$ are fixed for each problem as described in
Section~\ref{sec:expression-space}, and each search returns an empirical Pareto
front. Because symbolic search is stochastic, Condition C2 is formulated at
the level of the retained pooled candidate set rather than for any individual
search run. DeSyR performs $K$ independent searches and retains up to five
representative candidates from each empirical Pareto front: the simplest
candidate, the candidate with the best teacher fit, candidates associated with
major loss-improvement knees, and coverage positions along the front. The
retained candidates are merged into the candidate pool $\mathcal{P}$.
Each candidate consists of a topology and teacher-fitted provisional
coefficients $\mathbf{a}^{\mathrm{search}}_{\mathcal{T}}$; these coefficients
are used only to initialize the physics refinement in
Section~\ref{sec:stage-b}.

\subsection{Stage B: physics-based coefficient refinement}
\label{sec:stage-b}

Stage B processes each candidate in the pool independently. Following the
topology--coefficient decomposition introduced in
Section~\ref{sec:expression-space}, the refinement modifies only the numerical
constants attached to a candidate tree. The expression tree itself remains
fixed: its variables, operators, and connections are unchanged. The selected
free constants, represented by the coefficient vector $\mathbf{a}$ defined in
Section~\ref{sec:expression-space}, are then re-estimated using only the
governing equation and prescribed constraints.

\begin{definition}[Constant parameterization]
\label{def:parameterization}
Consider a candidate expression with a frozen tree. A numerical constant in the
tree is treated as a free parameter if it satisfies the following criteria: it
does not appear as the exponent of a power, its absolute value is at least
$10^{-10}$, its absolute value differs from 1 by at least $10^{-10}$, and it is
not a duplicate of another selected constant within floating-point tolerance.
Specifically, a newly encountered constant $c'$ is treated as a duplicate of a
previously selected constant $c$ when
$|c'-c|<10^{-9}\max\{1,|c'|\}$, in which case their occurrences are tied to the
same parameter.

If the number of distinct eligible constants exceeds the configured parameter
bound of 16, only those with the largest absolute values are retained as free
parameters. This bound is applied per field in coupled problems. The selected
constants are ordered by decreasing absolute value and denoted by
$\mathbf{a}=(a_1,\ldots,a_p)$. They are replaced in the tree by parameter
symbols, yielding the parameterized expression
$\hat{s}(\cdot;\mathbf{a})$. If no constant satisfies the eligibility criteria,
then $\mathbf{a}$ is empty and the candidate remains unchanged during
refinement.
\end{definition}

These rules define the coefficient space optimized in Stage B. Consequently,
exact coefficient recovery requires not only a target-capable expression tree
but also representability of the target coefficients within the resulting
fixed-topology parameterization. This requirement is formalized in
Assumption~\ref{asm4:representability}.

For each candidate, let
$X_r^f=\{\mathbf{x}_i^{r,f}\}_{i=1}^{N_r^f}$ denote the interior refinement
points, let $X_{c,\ell}^f$ denote the refinement points for the $\ell$-th
constraint component, and let
$N_c^f=\sum_{\ell=1}^{L_c}N_{c,\ell}^f$. The physics-refinement objective is
the unnormalized pointwise sum of squared residuals
\begin{equation}
\begin{aligned}
\mathcal{J}_{\mathrm{refit}}(\mathbf{a})
=&
\sum_{i=1}^{N_r^f}
\left|
\mathcal{N}[\hat{s}(\cdot;\mathbf{a})](\mathbf{x}_i^{r,f})
-
f(\mathbf{x}_i^{r,f})
\right|^2 \\
&+
\lambda_c^{\mathrm{refit}}
\sum_{\ell=1}^{L_c}
\sum_{j=1}^{N_{c,\ell}^f}
\left\|
\mathcal{B}_\ell[\hat{s}(\cdot;\mathbf{a})]
(\mathbf{x}_{j,\ell}^{c,f})
-
g_\ell(\mathbf{x}_{j,\ell}^{c,f})
\right\|_2^2 .
\end{aligned}
\label{eq:refit}
\end{equation}

Here, $\hat{s}$ denotes the parameterized expression. The constraint weight is
fixed at $\lambda_c^{\mathrm{refit}}=100$ for all problems. Interior residuals
therefore have pointwise weight 1, whereas constraint residuals have pointwise
weight $\lambda_c^{\mathrm{refit}}$; no normalization by the number of points is
applied. The objective in \eqref{eq:refit} contains only the governing operator
$\mathcal{N}$, the constraint operators $\mathcal{B}_\ell$, the prescribed data
$f$ and $g_\ell$, and the refinement points. It contains neither a teacher-data
term nor a reference-solution term.

For a candidate topology $\mathcal{T}$, Stage B targets the fixed-topology
optimization problem
\begin{equation}
\mathbf{a}^{\mathrm{opt}}_{\mathcal{T}}
\in
\arg\min_{\mathbf{a}}
\mathcal{J}_{\mathrm{refit}}(\mathbf{a}).
\label{eq:refit-argmin}
\end{equation}
When the minimum is attained, $\mathbf{a}^{\mathrm{opt}}_{\mathcal{T}}$ denotes
an ideal global minimizer. In practice, because the fixed-topology problem may
be nonconvex in $\mathbf{a}$, the optimizer is initialized from multiple
starting points, and the best numerical solution found is retained and denoted
by $\hat{\mathbf{a}}_{\mathcal{T}}$.

The initial values consist of the teacher-fitted provisional coefficients
$\mathbf{a}^{\mathrm{search}}_{\mathcal{T}}$ and several multiplicative
perturbations around them. Each initialization is passed to the nonlinear
least-squares solver \cite{virtanen2020scipy}. Among all runs that
return a finite objective value, the solution with the smallest final objective
is retained together with its convergence status. If no run returns a finite
result, the candidate is marked as non-convergent. Convergence is declared when
the change in the objective, the parameter-step size, or the first-order
optimality measure falls below a tolerance of $10^{-8}$. A run that reaches the
evaluation limit without satisfying any of these criteria is marked as
non-convergent. The recorded status is used to assess the convergence
requirement in Condition C3.

The theoretical analysis in Section~\ref{sec:theory} concerns the
fixed-topology coefficient problem in \eqref{eq:refit}, in which only the
constants associated with the frozen expression tree are re-estimated.
Algebraic simplification and the optional removal of numerically negligible
terms are post-processing operations and do not modify this optimization
problem; the latter is governed by the acceptance criteria in
Section~\ref{sec:stage-c}.

\begin{algorithm}[htbp]
\caption{Fixed-topology physics-only coefficient refinement}
\label{alg:refit}
\begin{algorithmic}[1]
\Require candidate expression $s_{\mathcal{T}}$, provisional coefficients
$\mathbf{a}^{\mathrm{search}}_{\mathcal{T}}$, problem data $\mathfrak{P}$, and
refinement points
\Ensure refined expression and convergence status
\State Freeze the topology $\mathcal{T}$ and construct the parameterized
expression $\hat{s}(\cdot;\mathbf{a})$ according to
Definition~\ref{def:parameterization}
\If{$\mathbf{a}$ is empty}
\State Set the convergence status to true; no coefficient optimization is
required
\State \Return the simplified expression and its convergence status
\EndIf
\State Construct the physics-only objective
$\mathcal{J}_{\mathrm{refit}}(\mathbf{a})$ in \eqref{eq:refit}
\State Generate a set of initial values from
$\mathbf{a}^{\mathrm{search}}_{\mathcal{T}}$ and its multiplicative
perturbations
\For{each initial value}
\State Compute a local least-squares solution and its convergence status
\EndFor
\If{no run produces a finite objective value}
\State \Return $s_{\mathcal{T}}$ with a non-convergent status
\EndIf
\State Retain the finite solution $\hat{\mathbf{a}}_{\mathcal{T}}$ with the
smallest final objective value and its associated convergence status
\State Substitute $\hat{\mathbf{a}}_{\mathcal{T}}$ into
$\hat{s}(\cdot;\mathbf{a})$ and apply algebraic simplification
\State \Return the resulting expression and its convergence status
\end{algorithmic}
\end{algorithm}

\subsection{Stage C: gated selection and verification}
\label{sec:stage-c}

Stage C applies checked post-refinement cleaning, selects the final expression
from the resulting candidate pool, and reports its verification quantities.
The cleaning step may reduce the symbolic form by removing numerically
negligible additive terms, but it does not refit the coefficients. After
this post-processing, selection acts only on the existing candidates and does
not further modify their expressions. Candidates may differ in topology,
refined coefficients, convergence status, teacher agreement, physical
residuals, and final complexity. These quantities play different roles, so
DeSyR does not collapse them into a single aggregate score. Instead, selection
is performed by a sequence of explicit gates.

The first gate is \emph{convergence eligibility}. Eligibility requires both the
expression and the quantities used for selection to be finite. If at least one
eligible candidate satisfies the convergence criterion in
Section~\ref{sec:stage-b}, the comparison is restricted to converged candidates.
Otherwise, finite non-convergent candidates are retained only as a diagnostic
fallback, and the selected expression retains a non-convergent status.

The second gate is \emph{teacher compatibility}. Teacher samples provide the
structural signal used in Stage A. A refined expression whose predictions
deviate substantially from the teacher is no longer supported by this signal
and is excluded from the shortlist. For an expression $s$, the relative error
with respect to the teacher is evaluated on the structure-search points:
\begin{equation}
\varepsilon_T(s)
=
\left[
\frac{
\sum_{i=1}^{N_S}
\left|
s(\mathbf{x}_i^S)-u_\theta(\mathbf{x}_i^S)
\right|^2
}{
\sum_{i=1}^{N_S}
\left|
u_\theta(\mathbf{x}_i^S)
\right|^2
}
\right]^{1/2}.
\label{eq:teacher-compat}
\end{equation}
Let $\varepsilon_{T,\min}$ denote the minimum teacher error among the
convergence-eligible candidates. The teacher-compatible shortlist consists of
the candidates satisfying
\begin{equation}
\varepsilon_T(s)
\leq
3\,\varepsilon_{T,\min}+10^{-8}.
\label{eq:teacher-compatibility-gate}
\end{equation}

The third gate is \emph{physics equivalence}. After fixed-topology refinement,
several eligible candidates may attain residuals at or near machine precision.
In this regime, small differences in residual values often reflect round-off
rather than a meaningful difference in physical consistency. To avoid selecting
a more complex expression solely because of numerical noise, DeSyR groups
physically indistinguishable candidates before applying the complexity rule.
To preserve the constraint-to-interior residual-amplitude scaling induced by
the Stage-B refinement weight, the physics-selection score is defined on the
refinement points as
\begin{equation}
\Phi_{\mathrm{phys}}(s)
=
\rho_r(s)
+
\sqrt{\lambda_c^{\mathrm{refit}}}\,\rho_c(s),
\label{eq:selection-score}
\end{equation}
where $\rho_r$ and $\rho_c$ are the root-mean-square interior and constraint
residuals on the refinement points. Let $\Phi_{\mathrm{phys},\min}$ denote the
minimum score on the teacher-compatible shortlist. The physics-equivalent class
consists of candidates satisfying
\begin{equation}
\Phi_{\mathrm{phys}}(s)
\leq
1.05\,\Phi_{\mathrm{phys},\min}+10^{-10}.
\label{eq:physics-equivalence-gate}
\end{equation}

Before the gates are applied, a cleaning proposal removes additive terms
whose numerical leading coefficient has magnitude below $10^{-8}$. The proposal
is accepted only if the cleaned expression is finite, has lower final
complexity, and satisfies
\begin{equation}
\begin{aligned}
\Phi_{\mathrm{phys}}(s_{\mathrm{clean}})
&\leq
1.05\,\Phi_{\mathrm{phys}}(s_{\mathrm{raw}})+10^{-10},
\\
\varepsilon_T(s_{\mathrm{clean}})
&\leq
3\,\varepsilon_T(s_{\mathrm{raw}})+10^{-8}.
\end{aligned}
\label{eq:cleaning-acceptance}
\end{equation}
Otherwise, the unpruned Stage-B expression is retained. This acceptance check
uses no reference-solution values and does not alter the Stage-B coefficient
objective.

The fourth gate is \emph{complexity preference}. Within the physics-equivalent
class, the candidate with the smallest final complexity is selected. If
multiple candidates have the same complexity, ties are resolved by teacher
error, physics-selection score, and search seed, in that order.

After selection, the verification quantities of the final expression are
reported for assessment and do not enter the preceding gates. The interior
residual is evaluated on domain points that are independent of the teacher
samples and are used in neither structure search nor coefficient refinement:
\begin{equation}
\begin{aligned}
R_{\mathrm{eq}}
&=
\left[
\frac{1}{N_r^v}
\sum_{i=1}^{N_r^v}
\left|
\mathcal{N}[u_{\mathrm{sym}}](\mathbf{x}_i^{r,v})
-
f(\mathbf{x}_i^{r,v})
\right|^2
\right]^{1/2},
\\
R_{\mathrm{con}}
&=
\left[
\sum_{\ell=1}^{L_c}
\frac{\omega_\ell^f}{N_{c,\ell}^f}
\sum_{j=1}^{N_{c,\ell}^f}
\left\|
\mathcal{B}_\ell[u_{\mathrm{sym}}](\mathbf{x}_{j,\ell}^{c,f})
-
g_\ell(\mathbf{x}_{j,\ell}^{c,f})
\right\|_2^2
\right]^{1/2}.
\end{aligned}
\label{eq:verification}
\end{equation}
Here, $\omega_\ell^f=N_{c,\ell}^f/N_c^f$, so that all constraint points have
equal weight in $R_{\mathrm{con}}$. The constraint residual is evaluated on the
prescribed constraint manifolds, which coincide with the constraint point sets
used during refinement. Thus, the interior residual provides the out-of-sample
domain verification, while the constraint residual verifies consistency with
the prescribed boundary or initial data. Unlike the physics-selection score in
\eqref{eq:selection-score}, $R_{\mathrm{con}}$ does not include the refinement
weight $\lambda_c^{\mathrm{refit}}$. The final complexity is evaluated after
the Stage-B algebraic simplification and any accepted Stage-C cleaning, using
the counting convention of Definition~\ref{def:complexity}.

When a reference solution is available, it may be used beforehand to construct
the prescribed forcing and constraint data for manufactured problems. Its
pointwise field values are otherwise excluded from the recovery procedure and
are reserved for post hoc error reporting. The relative $L_2$ error is computed
as
\begin{equation}
\varepsilon_{L_2}
=
\left[
\frac{
\sum_{i=1}^{N_r^v}
\left|
u_{\mathrm{pred}}(\mathbf{x}_i^{r,v})
-
u_{\mathrm{ref}}(\mathbf{x}_i^{r,v})
\right|^2
}{
\sum_{i=1}^{N_r^v}
\left|
u_{\mathrm{ref}}(\mathbf{x}_i^{r,v})
\right|^2
}
\right]^{1/2},
\label{eq:rel-l2}
\end{equation}
where $u_{\mathrm{pred}}=u_{\mathrm{sym}}$ for the selected symbolic expression,
and $u_{\mathrm{pred}}=u_\theta$ when the PINN teacher is evaluated as a
baseline. In the manufactured-solution benchmarks, the reference solution is
the unique classical solution $u^\star$ in Assumption~\ref{asm:wellposed}.

\begin{algorithm}[htbp]
\caption{The DeSyR framework, single-field case}
\label{alg:desyr}
\begin{algorithmic}[1]
\Require problem data $\mathfrak{P}$, operator set $\mathcal{O}$, complexity bound $C_{\max}$, and either a teacher $u_\theta$ or a teacher-training configuration
\Ensure final expression $u_{\mathrm{sym}}$ and verification quantities
\If{no teacher is provided}
\State Train a PINN and select the checkpoint by the physics-validation score in \eqref{eq:val-score}
\EndIf
\State Generate the structure-search samples $\mathcal{D}_S$
\State Run $K$ independent symbolic searches, retain up to five representative Pareto candidates by the Stage-A retention rule, and merge them into $\mathcal{P}$
\For{each candidate in $\mathcal{P}$}
\State Apply Algorithm~\ref{alg:refit} for fixed-topology physics refinement
\State Apply the checked post-refinement cleaning in
Section~\ref{sec:stage-c}
\State Record the refined expression, convergence status, residuals, and
failure reason if any
\EndFor
\State Apply the convergence-eligibility, teacher-compatibility, physics-equivalence, and complexity gates in order
\State Select the final expression $u_{\mathrm{sym}}$
\State Compute verification residuals and, when a reference solution is available, the relative $L_2$ error
\State \Return $u_{\mathrm{sym}}$ and its verification quantities
\end{algorithmic}
\end{algorithm}

\subsection{Extension to coupled multi-field systems}
\label{sec:coupled}

The scalar formulation extends to coupled systems without changing the
separation of information across the three stages: teacher data guide the
fieldwise topology searches, the coupled governing system determines the
coefficients, and selection acts on complete multi-field candidates. Let
$\mathbf{u}^\star=(u_1^\star,\ldots,u_m^\star)$ denote the unique classical
solution, with governing equations
$\boldsymbol{\mathcal{N}}[\mathbf{u}]=\mathbf{f}$ and prescribed constraints
$\boldsymbol{\mathcal{B}}[\mathbf{u}]=\mathbf{g}$.

Stage A searches for the topology of each component $u_f$ independently; no
coupled-equation residual enters these fieldwise symbolic-search objectives.
When recurring exponential structure is detected across fields, the resulting
candidate pools are augmented by expressions constructed from the shared
factor. Finite, non-duplicate candidates are filtered by fieldwise teacher
compatibility, and a balanced subset spanning teacher fit and expression
complexity is retained for each field. The Cartesian product of these subsets
defines the multi-field candidate groups. Because this product can grow
rapidly, a preliminary comparison based on joint physical consistency, mean
teacher error, and total complexity restricts joint refinement to a bounded
shortlist. At this preliminary stage, joint physical consistency is evaluated
using the provisional search-stage coefficients; no joint coefficient
refinement has yet been performed. The comparison ranks already generated
expressions and does not alter their topologies. The fixed shortlist and
screening settings are given in \ref{app:definitions}.

For coupled systems, Stage B is implemented at two resolutions. During
screening, each shortlisted group is refined on the reduced screening
configuration under a single coupled physics-only objective formed by
assembling the governing-equation and constraint residuals of all fields. Once
Stage C identifies the winning topology group, its coefficients are jointly
re-estimated on the complete refinement set using the same objective. The
coupling is therefore enforced directly during both coefficient-estimation
steps rather than approximated through separate fieldwise fits, and no
teacher-data term enters either objective. The coupled parameterization also
represents exponential rates detected in at least two fields by a common
parameter. A rate equal to an integer multiple from one to four of a shared
base rate is represented by the corresponding multiple of that parameter,
whereas the remaining eligible constants are field-specific. Exact recovery
thus additionally requires representability within this joint parameterization,
as covered by Assumption~\ref{asm4:representability}.

Stage C applies the four selection gates to jointly refined groups. Let
$\varepsilon_{T,f}$ denote the teacher error of field $f$, defined as in
\eqref{eq:teacher-compat}, and let $\rho_{r,k}$ and $\rho_{c,\ell}$ denote the
root-mean-square residuals of the $k$-th governing equation and the $\ell$-th
constraint, respectively. The group-level quantities used for teacher
compatibility, physics equivalence, and complexity preference are
\begin{equation}
\begin{aligned}
\varepsilon_T^{\mathrm{grp}}
&=
\frac{1}{m}\sum_{f=1}^{m}\varepsilon_{T,f},
\\
\Phi_{\mathrm{phys}}^{\mathrm{grp}}
&=
\left(\frac{1}{n_{\mathrm{eq}}}
\sum_{k=1}^{n_{\mathrm{eq}}}\rho_{r,k}^2\right)^{1/2}
+
\sqrt{\lambda_c^{\mathrm{refit}}}
\left(\frac{1}{n_{\mathrm{con}}}
\sum_{\ell=1}^{n_{\mathrm{con}}}\rho_{c,\ell}^2\right)^{1/2},
\\
\mathcal{C}_{\mathrm{grp}}
&=
\sum_{f=1}^{m} \mathcal{C}_{\mathrm{final}}(s_f).
\end{aligned}
\label{eq:coupled-selection-scores}
\end{equation}
Here, $n_{\mathrm{eq}}$ and $n_{\mathrm{con}}$ are the numbers of governing
equations and constraint components; the constraint term is omitted when no
constraint is prescribed. The arithmetic mean in
\eqref{eq:coupled-selection-scores} assigns equal importance to the fieldwise
teacher errors. Convergence eligibility is determined by the joint screening
refinement status of the entire group. The teacher-compatibility and
physics-equivalence thresholds use the same relative and absolute tolerances as
in the scalar formulation. Within the resulting physics-equivalent class, the
topology group with the smallest $\mathcal{C}_{\mathrm{grp}}$ is selected, with
ties resolved by the group teacher error and group physics score, in that
order.

The gates therefore select a multi-field topology group on the basis of the
screening-refined candidates rather than a final coefficient vector. After
selection, the winning topologies remain frozen while their coefficients are
jointly re-estimated on the complete refinement set. This full-set refinement
does not reopen the cross-topology competition or reapply the selection gates.
Any accepted cleaning is then applied, and the final convergence status,
verification quantities, and group complexity are reported for the resulting
expressions.

Verification is likewise defined for the coupled solution as a whole. When
reference fields are available, their values and the corresponding predictions
on the verification points are concatenated into vectors $U_{\mathrm{ref}}$
and $U_{\mathrm{pred}}$, yielding
\begin{equation}
\varepsilon_{L_2}
=
\frac{\|U_{\mathrm{pred}}-U_{\mathrm{ref}}\|_2}
{\|U_{\mathrm{ref}}\|_2}.
\label{eq:coupled-rel-l2}
\end{equation}
Because this aggregate norm weights field contributions according to their
reference magnitudes, fieldwise relative errors are reported alongside it. The
coupled equation and constraint residuals are obtained by taking the root mean
square of their respective componentwise residuals.

\section{Theoretical analysis}
\label{sec:theory}

Fixed-topology refinement reduces symbolic recovery to a continuous
coefficient-estimation problem: once the expression tree is frozen, only its
numerical constants remain to be determined from the governing equation and
prescribed constraints. Under well-posedness, representability, zero-residual
attainment, and discrete determinacy, the refined expression coincides with the
unique classical solution $u^\star$. This zero-residual solution is independent
of the teacher, although the numerical optimization may still depend on the
teacher-generated initialization. This section establishes this conditional
exact-recovery result and quantifies the coefficient errors inherited under
teacher-only and mixed data--physics objectives.

\subsection{Problem setting and assumptions}
\label{sec:setup}

For the fixed topology and coefficient parameterization defined in
Section~\ref{sec:stage-b}, the physics-only objective
$\mathcal{J}_{\mathrm{refit}}$ gives rise to a continuous coefficient problem.
The analysis is stated first for a scalar solution field governed by a scalar
equation. Multi-field problems take the same form once the per-equation and
per-constraint residuals are assembled into a single block residual vector
$\mathbf{F}$. Topology discovery remains subject to the operational conditions
C1 and C2 and lies outside the present analysis, while guarantees for nonlinear
coefficient optimization are local.

Let $u$ be the unknown solution field of problem $\mathfrak{P}$, with the
notation $\Omega$, $\Gamma_c$, $\mathcal{N}$, $\mathcal{B}$, $f$, and $g$ as in
Section~\ref{sec:problem}. For a given topology $\mathcal{T}$, the
parameterized expression is written as $s(x;a)$ with $a\in\mathbb{R}^p$. For
notational simplicity, boldface is omitted for coefficient vectors throughout
this section.

\begin{assumption}[Well-posedness]
\label{asm4:wellposed}
Problem $\mathfrak{P}$ has a unique classical solution $u^\star$.
\end{assumption}

\begin{assumption}[Representability]
\label{asm4:representability}
There exists $a^\star\in\mathbb{R}^p$ such that
$s(\cdot;a^\star)=u^\star$.
\end{assumption}

Let $X_S=\{x_i^S\}_{i=1}^{N_S}$ be the structure-search point set, let
$y_\theta=(u_\theta(x_i^S))_{i=1}^{N_S}$ denote the teacher values on this set,
and define the pointwise teacher error by
\begin{equation}
\varepsilon_i=u_\theta(x_i^S)-u^\star(x_i^S).
\label{eq:teacher-pointwise-error}
\end{equation}

The refinement points and the physics-only objective $\mathcal{J}_{\mathrm{refit}}$ are
those in \eqref{eq:refit}. Recall that this objective is an unnormalized
pointwise sum of squared residuals. Let $\mathbf{F}(a)\in\mathbb{R}^M$ denote
the corresponding weighted residual vector. Its interior components are
\begin{equation}
\bigl(\mathcal{N}[s(\cdot;a)]-f\bigr)(x_i^{r,f}),
\label{eq:interior-residual-components}
\end{equation}
and its constraint components are the entries of
\begin{equation}
\sqrt{\lambda_c^{\mathrm{refit}}}
\bigl(
\mathcal{B}_\ell[s(\cdot;a)]-g_\ell
\bigr)(x_{j,\ell}^{c,f}).
\label{eq:constraint-residual-components}
\end{equation}
The total dimension is
\begin{equation}
M=
N_r^f
+
\sum_{\ell=1}^{L_c}N_{c,\ell}^f d_\ell,
\label{eq:residual-vector-dimension}
\end{equation}
where $d_\ell$ is the output dimension of $\mathcal{B}_\ell$. Hence
\begin{equation}
\begin{aligned}
\mathcal{J}_{\mathrm{refit}}(a)&=\|\mathbf{F}(a)\|_2^2,\\
\mathcal{J}_{\mathrm{refit}}(a)=0
&\Longleftrightarrow
\mathbf{F}(a)=0.
\end{aligned}
\label{eq:Jrefit}
\end{equation}
Vector norms are Euclidean and matrix norms are spectral; $\sigma_{\min}$
denotes the smallest singular value.

\begin{assumption}[Zero-residual attainment]
\label{asm4:zero-residual}
The refinement procedure attains a coefficient vector $\hat a$ satisfying
$\mathcal{J}_{\mathrm{refit}}(\hat a)=0$, equivalently
$\mathbf{F}(\hat a)=0$.
\end{assumption}

\begin{assumption}[Determinacy: injectivity]
\label{asm4:determinacy}
The collocation-residual map $\mathbf{F}$ is injective on an open neighborhood
$U$ of $a^\star$ that contains $\hat a$.
\end{assumption}

For linear parameterizations,
\begin{equation}
s(x;a)=\sum_{j=1}^{p}a_j\varphi_j(x),
\label{eq:linear-parameterization}
\end{equation}
abbreviated as (L), define the design matrix
$\Phi=(\varphi_j(x_i^S))_{i,j}\in\mathbb{R}^{N_S\times p}$. If $\Phi$ has full
column rank, this condition is abbreviated as (L-rank). If $\mathcal{N}$ and
each $\mathcal{B}_\ell$ are linear in $u$, abbreviated as (L-op), then
\begin{equation}
\mathbf{F}(a)=Aa-b,
\qquad
\mathcal{J}_{\mathrm{refit}}(a)=\|Aa-b\|_2^2,
\label{eq:linear-residual-map}
\end{equation}
where $A$ and $b$ are determined by the discrete operators, refinement weights,
basis functions, and prescribed data $f$ and $g$. In particular, they are
independent of the teacher. We write $A^\top A\succ0$ as (PD).

Table~\ref{tab:assumptions} relates the theoretical assumptions and operational
conditions used below to their counterparts in the DeSyR framework.

\begin{table}[htbp]
\centering
\small
\setlength{\tabcolsep}{6pt}
\renewcommand{\arraystretch}{1.12}

\caption{Relationship between theoretical assumptions and framework conditions.}
\label{tab:assumptions}

\begin{adjustbox}{max width=\textwidth}
\begin{tabular}{@{}l p{11cm}@{}}
\toprule
Assumption or condition
& Framework counterpart \\
\midrule

Assumption~\ref{asm4:wellposed} (Well-posedness)
& Assumed property of the prescribed problem $\mathfrak{P}$ in
Section~\ref{sec:problem} \\

Assumption~\ref{asm4:representability} (Representability)
& C2 specialized to the fixed topology under analysis, together with
representability in its Stage-B parameterization \\

Assumption~\ref{asm4:zero-residual} (Zero-residual attainment)
& Strengthening of C3 to exact zero residual \\

Assumption~\ref{asm4:determinacy} (Determinacy)
& Discrete determinacy condition, outside C1--C3 \\

C1 (Teacher guidance adequacy)
& Upstream search condition, with no fixed-topology counterpart \\

\bottomrule
\end{tabular}
\end{adjustbox}

\end{table}

Condition C1 acts in the search stage: it affects which topologies enter the
candidate pool. It therefore lies upstream of the fixed-topology coefficient
problem and has no corresponding assumption in the analysis below.

\subsection{Error inheritance from the teacher}\label{sec:inheritance}

For a fixed topology, the prediction error can be decomposed into a
representation component, which coefficient optimization cannot remove, and a
coefficient component, which measures departure from the best in-topology
coefficients. In the representable linear setting, teacher fitting controls the
second component through the following inheritance bound.

\begin{lemma}[Coefficient-inheritance bound]\label{lem:inheritance}
Assume (L), (L-rank), and Assumption~\ref{asm4:representability}. Let $\hat a_{\mathrm{LS}}=\arg\min_{a\in\mathbb{R}^p}\|\Phi a-y_\theta\|_2^2$ be the least-squares fit to the teacher samples. Then
\begin{equation}
\hat a_{\mathrm{LS}}-a^\star=(\Phi^{\top}\Phi)^{-1}\Phi^{\top}\varepsilon,
\label{eq:ls-bias}
\end{equation}
and
\begin{equation}
\|\hat a_{\mathrm{LS}}-a^\star\|_2\le\|(\Phi^{\top}\Phi)^{-1}\Phi^{\top}\|_2\,\|\varepsilon\|_2=\frac{\|\varepsilon\|_2}{\sigma_{\min}(\Phi)}.
\label{eq:ls-bound}
\end{equation}
\end{lemma}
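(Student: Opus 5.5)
The plan is to write the teacher data in terms of the exact coefficients and the teacher error, solve the normal equations, and then bound the resulting linear map.

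First, I express $y_\theta$ through $a^\star$. By Assumption~\ref{asm4:representability} and the linear form (L), $u^\star(x)=\sum_j a_j^\star\varphi_j(x)$ for all $x$. Evaluating at the structure-search points gives $(u^\star(x_i^S))_i=\Phi a^\star$. The definition \eqref{eq:teacher-pointwise-error} then yields $y_\theta=\Phi a^\star+\varepsilon$, where $\varepsilon=(\varepsilon_i)_{i=1}^{N_S}$.

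Second, I identify $\hat a_{\mathrm{LS}}$. Under (L-rank), $\Phi^\top\Phi$ is symmetric positive definite. The objective $\|\Phi a-y_\theta\|_2^2$ is therefore strictly convex, and its unique minimizer solves the normal equations, so
\begin{equation*}
\hat a_{\mathrm{LS}}=(\Phi^\top\Phi)^{-1}\Phi^\top y_\theta .
\end{equation*}
Substituting $y_\theta=\Phi a^\star+\varepsilon$ and using $(\Phi^\top\Phi)^{-1}\Phi^\top\Phi=I_p$ gives \eqref{eq:ls-bias} directly. The first inequality in \eqref{eq:ls-bound} is then just submultiplicativity of the spectral norm applied to this identity.

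The only step needing care is the norm identity $\|(\Phi^\top\Phi)^{-1}\Phi^\top\|_2=1/\sigma_{\min}(\Phi)$. I will use a thin SVD, $\Phi=U\Sigma V^\top$, where $U\in\mathbb{R}^{N_S\times p}$ has orthonormal columns, $V$ is orthogonal, and $\Sigma=\mathrm{diag}(\sigma_1,\dots,\sigma_p)$. By (L-rank), every $\sigma_j>0$. A direct computation gives
\begin{equation*}
(\Phi^\top\Phi)^{-1}\Phi^\top=V\Sigma^{-2}V^\top V\Sigma U^\top=V\Sigma^{-1}U^\top .
\end{equation*}
Its spectral norm equals the largest diagonal entry of $\Sigma^{-1}$, which is $1/\sigma_{\min}(\Phi)$. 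Combining this with the previous step completes \eqref{eq:ls-bound}.

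I expect no real obstacle beyond stating clearly that representability must hold pointwise on $X_S$, which is what allows the decomposition of $y_\theta$ in the first step.
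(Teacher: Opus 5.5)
Your proposal is correct and follows essentially the same route as the paper: (L-rank) gives $\Phi^{\top}\Phi\succ0$ and the normal equations, representability gives $y_\theta=\Phi a^\star+\varepsilon$, and the thin SVD identity $(\Phi^{\top}\Phi)^{-1}\Phi^{\top}=V\Sigma^{-1}U^{\top}$ yields the norm $1/\sigma_{\min}(\Phi)$. Your explicit expansion $V\Sigma^{-2}V^{\top}V\Sigma U^{\top}$ simply fills in the step the paper states without computation.
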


\textit{(Proof in \ref{app:proofs}.)}

Lemma~\ref{lem:inheritance} shows that the fitted coefficients equal
$a^\star$ plus the coefficient vector associated with the least-squares
projection of the teacher error onto the sampled model space. Unless
$\Phi^{\top}\varepsilon=0$, the fitted coefficients differ from $a^\star$,
and the inherited error is amplified by $1/\sigma_{\min}(\Phi)$. The error is
not guaranteed to vanish as $N_S$ grows: from the finite-sample identity
$\hat a_{N_S}-a^\star=(\Phi^{\top}\Phi/N_S)^{-1}
(\Phi^{\top}\varepsilon/N_S)$, if $\Phi^{\top}\Phi/N_S\to G\succ 0$ and
$\Phi^{\top}\varepsilon/N_S\to v$, then
$\hat a_{N_S}\to a^\star+G^{-1}v$. The bias persists when $v\neq 0$ and
disappears when $\Phi^{\top}\varepsilon/N_S\to 0$, i.e.\ when the teacher
error becomes asymptotically orthogonal to the sampled model space. This
mechanism explains why pure teacher fitting can remain at the teacher's accuracy
scale, as observed in the teacher-only results of
Section~\ref{sec:refinement-objective}, and directly motivates physics-only
refinement. The bound cannot be transferred to nonlinear parameterizations by
replacing basis functions with parameter gradients; the local counterpart is
Proposition~\ref{prop:local}.

\begin{lemma}[Error decomposition]\label{lem:decomposition}
Assume (L). Let $X_v=\{x_i^v\}_{i=1}^{N_v}$ be an evaluation point set, $\tilde\Phi\in\mathbb{R}^{N_v\times p}$ with $\tilde\Phi_{ij}=\varphi_j(x_i^v)$, and $u^\star_v$ the values of $u^\star$ on $X_v$. The following items hold under their individually stated assumptions.
\begin{enumerate}
\item \emph{(Under Assumption~\ref{asm4:representability}.)} Assume Assumption~\ref{asm4:representability}, and write $\varphi(x)=(\varphi_1(x),\dots,\varphi_p(x))^{\top}$. For any $\hat a\in\mathbb{R}^p$,
\begin{equation}
s(x;\hat a)-u^\star(x)=\varphi(x)^{\top}(\hat a-a^\star).
\label{eq:pointwise}
\end{equation}
\item \emph{(Without Assumption~\ref{asm4:representability}.)} Assume
$\varphi_j\in L^2(\Omega)$ for $j=1,\ldots,p$, and let
$a^\dagger\in\arg\min_{a\in\mathbb{R}^p}
\|s(\cdot;a)-u^\star\|_{L^2(\Omega)}$ be a best in-topology approximation
coefficient. Then
\begin{equation}
s(\cdot;\hat a)-u^\star=\bigl[s(\cdot;\hat a)-s(\cdot;a^\dagger)\bigr]+\bigl[s(\cdot;a^\dagger)-u^\star\bigr],
\label{eq:general-decomp}
\end{equation}
where the first term is the coefficient error and the second is the representation error; the latter is independent of the coefficient solver.
\item \emph{(Norm bound.)} Let $a^\dagger$ be a best in-topology coefficient
as defined in item 2. For any $\hat a\in\mathbb{R}^p$,
\begin{equation}
\|\tilde\Phi\hat a-u^\star_v\|_2\le\underbrace{\|\tilde\Phi a^\dagger-u^\star_v\|_2}_{\text{representation error}}+\underbrace{\|\tilde\Phi\|_2\|\hat a-a^\dagger\|_2}_{\text{coefficient error}}.
\label{eq:norm-decomp}
\end{equation}
Under (L-rank) and Assumption~\ref{asm4:representability}, take $a^\dagger=a^\star$ and $\hat a=\hat a_{\mathrm{LS}}$: the representation term vanishes, and Lemma~\ref{lem:inheritance} bounds the coefficient term by $\|\tilde\Phi\|_2\|\varepsilon\|_2/\sigma_{\min}(\Phi)$.
\end{enumerate}
\end{lemma}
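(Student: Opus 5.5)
The three items are handled separately; each reduces to linearity of the parameterization (L) plus an elementary norm inequality.

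For item~1, I would use (L) and Assumption~\ref{asm4:representability}. These give $u^\star(x)=s(x;a^\star)=\varphi(x)^{\top}a^\star$ pointwise, since $s(\cdot;a^\star)=u^\star$ is functional equality. For any $\hat a$ we also have $s(x;\hat a)=\varphi(x)^{\top}\hat a$. Subtracting the two expressions and using bilinearity of the inner product yields \eqref{eq:pointwise} directly.

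For item~2, the first task is to justify that $a^\dagger$ exists. The set $\{s(\cdot;a):a\in\mathbb{R}^p\}=\mathrm{span}\{\varphi_j\}$ is a finite-dimensional, hence closed, subspace of $L^2(\Omega)$. The minimizer is therefore the coefficient of the orthogonal projection of $u^\star$ onto this subspace, assuming $u^\star\in L^2(\Omega)$, which follows from classical regularity on a domain with $|\Omega|<\infty$ as in Definition~\ref{def:measures}. The projection is unique, although the coefficient may not be if the $\varphi_j$ are linearly dependent; any choice works. With $a^\dagger$ in hand, \eqref{eq:general-decomp} is simply the identity obtained by adding and subtracting $s(\cdot;a^\dagger)$. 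The representation term is independent of the solver because it involves only $u^\star$, the topology and $a^\dagger$, not $\hat a$. I note that if the minimizer is not unique, the term $s(\cdot;a^\dagger)$ is still uniquely determined as the projection.

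For item~3, I would write $\tilde\Phi\hat a-u^\star_v=(\tilde\Phi a^\dagger-u^\star_v)+\tilde\Phi(\hat a-a^\dagger)$. Applying the triangle inequality and then the spectral-norm bound $\|\tilde\Phi(\hat a-a^\dagger)\|_2\le\|\tilde\Phi\|_2\|\hat a-a^\dagger\|_2$ gives \eqref{eq:norm-decomp}.

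For the specialization, I would argue as follows. Under Assumption~\ref{asm4:representability}, $a^\star$ attains zero $L^2$ distance, so it is a valid choice of $a^\dagger$. By item~1, $\tilde\Phi a^\star=u^\star_v$ exactly on $X_v$, so the representation term vanishes. Invoking Lemma~\ref{lem:inheritance}, which uses (L-rank), bounds $\|\hat a_{\mathrm{LS}}-a^\star\|_2\le\|\varepsilon\|_2/\sigma_{\min}(\Phi)$. Multiplying by $\|\tilde\Phi\|_2$ gives the stated bound.

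The only delicate point is the existence and well-definedness of $a^\dagger$ in item~2, which the projection argument handles. Everything else is a direct algebraic identity or the triangle inequality.
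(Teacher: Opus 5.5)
Your proposal is correct and follows the paper's proof step for step. Items 1--2 are add-and-subtract identities (around $a^\star$ and then around $a^\dagger$) using linearity, item 3 is the triangle inequality together with $\|\tilde\Phi v\|_2\le\|\tilde\Phi\|_2\|v\|_2$, and the specialization invokes Lemma~\ref{lem:inheritance}. Your projection argument for the existence of $a^\dagger$, and your check that $a^\star$ is an admissible $a^\dagger$ under representability, are extra care the paper omits; the statement simply presupposes $a^\dagger$, and \eqref{eq:general-decomp} holds for any $a^\dagger$. One small correction: $u^\star\in L^2(\Omega)$ needs boundedness, for example continuity up to the boundary of a bounded domain, not merely $|\Omega|<\infty$.
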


\textit{(Proof in \ref{app:proofs}.)}

When Assumption~\ref{asm4:representability} holds, the total error is
determined entirely by the coefficient gap $\hat a-a^\star$, which the three
routes characterize separately: teacher fitting
(Lemma~\ref{lem:inheritance}), mixed objectives
(Corollary~\ref{cor:mixed-bias}), and physics-only refinement
(Proposition~\ref{prop:exact-recovery}, where the gap is zero). When
Assumption~\ref{asm4:representability} fails, the topology cannot represent the
unique classical solution, and coefficient refinement can reduce only the
coefficient component, not the representation component. The same-candidate
comparison in Section~\ref{sec:refinement-contribution} applies this
decomposition directly: both expressions share the same topology and therefore
the same representation-error term. The only changing component is the
coefficient error, so the difference between the two recovered expressions is
attributable to coefficient refinement. Results at machine precision are
consistent with Assumption~\ref{asm4:representability} for that topology, but
do not establish representability analytically.

\subsection{Conditional exact recovery}
\label{sec:recovery}

\begin{proposition}[Conditional exact recovery]
\label{prop:exact-recovery}
Fix the topology, coefficient parameterization, and refinement point set. Suppose
Assumptions~\ref{asm4:wellposed}--\ref{asm4:determinacy} hold. Then:
\begin{enumerate}
\item[(i)] $\hat a=a^\star$, and consequently
$s(\cdot;\hat a)=u^\star(\cdot)$.
\item[(ii)] Conditional on the fixed topology, coefficient parameterization,
and refinement point set, the objective $\mathcal{J}_{\mathrm{refit}}$ and its
zero-residual set contain no teacher quantity. Hence the target zero-residual
solution is independent of the teacher, although reaching it numerically may
still depend on the teacher-generated initialization.
\item[(iii)] Under the same fixed-topology setting, teacher information enters
the Stage-B optimization only through the initialization
$a_0=a_0(\theta)$. It may affect the basin of attraction and the iteration
cost, but not the zero-residual limit $\hat a=a^\star$ when that limit is
attained.
\end{enumerate}
\end{proposition}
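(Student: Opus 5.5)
The argument for (i) is a two-zero injectivity argument. By Assumption~\ref{asm4:representability}, $s(\cdot;a^\star)=u^\star$. By Assumption~\ref{asm4:wellposed}, $u^\star$ is a classical solution, so $\mathcal{N}[u^\star]=f$ holds pointwise in $\Omega$ and $\mathcal{B}_\ell[u^\star]=g_\ell$ holds pointwise on each $\Gamma_{c,\ell}$. The refinement points lie in these sets, so every interior component \eqref{eq:interior-residual-components} and every constraint component \eqref{eq:constraint-residual-components} vanishes at $a=a^\star$. Hence $\mathbf{F}(a^\star)=0$.

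Assumption~\ref{asm4:zero-residual} gives $\mathbf{F}(\hat a)=0$ as well. Assumption~\ref{asm4:determinacy} says $\mathbf{F}$ is injective on a neighborhood $U$ containing both $a^\star$ and $\hat a$. Two points of $U$ with the same image must coincide, so $\hat a=a^\star$. Substituting this into the parameterized expression gives $s(\cdot;\hat a)=s(\cdot;a^\star)=u^\star$, which proves (i).

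For (ii), I would inspect the definition \eqref{eq:refit}, or equivalently \eqref{eq:Jrefit}. With the topology, parameterization and refinement points fixed, $\mathbf{F}$ is assembled only from $\mathcal{N}$, $\mathcal{B}_\ell$, $f$, $g_\ell$, the weight $\lambda_c^{\mathrm{refit}}$ and the refinement points. None of $u_\theta$, $y_\theta$ or $\varepsilon$ appears. The zero set $\mathbf{F}^{-1}(0)$ is therefore a teacher-independent object. By (i), its intersection with $U$ is the single point $a^\star$, which is also defined without reference to the teacher.

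For (iii), I would note that in Algorithm~\ref{alg:refit} the teacher enters only through the starting points $a^{\mathrm{search}}_{\mathcal{T}}$ and their perturbations, $a_0=a_0(\theta)$. The solver's trajectory, and hence the basin reached and the iteration count, may depend on $a_0$. However, whenever the retained output satisfies Assumptions~\ref{asm4:zero-residual} and \ref{asm4:determinacy}, part (i) applies and forces the output to equal $a^\star$, whatever $a_0$ was.

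The main obstacle is interpretive rather than technical. The argument depends on two points. First, classical solvability must make the residual vanish pointwise at every collocation point, not merely almost everywhere, so I would invoke the \emph{classical} regularity in Assumption~\ref{asm4:wellposed} explicitly. Second, the requirement that $\hat a\in U$ is carried by Assumption~\ref{asm4:determinacy}, and I would stress that (iii) gives no statement about whether a given initialization reaches $U$.
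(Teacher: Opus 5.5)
Your proposal is correct and follows essentially the same route as the paper's proof. You derive $\mathbf{F}(a^\star)=0$ from classical solvability plus representability, and $\mathbf{F}(\hat a)=0$ from zero-residual attainment; injectivity on $U$ then gives $\hat a=a^\star$, and (ii)--(iii) follow by inspecting which quantities enter $\mathcal{J}_{\mathrm{refit}}$ and noting that the teacher enters the Stage-B procedure only through the initialization $a_0(\theta)$.
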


\textit{(Proof in \ref{app:proofs}.)}

Beyond well-posedness of the prescribed problem,
Proposition~\ref{prop:exact-recovery} separates exact recovery on a fixed
topology into three additional requirements: representability
(Assumption~\ref{asm4:representability}), discrete determinacy
(Assumption~\ref{asm4:determinacy}), and attainment of zero residual
(Assumption~\ref{asm4:zero-residual}). The teacher does not determine the
zero-residual target once the topology, coefficient parameterization, and
refinement point set are fixed, but it can affect whether the numerical
procedure reaches that target through the initialization. The proposition
makes no claim about topology search, which remains subject to C1 and C2, and
it does not state that every initialization attains zero residual. For example,
an initialization that converges to a spurious stationary point with nonzero
residual violates Assumption~\ref{asm4:zero-residual}; see
Proposition~\ref{prop:local}.

Passing from zero residuals at finitely many refinement points to equality of
coefficients requires a determinacy condition. In general, two distinct
parameterized expressions can satisfy the same finite set of residual equations, so
$\mathbf{F}(\hat a)=\mathbf{F}(a^\star)=0$ alone does not imply
$\hat a=a^\star$. Injectivity in Assumption~\ref{asm4:determinacy} is therefore
used as a convenient sufficient condition, not as a necessary one. Its role can
be interpreted as follows.

\begin{enumerate}
\item \emph{Linear parameterization and linear operators.} In this case,
$\mathbf{F}(a)=Aa-b$ is affine, so injectivity is global:
\begin{equation}
\text{Assumption~\ref{asm4:determinacy}}
\Longleftrightarrow
A \text{ has full column rank}
\Longleftrightarrow
A^{\top}A\succ0,
\label{eq:linear-determinacy-equivalence}
\end{equation}
where $A$ is the residual matrix defined in
\eqref{eq:linear-residual-map}.

\item \emph{Polynomial parameterizations.} For a full univariate polynomial
parameterization of degree $d$, the number of coefficients is $p=d+1$, so at
least $M\ge d+1$ residual rows are needed for full column rank. More generally,
a polynomial topology with $p$ free coefficients requires at least $M\ge p$
residual rows. These counts are only necessary: repeated points or operators
that reduce the effective polynomial space can still make the residual matrix
$A$ rank deficient.

\item \emph{Nonlinear analytic families.} If $D\mathbf{F}(a^\star)$ has full
column rank, then one can select $p$ residual components whose Jacobian is
nonsingular. The inverse function theorem then gives local injectivity of
$\mathbf{F}$ near $a^\star$. This does not exclude another zero-residual
coefficient vector outside that neighborhood.

\item \emph{Verifiability.} In the linear case, global injectivity can be
checked directly through the rank of $A$, equivalently through
$A^{\top}A\succ0$. In the nonlinear case, the numerical Jacobian
$D\mathbf{F}(\hat a)$ provides only local identifiability evidence through its
rank and conditioning; it cannot certify global injectivity.
\end{enumerate}

\subsection{Finite-weight bias in mixed objectives}
\label{sec:mixed}

\begin{corollary}[Teacher bias under a finite physics weight]
\label{cor:mixed-bias}
Assume (L), (L-rank), (L-op), (PD), and
Assumptions~\ref{asm4:wellposed}--\ref{asm4:representability}. Define the mixed
data--physics objective
\begin{equation}
J_{\mathrm{mix}}^{\beta}(a)
=
\|\Phi a-y_\theta\|_2^2
+
\beta\,\|Aa-b\|_2^2,
\qquad
\beta\in(0,\infty).
\label{eq:mixed}
\end{equation}
The teacher term is written in unnormalized form. It equals
$N_S\mathcal{E}_S$, where $\mathcal{E}_S$ is the teacher-fit loss defined in
\eqref{eq:teacher-fit-loss}; this positive scaling does not change the
minimizer provided that the physics weight is rescaled accordingly. For any
finite $\beta>0$, the following statements hold.

\begin{enumerate}
\item[(a)] \emph{Uniqueness.} The objective $J_{\mathrm{mix}}^{\beta}$ is
strictly convex and has the unique minimizer
\begin{equation}
\hat a(\beta)
=
(\Phi^{\top}\Phi+\beta A^{\top}A)^{-1}
(\Phi^{\top}y_\theta+\beta A^{\top}b).
\label{eq:mixed-min}
\end{equation}

\item[(b)] \emph{Teacher-error contribution.} By (L-op) and
Assumptions~\ref{asm4:wellposed}--\ref{asm4:representability}, $b=Aa^\star$.
Since $y_\theta=\Phi a^\star+\varepsilon$,
\begin{equation}
\hat a(\beta)-a^\star
=
(\Phi^{\top}\Phi+\beta A^{\top}A)^{-1}
\Phi^{\top}\varepsilon .
\label{eq:mixed-bias}
\end{equation}
Thus the coefficient bias comes entirely from the teacher error
$\varepsilon$. In contrast, the physics-only minimizer is
\begin{equation}
\hat a_{\mathrm{pure}}
=
(A^{\top}A)^{-1}A^{\top}b
=
a^\star .
\label{eq:physics-only-minimizer}
\end{equation}

\item[(c)] \emph{Large-weight asymptotics.} As $\beta\to\infty$,
\begin{equation}
\hat a(\beta)-a^\star
=
\beta^{-1}
(A^{\top}A)^{-1}\Phi^{\top}\varepsilon
+
O(\beta^{-2})
=
O(\beta^{-1}).
\label{eq:mixed-asym}
\end{equation}
In particular, $\hat a(\beta)\to a^\star$.

\item[(d)] \emph{Teacher-fit limit.} As $\beta\to0^{+}$,
\begin{equation}
\hat a(\beta)
\to
(\Phi^{\top}\Phi)^{-1}\Phi^{\top}y_\theta
=
\hat a_{\mathrm{LS}},
\label{eq:teacher-fit-limit}
\end{equation}
which is the teacher fit in Lemma~\ref{lem:inheritance}.

\item[(e)] \emph{Nonzero finite-weight bias.} For any finite $\beta>0$,
\begin{equation}
\hat a(\beta)-a^\star\neq 0
\Longleftrightarrow
\Phi^{\top}\varepsilon\neq 0.
\label{eq:nonzero-bias}
\end{equation}
Therefore, if $\Phi^{\top}\varepsilon\neq0$, every finite-weight mixed
objective places its minimizer away from $a^\star$. If
$\Phi^{\top}\varepsilon=0$, including the exact-teacher case
$\varepsilon=0$, then $\hat a(\beta)=a^\star$ for all
$\beta\in(0,\infty)$.
\end{enumerate}
\end{corollary}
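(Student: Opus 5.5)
The argument is finite-dimensional linear algebra built around the matrix $H_\beta=\Phi^{\top}\Phi+\beta A^{\top}A$. For part (a), I would note that $J_{\mathrm{mix}}^{\beta}$ is a quadratic in $a$ with Hessian $2H_\beta$. Under (L-rank) we have $\Phi^{\top}\Phi\succ0$, and under (PD) we have $A^{\top}A\succ0$. So $H_\beta\succ0$ for every $\beta>0$, and either assumption alone already suffices. Strict convexity follows, so the minimizer is unique. Setting the gradient $2(\Phi^{\top}\Phi a-\Phi^{\top}y_\theta)+2\beta(A^{\top}Aa-A^{\top}b)$ to zero then gives \eqref{eq:mixed-min}.

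For part (b), I would first justify $b=Aa^\star$. Under (L-op) the residual map is $\mathbf{F}(a)=Aa-b$. By Assumption~\ref{asm4:representability}, $s(\cdot;a^\star)=u^\star$, and $u^\star$ is the classical solution from Assumption~\ref{asm4:wellposed}. Hence every interior and constraint residual of $u^\star$ vanishes pointwise, so $\mathbf{F}(a^\star)=0$. Next, by the definition \eqref{eq:teacher-pointwise-error} together with representability, $y_\theta=\Phi a^\star+\varepsilon$. Substituting both identities into \eqref{eq:mixed-min} gives $\Phi^{\top}y_\theta+\beta A^{\top}b=H_\beta a^\star+\Phi^{\top}\varepsilon$, and subtracting $a^\star$ yields \eqref{eq:mixed-bias}. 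The physics-only minimizer is $(A^{\top}A)^{-1}A^{\top}Aa^\star=a^\star$.

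For part (c), I would write $H_\beta=\beta A^{\top}A\,(I+\beta^{-1}(A^{\top}A)^{-1}\Phi^{\top}\Phi)$. For $\beta$ large enough, $\beta^{-1}\|(A^{\top}A)^{-1}\Phi^{\top}\Phi\|_2<1$, so a Neumann series is valid:
\[
H_\beta^{-1}=\beta^{-1}(A^{\top}A)^{-1}-\beta^{-2}(A^{\top}A)^{-1}\Phi^{\top}\Phi(A^{\top}A)^{-1}+O(\beta^{-3}).
\]
Applying this to $\Phi^{\top}\varepsilon$ gives \eqref{eq:mixed-asym}, and hence $\hat a(\beta)\to a^\star$. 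This is the step needing the most care: the bound on the remainder must be uniform in $\beta$, which the geometric-series tail provides explicitly.

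For part (d), $H_\beta\to\Phi^{\top}\Phi$ as $\beta\to0^{+}$, and this limit is invertible by (L-rank). Matrix inversion is continuous on invertible matrices, so \eqref{eq:mixed-min} converges to $(\Phi^{\top}\Phi)^{-1}\Phi^{\top}y_\theta=\hat a_{\mathrm{LS}}$, the fit from Lemma~\ref{lem:inheritance}.

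For part (e), I would use that $H_\beta$ is nonsingular for every finite $\beta>0$. Then \eqref{eq:mixed-bias} gives $\hat a(\beta)-a^\star=0$ if and only if $\Phi^{\top}\varepsilon=0$. The exact-teacher case $\varepsilon=0$ is the special case where this holds trivially.
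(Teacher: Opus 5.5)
Your proposal is correct and matches the paper's proof step for step. You use the Hessian $2(\Phi^{\top}\Phi+\beta A^{\top}A)$ for (a), and the substitutions $b=Aa^\star$ and $y_\theta=\Phi a^\star+\varepsilon$ for (b); your remark that either (L-rank) or (PD) alone gives strict convexity in (a) is a valid minor sharpening, while the factorization with a Neumann expansion for (c), continuity of inversion for (d), and nonsingularity of $H_\beta$ for (e) are exactly the paper's arguments.
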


\textit{(Proof in \ref{app:proofs}.)}

Corollary~\ref{cor:mixed-bias} places teacher fitting and mixed data--physics
objectives within a single fixed-topology formula. The limit $\beta\to0^{+}$
recovers the teacher fit of Lemma~\ref{lem:inheritance}, whereas the mixed
minimizer approaches the linear physics-only minimizer as $\beta\to\infty$.
For every finite $\beta$ with $\Phi^{\top}\varepsilon\neq0$, the minimizer
remains displaced from $a^\star$. Thus, within this fixed-topology linear
setting, every finite-$\beta$ mixed objective of the form \eqref{eq:mixed}
retains a nonzero teacher-dependent coefficient bias whenever
$\Phi^{\top}\varepsilon\neq0$.

The objective ablation in Section~\ref{sec:refinement-objective} examines this
effect over the tested finite range of $\beta$. In those experiments, the
observed bias decreases as the physics weight increases but remains nonzero at
finite weight. End-to-end search, where the topology itself changes with the
objective, lies outside the fixed-topology premise of this corollary. This
empirical trend should not be interpreted as a general monotonicity result: the
corollary establishes the two endpoint limits and an $O(\beta^{-1})$
large-weight asymptotic, but does not imply that the error norm is monotone in
$\beta$. Nonlinear parameterizations are also outside the corollary, and their
local counterpart is discussed in Proposition~\ref{prop:local}.

\subsection{Nonlinear local identifiability and convergence}
\label{sec:local}

The explicit bounds in Sections~\ref{sec:inheritance} and \ref{sec:mixed}
require the linear parameterization (L). Symbolic search, however, typically
produces nonlinear expression families. In this setting, replacing basis
functions by parameter gradients gives only a local linearized sensitivity near
$a^\star$, rather than a global coefficient bound. Standard local results for
nonlinear least squares instead clarify the role of initialization; the
convergence statements below follow the classical Newton and Gauss--Newton
analysis \cite{nocedal2006numerical}. Conditional on the fixed topology,
coefficient parameterization, and refinement point set, the zero-residual
target remains $a^\star$ and is independent of the teacher. Whether the
numerical iteration reaches this target, however, depends on the
initialization, and spurious stationary points may occur.

\begin{proposition}[Local identifiability and convergence]
\label{prop:local}
Fix the topology, coefficient parameterization, and refinement point set, and
suppose that the parameterization is nonlinear in $a$, so that the linear case
(L) does not apply. Suppose also that
Assumptions~\ref{asm4:wellposed}--\ref{asm4:representability} hold,
so that $\mathbf{F}(a^\star)=0$. Let
\begin{equation}
J(a):=\mathcal{J}_{\mathrm{refit}}(a)=\|\mathbf{F}(a)\|_2^2,
\label{eq:nonlinear-objective}
\end{equation}
and assume:
\begin{itemize}
\item[(R1)] \emph{Smoothness.} The expression $s$ is twice continuously
differentiable in $a$, and $\mathbf{F}$ is $C^2$ on an open neighborhood $U$
of $a^\star$.
\item[(R2)] \emph{Identifiability.} The Jacobian
$D\mathbf{F}(a^\star)\in\mathbb{R}^{M\times p}$ has full column rank.
\item[(R3)] \emph{Hessian Lipschitz continuity.} The Hessian $\nabla^2 J$ is
Lipschitz continuous on a neighborhood of $a^\star$.
\end{itemize}
Then the following statements hold locally.
\begin{enumerate}
\item[(a)] \emph{Gradient and Hessian.}
\begin{equation}
\begin{aligned}
\nabla J(a)
&=
2D\mathbf{F}(a)^{\top}\mathbf{F}(a),\\
\nabla^2 J(a)
&=
2D\mathbf{F}(a)^{\top}D\mathbf{F}(a)
+
2\sum_{k=1}^{M}F_k(a)\,\nabla^2 F_k(a).
\end{aligned}
\label{eq:gradhess}
\end{equation}

\item[(b)] \emph{Locally unique zero.} Since $\mathbf{F}(a^\star)=0$,
\begin{equation}
\nabla J(a^\star)=0,
\qquad
\nabla^2 J(a^\star)
=
2D\mathbf{F}(a^\star)^{\top}D\mathbf{F}(a^\star)\succ0,
\label{eq:local-hessian}
\end{equation}
where (R2) makes the Gram matrix positive definite. Hence $a^\star$ is a
strict local minimizer of $J$. Let
\begin{equation}
\mu_0=\sigma_{\min}\bigl(D\mathbf{F}(a^\star)\bigr)^2>0.
\label{eq:local-curvature}
\end{equation}
By continuity, there exists a neighborhood of $a^\star$ on which
\begin{equation}
J(a)\ge \frac{\mu_0}{2}\|a-a^\star\|_2^2.
\label{eq:local-growth}
\end{equation}
Therefore, $a^\star$ is the only zero-residual point in that neighborhood.

\item[(c)] \emph{Local convergence neighborhood.} There exists $\rho>0$ such
that the ideal full-step Newton iteration initialized at any
$a_0\in B(a^\star,\rho)$ converges quadratically to $a^\star$. Here
$\nabla^2 J(a^\star)\succ0$ follows from (R2), and the local Lipschitz
continuity of $\nabla^2 J$ follows from (R3). The ideal Gauss--Newton iteration
also converges quadratically near $a^\star$, because
$\mathbf{F}(a^\star)=0$ is a zero-residual solution.

\item[(d)] \emph{Spurious stationary points.} The stationarity condition is
\begin{equation}
\nabla J(a)=0
\Longleftrightarrow
D\mathbf{F}(a)^{\top}\mathbf{F}(a)=0
\Longleftrightarrow
\mathbf{F}(a)\in
\bigl(\operatorname{col}D\mathbf{F}(a)\bigr)^{\perp}.
\label{eq:stationarity-characterization}
\end{equation}
Points with $\mathbf{F}(a)=0$ are true zeros. Points with
$\mathbf{F}(a)\neq0$ whose residual vector is orthogonal to the tangent space
are spurious stationary points, which may be local minima, maxima, or saddle
points. At such points, $J(a)>0$, so
Assumption~\ref{asm4:zero-residual} fails.
\end{enumerate}
\end{proposition}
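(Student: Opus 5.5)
I will treat the four items in order, since each relies only on standard nonlinear least-squares calculus together with the hypotheses (R1)--(R3) and the identity $\mathbf{F}(a^\star)=0$.

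The first step is a direct computation of the gradient and Hessian in (a). Write $J(a)=\sum_{k=1}^M F_k(a)^2$ and differentiate componentwise. The chain rule gives $\nabla J=2\sum_k F_k\nabla F_k=2D\mathbf{F}^\top\mathbf{F}$. Differentiating once more, the product rule yields $2\sum_k(\nabla F_k\nabla F_k^\top+F_k\nabla^2F_k)$, whose first sum is $2D\mathbf{F}^\top D\mathbf{F}$. Smoothness (R1) justifies these derivatives on $U$. The stationarity characterization in (d) then follows at once: $\nabla J(a)=0$ iff $D\mathbf{F}(a)^\top\mathbf{F}(a)=0$, iff $\mathbf{F}(a)$ is orthogonal to every column of $D\mathbf{F}(a)$. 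If such a point has $\mathbf{F}(a)\neq0$, then $J(a)=\|\mathbf{F}(a)\|_2^2>0$, so Assumption~\ref{asm4:zero-residual} fails there. Whether such a point is a minimum, maximum or saddle depends on the sign of the curvature term $\sum_kF_k\nabla^2F_k$. I would only remark on this rather than prove any classification, since the statement asserts no more.

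For (b), I substitute $\mathbf{F}(a^\star)=0$ into (a) to get $\nabla J(a^\star)=0$ and $\nabla^2J(a^\star)=2D\mathbf{F}(a^\star)^\top D\mathbf{F}(a^\star)$. By (R2), $x^\top\nabla^2J(a^\star)x=2\|D\mathbf{F}(a^\star)x\|^2\ge2\mu_0\|x\|^2$, where $\mu_0=\sigma_{\min}(D\mathbf{F}(a^\star))^2>0$.

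For the growth bound, I would avoid a Hessian Taylor argument with a vague "by continuity" and work directly with $\mathbf{F}$, because that gives the stated constant cleanly. By (R1), $\mathbf{F}(a)=D\mathbf{F}(a^\star)(a-a^\star)+r(a)$ with $\|r(a)\|\le C\|a-a^\star\|^2$ near $a^\star$. Hence
\[
\|\mathbf{F}(a)\|\ge\sqrt{\mu_0}\,\|a-a^\star\|-C\|a-a^\star\|^2\ge\tfrac{1}{\sqrt2}\sqrt{\mu_0}\,\|a-a^\star\|
\]
for $\|a-a^\star\|\le(1-2^{-1/2})\sqrt{\mu_0}/C$. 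Squaring gives $J(a)\ge\frac{\mu_0}{2}\|a-a^\star\|^2$ on that ball. This shows simultaneously that $a^\star$ is a strict local minimizer and the unique zero of $\mathbf{F}$ in that ball.

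For (c), I would invoke the classical local Newton theorem (Nocedal--Wright, Thm.~3.5). Its hypotheses are $\nabla J(a^\star)=0$, $\nabla^2J(a^\star)\succ0$ from (b), and $\nabla^2J$ Lipschitz near $a^\star$ from (R3). The theorem then gives a radius $\rho$ such that full-step Newton started in $B(a^\star,\rho)$ is well defined and converges quadratically. For Gauss--Newton I would cite the standard result (Nocedal--Wright, Thm.~10.2 and its discussion of zero-residual problems), which needs the full rank (R2) and Lipschitz continuity of $D\mathbf{F}$ near $a^\star$; the latter holds because $\mathbf{F}\in C^2$ by (R1). A sketch of why the rate is quadratic: the Gauss--Newton step equals the Newton step up to the neglected term $\sum_kF_k\nabla^2F_k=O(\|\mathbf{F}(a)\|)=O(\|a-a^\star\|)$, and this contributes only an $O(\|a-a^\star\|^2)$ perturbation to the error recursion.

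The main obstacle I expect is this Gauss--Newton part. There I would write the error recursion $a_{k+1}-a^\star=(D\mathbf{F}^\top D\mathbf{F})^{-1}D\mathbf{F}^\top[D\mathbf{F}(a_k-a^\star)-\mathbf{F}(a_k)]$. The bracket is $O(\|a_k-a^\star\|^2)$ by Taylor's theorem, and the prefactor is uniformly bounded near $a^\star$ because the smallest singular value of $D\mathbf{F}$ stays bounded away from zero there by continuity. Once I have these two bounds, quadratic convergence follows by the usual induction on the radius.
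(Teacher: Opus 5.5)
Your proposal is correct. For (a), (c) and (d) it follows the paper's proof: the chain rule, the identity $\ker D\mathbf{F}(a)^{\top}=(\operatorname{col}D\mathbf{F}(a))^{\perp}$, and the classical local Newton and zero-residual Gauss--Newton theorems.

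The real difference is the growth bound in (b). The paper argues at second order on $J$. Continuity of $\nabla^2 J$ keeps the Hessian uniformly positive definite on a small convex neighborhood, and Taylor's theorem with $J(a^\star)=0$ and $\nabla J(a^\star)=0$ then gives local strong convexity. You argue at first order on $\mathbf{F}$, using $\|D\mathbf{F}(a^\star)h\|_2\ge\sqrt{\mu_0}\,\|h\|_2$ and absorbing the remainder.

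Your version has three advantages. It gives the constant $\mu_0/2$ with an explicit radius. It needs much less than (R1) for the growth bound and the uniqueness of the zero: only $r(a)=o(\|a-a^\star\|_2)$ is used, so differentiability of $\mathbf{F}$ at $a^\star$ with a full-rank Jacobian suffices. And the bound $\|\mathbf{F}(a)-\mathbf{F}(a^\star)\|_2\ge\sqrt{\mu_0/2}\,\|a-a^\star\|_2$ is itself an injectivity-at-$a^\star$ statement, which is exactly what the argument of Proposition~\ref{prop:exact-recovery} consumes.

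The paper's version buys something else. Strong convexity of $J$ on a convex neighborhood excludes every other stationary point there, spurious ones included. It also rests on the same Hessian nondegeneracy that drives the Newton theorem in (c).

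For Gauss--Newton, the paper only notes that the Gauss--Newton Hessian approximation coincides with $\nabla^2 J(a^\star)$ at a zero-residual solution, then cites the standard result. Your error recursion is self-contained and shows exactly where $\mathbf{F}(a^\star)=0$ enters: the bracket is $O(\|a_k-a^\star\|_2^2)$ only because of it.
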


\textit{(Proof in \ref{app:proofs}.)}

Proposition~\ref{prop:local} replaces the neighborhood injectivity assumption
used in Proposition~\ref{prop:exact-recovery} with a Jacobian full-rank
condition that guarantees local identifiability near $a^\star$. The
zero-residual target is still $a^\star$, but reaching it depends on whether the
initialization lies in the local convergence neighborhood. Conditional on the
fixed topology, coefficient parameterization, and refinement point set, the
teacher enters the Stage-B optimization only through the initialization
$a_0=a_0(\theta)$. If
$a_0\in B(a^\star,\rho)$, the iteration converges locally to $a^\star$;
outside this neighborhood, no such guarantee is available, and the iteration
may, among other possibilities, converge to a spurious stationary point.

For comparison, in the linear case with the teacher fit
$\hat a_{\mathrm{LS}}$ used as initialization,
Lemma~\ref{lem:inheritance} gives
\begin{equation}
\|a_0-a^\star\|_2
\le
\frac{\|\varepsilon\|_2}{\sigma_{\min}(\Phi)}.
\label{eq:initialization-error-bound}
\end{equation}
Hence, for any prescribed radius $r>0$,
\begin{equation}
\frac{\|\varepsilon\|_2}{\sigma_{\min}(\Phi)}<r
\label{eq:initialization-basin-condition}
\end{equation}
is sufficient to place the linear teacher fit inside $B(a^\star,r)$. This
illustrates how teacher error and conditioning control initialization proximity
in the linear setting, but it does not provide a basin guarantee for nonlinear
parameterizations. In the nonlinear setting of Proposition~\ref{prop:local},
whether a teacher-derived initialization lies in $B(a^\star,\rho)$ must be
established separately or assessed empirically. The initialization study in
Section~\ref{sec:robustness-identifiability} supports this distinction: the
linear parameterizations tested converged from every initialization considered,
whereas for the nonlinear topologies examined, some random initializations
reached spurious stationary points while teacher-derived starts converged to
the solution associated with $a^\star$.

These statements are local. They imply neither global uniqueness of the
zero-residual coefficient vector nor global convergence of the numerical
solver. They also do not imply that random initializations must fail or that
teacher-derived initializations must succeed. The convergence claims in item
(c) concern ideal full-step Newton and Gauss--Newton iterations; they do not
cover damping, trust-region strategies, or step-length restrictions used in
production implementations. The characterization of spurious stationary points
is a general consequence of the stationarity equation, not a proof of their
existence for any particular problem.

A floating-point zero is not an exact zero. Observed low residuals are
empirical evidence of near-attainment of the discrete refinement objective and
may be consistent with representability, but they establish neither exact
representability nor exact zero-residual attainment. In the linear case,
determinacy can be assessed numerically through the rank, smallest singular
value, and conditioning of $A$; in exact arithmetic, this is equivalent to
$A^{\top}A\succ0$. In the nonlinear case, the rank and conditioning of
$D\mathbf{F}(\hat a)$ provide only local identifiability evidence. Both linear
and nonlinear numerical diagnostics remain subject to finite-precision
limitations.

\section{Numerical experiments}\label{sec:experiments}

We evaluate DeSyR from complementary end-to-end and mechanism-oriented perspectives. First, we establish a unified comparison protocol and assess recovery accuracy across problems spanning different differential orders, spatial dimensions, nonlinearities, and field couplings, including representative mechanics problems and coupled-system cases. Second, we examine whether repeated symbolic searches provide sufficient candidate-pool coverage and disentangle the contribution of fixed-topology coefficient refinement from improvements arising from changes in symbolic structure. Third, objective ablations, initialization-sensitivity tests, and Jacobian rank and conditioning diagnostics are used to evaluate the predictions and local assumptions of the fixed-topology analysis developed in Section~\ref{sec:theory}. Finally, we study operator-library misspecification, finite-budget library enrichment, candidate-level convergence, Stage-C selection-gate ablations, and wall-clock cost to characterize failure detection, candidate coverage, refinement reliability, selection robustness, and computational efficiency. A supplementary fixed-topology experiment further assesses sensitivity to perturbations in the prescribed constraints.

\subsection{Evaluation protocol and comparison scope}
\label{sec:exp-setup}

The study contains 15 differential-equation problems and 18 configurations.
The Telegraph family contributes two configurations and the Fokker--Planck
family contributes three. The suite spans one-dimensional boundary-value
problems, space--time equations, multidimensional scalar fields, nonlinear
equations, and the coupled Kovasznay system. Table~\ref{tab:benchmark-matrix}
summarizes the coverage; complete equations, domains, constraints, reference
solutions, and operator libraries are given in
\ref{app:definitions}.

\begin{table}[H]
\centering
\footnotesize
\setlength{\tabcolsep}{5pt}
\renewcommand{\arraystretch}{1.08}

\caption{Benchmark coverage used in the numerical evaluation. The 15
differential-equation problems yield 18 configurations because the Telegraph
and Fokker--Planck families contain multiple manufactured cases. Full equations,
domains, constraints, reference solutions, and operator libraries are provided
in \ref{app:definitions}.}
\label{tab:benchmark-matrix}

\begin{adjustbox}{max width=\textwidth}
\begin{tabular}{@{}c l c l l@{}}
\toprule
ID
& Problem family
& Domain
& Governing operator
& Target structural feature \\
\midrule

01--05
& Poisson, beam, convection--diffusion
& 1D
& second-/fourth-order ODEs
& frequency, mixed bases, boundary layer \\

06--09c
& Diffusion, wave, telegraph, Fokker--Planck
& 1+1D
& evolutionary PDEs
& separability, damping, variable coefficients \\

10, 13
& Klein--Gordon and Burgers
& 1+1D
& nonlinear PDEs
& polynomial coupling, traveling wave \\

11, 15
& Helmholtz and Sine--Poisson
& 2D
& elliptic PDEs
& high-frequency and product structure \\

12
& Sine--Poisson
& 3D
& elliptic PDE
& three-factor product \\

14
& Kovasznay flow
& 2D
& coupled nonlinear PDEs
& three fields and shared factors \\

\bottomrule
\end{tabular}
\end{adjustbox}

\end{table}

For each configuration, we trained five PINN teachers using distinct random
seeds. For each fixed teacher, ten separately seeded PySR searches were run on
the same teacher-generated data, and their retained candidates were merged into
a single pool for refinement and selection. Thus, the PySR seeds enlarge the
candidate pool and increase search diversity, but they do not constitute
additional end-to-end replicates. We therefore treat the PINN seed as the
replicate unit and summarize results over $n=5$ replicates per configuration.
PINNs use fixed Hammersley collocation points, 20000 Adam steps followed by up
to 5000 L-BFGS steps, and the physics-validation checkpoint rule in
Section~\ref{sec:stage-a}. Configuration-specific architectures, sample counts,
operator libraries, and search budgets are reported in
Tables~\ref{tab:app-pinn-settings} and~\ref{tab:app-search-settings}.

The principal comparisons are controlled comparisons within DeSyR and are
designed to separate the contributions of its main stages. The PINN result
characterizes the accuracy of the teacher model, the pre-refit version of the
ultimately selected topology reflects the accuracy obtained from teacher-guided
symbolic search before physics refinement, and the corresponding refined
expression quantifies the improvement achieved by Stage B without changing that
topology. Additional objective ablations compare teacher-only, mixed
data--physics, and physics-only coefficient estimation under fixed topology,
collocation points, and initialization. These controlled comparisons are
intended to isolate the effects of symbolic recovery and coefficient refinement
rather than to provide a direct head-to-head benchmark against independently
implemented neural--symbolic pipelines.

Interior solution errors and equation residuals are evaluated on verification
points used in neither symbolic search nor coefficient refinement. Constraint
residuals, by contrast, are evaluated on the prescribed constraint point sets,
which coincide with those used during refinement. They therefore measure the
extent to which the imposed boundary or initial constraints are satisfied
rather than serving as an independent out-of-sample test. The primary
evaluation metrics are relative $L_2$ error, equation residual
$R_{\mathrm{eq}}$, constraint residual $R_{\mathrm{con}}$, expression
complexity, convergence status, and wall-clock time. Pointwise interior values
of the reference solution are not used for PINN checkpoint selection, symbolic
search, coefficient refinement, or Stage-C selection. When analytic reference
solutions are available for manufactured problems, they are used beforehand
only to construct the prescribed forcing and constraint data, and afterward for
post hoc solution-error evaluation. Unless otherwise stated, tables and plot
markers report the median over five PINN seeds, with intervals indicating the
first and third quartiles.

\subsection{End-to-end recovery across the benchmark suite}\label{sec:overall-recovery}

\begin{table}[H]
\centering
\scriptsize
\setlength{\tabcolsep}{2.6pt}
\renewcommand{\arraystretch}{1.08}

\caption{End-to-end recovery across all 18 configurations. Results are reported
as medians over five independently initialized PINN teachers. The ten
symbolic-search seeds per teacher are used only to enlarge the candidate pool
and are not counted as independent replicates. Pre-refit denotes the
search-stage expression corresponding to the candidate ultimately selected
after refinement and Stage-C selection. Complexity is reported as the median;
a bracketed interquartile range is shown when it is non-degenerate.}
\label{tab:overall-performance}

\resizebox{\textwidth}{!}{%
\begin{tabular}{@{}c l r r r r r c@{}}
\toprule
ID & Problem
& PINN rel. $L_2$
& Pre-refit rel. $L_2$
& Refined rel. $L_2$
& $R_{\mathrm{eq}}$
& $R_{\mathrm{con}}$
& Complexity \\
\midrule

01 & Param. Poisson
& $1.09\times10^{-5}$
& $9.82\times10^{-6}$
& $0^{\dagger}$
& $0^{\dagger}$
& $0^{\dagger}$
& 4 \\

02 & Multifreq. Poisson
& $4.29\times10^{-3}$
& $3.90\times10^{-3}$
& $7.84\times10^{-17}$
& $6.50\times10^{-17}$
& $7.85\times10^{-17}$
& 12 \\

03 & Euler--Bernoulli
& $6.42\times10^{-6}$
& $1.69\times10^{-3}$
& $1.03\times10^{-14}$
& $4.07\times10^{-20}$
& $4.99\times10^{-17}$
& 14 \\

04 & Conv.--diff.
& $4.34\times10^{-5}$
& $4.71\times10^{-4}$
& $4.09\times10^{-16}$
& $0^{\dagger}$
& $1.57\times10^{-16}$
& 8 \\

05 & Sine--Poisson 1D
& $5.42\times10^{-6}$
& $1.87\times10^{-5}$
& $1.98\times10^{-15}$
& $1.41\times10^{-14}$
& $2.28\times10^{-15}$
& 4 \\

06 & Diffusion
& $8.92\times10^{-5}$
& $1.29\times10^{-5}$
& $1.94\times10^{-15}$
& $8.26\times10^{-15}$
& $1.74\times10^{-15}$
& 9 \\

07a & Wave
& $2.44\times10^{-4}$
& $0^{\dagger}$
& $0^{\dagger}$
& $0^{\dagger}$
& $0^{\dagger}$
& 5 \\

07b & Telegraph-1
& $3.75\times10^{-4}$
& $1.90\times10^{-4}$
& $0^{\dagger}$
& $0^{\dagger}$
& $0^{\dagger}$
& 8 \\

08 & Telegraph-2
& $5.54\times10^{-5}$
& $1.50\times10^{-5}$
& $0^{\dagger}$
& $0^{\dagger}$
& $0^{\dagger}$
& 9 \\

09a & Fokker--Planck-1
& $2.11\times10^{-4}$
& $0^{\dagger}$
& $0^{\dagger}$
& $0^{\dagger}$
& $0^{\dagger}$
& 3 \\

09b & Fokker--Planck-2
& $5.57\times10^{-4}$
& $0^{\dagger}$
& $0^{\dagger}$
& $0^{\dagger}$
& $0^{\dagger}$
& 4 \\

09c & Fokker--Planck-3
& $1.35\times10^{-4}$
& $2.17\times10^{-5}$
& $0^{\dagger}$
& $0^{\dagger}$
& $0^{\dagger}$
& 6 \\

10 & Klein--Gordon
& $9.27\times10^{-3}$
& $6.95\times10^{-4}$
& $1.85\times10^{-14}$
& $1.95\times10^{-12}$
& $6.86\times10^{-15}$
& 14 \\

11 & Helmholtz
& $4.94\times10^{-2}$
& $6.52\times10^{-3}$
& $2.31\times10^{-14}$
& $3.70\times10^{-12}$
& $1.98\times10^{-14}$
& 9 \\

12 & Sine--Poisson 3D
& $1.96\times10^{-4}$
& $4.07\times10^{-5}$
& $3.52\times10^{-15}$
& $4.14\times10^{-14}$
& $1.16\times10^{-15}$
& 13 \\

13 & Burgers
& $6.47\times10^{-4}$
& $3.92\times10^{-4}$
& $3.71\times10^{-17}$
& $4.50\times10^{-17}$
& $7.24\times10^{-18}$
& 12 \\

14 & Kovasznay
& $2.72\times10^{-3}$
& $2.65\times10^{-3}$
& $3.13\times10^{-15}$
& $1.01\times10^{-14}$
& $4.49\times10^{-15}$
& 30 [30--35] \\

15 & Sine--Poisson 2D
& $4.39\times10^{-5}$
& $3.10\times10^{-5}$
& $2.84\times10^{-15}$
& $2.61\times10^{-14}$
& $1.34\times10^{-15}$
& 9 \\

\bottomrule
\end{tabular}%
}

\par\vspace{2pt}
{\scriptsize $\dagger$ Stored as an exact floating-point zero; no display floor
was applied.}

\end{table}

All 90 selected expressions were recorded as converged; every selected
refinement involving free coefficients converged successfully. Seven
configurations achieved a stored-zero median refined error. Among the remaining
configurations, Helmholtz had the largest median refined error,
$2.31\times10^{-14}$, while also exhibiting the largest median teacher error,
$4.94\times10^{-2}$. The Helmholtz case therefore illustrates that a
comparatively inaccurate PINN teacher can still guide the discovery of a
symbolic candidate whose coefficients are subsequently recovered to
near-machine precision by physics-only refinement. For the coupled Kovasznay
problem, the median relative error decreases from $2.72\times10^{-3}$ for the
PINN teacher to $3.13\times10^{-15}$ after joint refinement of the symbolic
fields. Across the full benchmark suite, the refined equation residuals in
Table~\ref{tab:overall-performance} are likewise extremely small, providing a
complementary measure of physical consistency alongside the solution errors.
The associated seed-level dispersion is reported in Appendix
Table~\ref{tab:app-complete-performance}. Figures~\ref{fig:overall-errors}
and~\ref{fig:verification-residuals} further summarize the across-seed
distributions of these complementary indicators.

\begin{figure}[H]
\centering
\includegraphics[width=\textwidth]{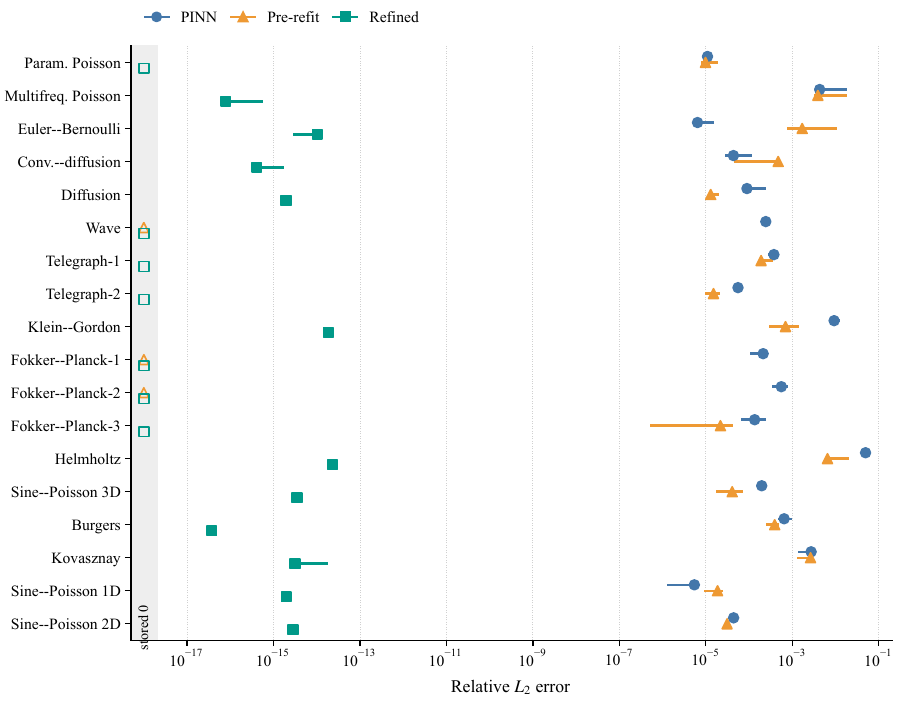}
\caption{Solution-error distributions across all 18 configurations. Markers denote the median relative $L_2$ error over five independently initialized PINN teachers, and horizontal intervals indicate the interquartile range. Results are shown for the PINN teacher, the selected pre-refit expression, and the refined DeSyR expression. Open markers within the shaded floor region indicate stored floating-point zero medians.}
\label{fig:overall-errors}
\end{figure}

\begin{figure}[H]
\centering
\includegraphics[width=\textwidth]{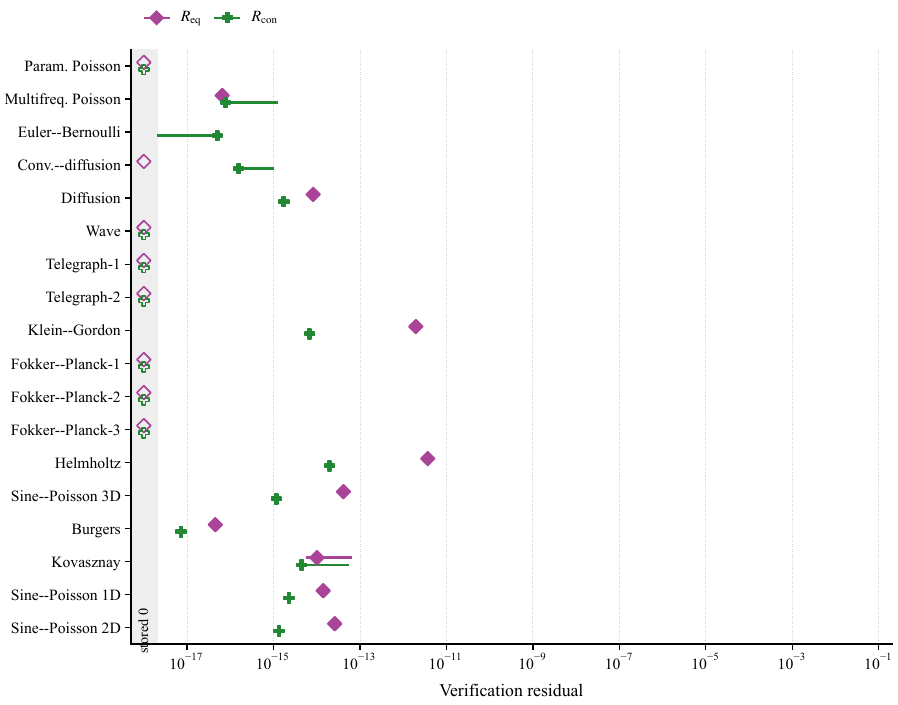}
\caption{Verification residuals of the refined expressions across all 18
configurations. Equation residuals $R_{\mathrm{eq}}$ are evaluated on
independent interior verification points, whereas constraint residuals
$R_{\mathrm{con}}$ are evaluated on the prescribed constraint point sets.
Markers denote medians over five independently initialized PINN teachers, and
horizontal intervals indicate interquartile ranges. Open markers within the
shaded floor region indicate stored floating-point zero medians.}
\label{fig:verification-residuals}
\end{figure}

The benchmark suite is intentionally heterogeneous and is designed to probe a broad range of structural challenges rather than to represent a random sample of physical models. The one-dimensional cases examine frequency variation, mixed-basis structure, high-order derivatives, and boundary-layer behavior. The space--time cases cover separable, additive, multiplicative, and shared-parameter structures. The multidimensional cases further increase spatial dimension and frequency, while the Klein--Gordon, Burgers, and Kovasznay problems introduce nonlinearities and field coupling. Complete reconstructions are provided in Appendix Fig.~\ref{fig:supp-structural-profiles} for the one-dimensional cases, Appendix Figs.~\ref{fig:supp-spacetime-a} and~\ref{fig:supp-spacetime-b} for the space--time cases, and Appendix Figs.~\ref{fig:supp-sine2d} and~\ref{fig:supp-sine3d} for the multidimensional Sine--Poisson cases. Seed-level distributions of recovery error and expression complexity are reported in Appendix Figs.~\ref{fig:supp-seed-errors} and~\ref{fig:supp-complexity}.

\subsection{Representative mechanics cases}
\label{sec:representative-cases}

Three representative cases are examined in greater detail to reveal aspects of
the recovery process that are not fully captured by aggregate error metrics: a
nonlinear travelling wave, a high-frequency elliptic mode, and a coupled
velocity--pressure system.

The first case is the viscous Burgers equation, a prototypical nonlinear
convection--diffusion problem that admits travelling-wave solutions and
therefore provides a compact test of nonlinear structural recovery. We consider
\begin{equation}
u_t + u u_x - 0.05u_{xx} = 0,
\label{eq:burgers-problem}
\end{equation}
with exact solution
\begin{equation}
u^\star(x,t)
=
0.5 - 0.5\tanh(5x - 2.5t).
\label{eq:burgers-reference}
\end{equation}
As shown in Figure~\ref{fig:representative-hard-cases}, the PINN exhibits a
spatially and temporally structured error pattern around the travelling
transition layer. The ultimately selected symbolic candidate captures the
travelling-wave structure, which remains fixed during coefficient refinement.
Refinement reduces the median relative $L_2$ error from
$3.92\times10^{-4}$ before refinement to $3.71\times10^{-17}$ afterward,
suppressing the structured field error to near numerical precision. This case
therefore shows that the improvement is not confined to the aggregate error
norm, but is also evident directly in the recovered space--time field.

The second case is the two-dimensional Helmholtz equation, a canonical elliptic
problem whose oscillatory solution provides a stringent test of high-frequency
structural recovery. On $[-1,1]^2$, we consider
\begin{equation}
\begin{aligned}
u_{xx}+u_{yy}+u
&=(1-32\pi^2)\sin(4\pi x)\sin(4\pi y),\\
u|_{\partial\Omega}&=0,
\end{aligned}
\label{eq:helmholtz-problem}
\end{equation}
with exact solution
\begin{equation}
u^\star(x,y)=\sin(4\pi x)\sin(4\pi y).
\label{eq:helmholtz-reference}
\end{equation}
As shown in Figure~\ref{fig:representative-hard-cases}, the PINN exhibits
spatially structured approximation errors in resolving the high-frequency
oscillations. The ultimately selected symbolic candidate captures the
underlying product structure, and subsequent fixed-topology coefficient
refinement reduces the remaining error to near numerical precision, yielding a
median relative $L_2$ error of $2.31\times10^{-14}$ over five PINN seeds. This
case illustrates that physics-only coefficient refinement can substantially
improve upon the teacher field once a suitable high-frequency symbolic
structure has been identified.

\begin{figure}[H]
\centering
\includegraphics[width=\textwidth]{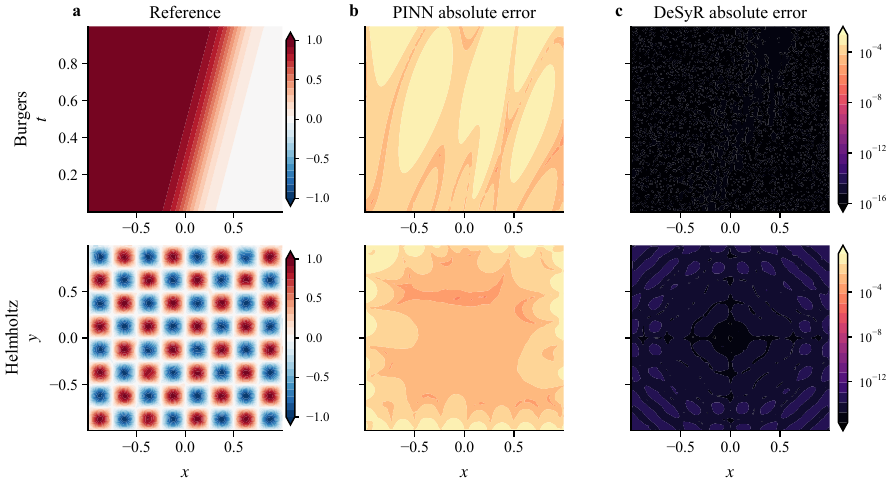}
\caption{Field-level recovery for two representative scalar cases.
Rows correspond to Burgers and Helmholtz, while columns show the reference
field, the pointwise absolute error of the PINN teacher, and the pointwise
absolute error of the refined DeSyR expression. Burgers provides a nonlinear
travelling-wave test, whereas Helmholtz probes high-frequency structural
recovery and exhibits the largest teacher error in the benchmark suite.}
\label{fig:representative-hard-cases}
\end{figure}

The third case is the Kovasznay flow, a classical steady solution of the
incompressible Navier--Stokes equations and a representative test of coupled
multi-field recovery. Unlike the preceding scalar problems, this case requires
the simultaneous reconstruction of two velocity components and the pressure
field under nonlinear momentum coupling and the incompressibility condition.
We consider the Kovasznay benchmark described in
\cite{majumdar2022physics,majumdar2023symbolic}, for which the governing equations are
\begin{equation}
\begin{aligned}
\mathbf u\cdot\nabla\mathbf u+\nabla p-\nu\nabla^2\mathbf u &= 0,\\
\nabla\cdot\mathbf u &= 0,
\end{aligned}
\label{eq:kovasznay-problem}
\end{equation}
with Reynolds number $\mathrm{Re}=20$ and
$\nu=1/\mathrm{Re}=0.05$. The corresponding analytic velocity and pressure
fields are listed in Appendix Table~\ref{tab:app-definitions}.

The jointly recovered symbolic fields capture the exponential and
trigonometric structure of the coupled velocity--pressure solution, including
the common base exponential rate and its cross-field relationships.
After joint coefficient refinement, the median relative $L_2$ errors are
$1.81\times10^{-15}$ for $u$, $3.40\times10^{-15}$ for $v$, and
$3.69\times10^{-15}$ for $p$. The total final expression complexity ranges
from 30 to 35 across seeds, reflecting algebraically equivalent phase
representations that can remain distinct after symbolic simplification. As shown
in Figure~\ref{fig:kovasznay-main}, the PINN exhibits structured pointwise
errors across all three fields, whereas the refined symbolic solution reduces
these errors to near numerical precision. This case illustrates that joint
symbolic recovery can achieve near-machine-precision accuracy in a nonlinear
coupled velocity--pressure system while enforcing the incompressibility
condition, extending the empirical evidence beyond the independently recovered
scalar cases.

\begin{figure}[H]
\centering
\includegraphics[width=\textwidth]{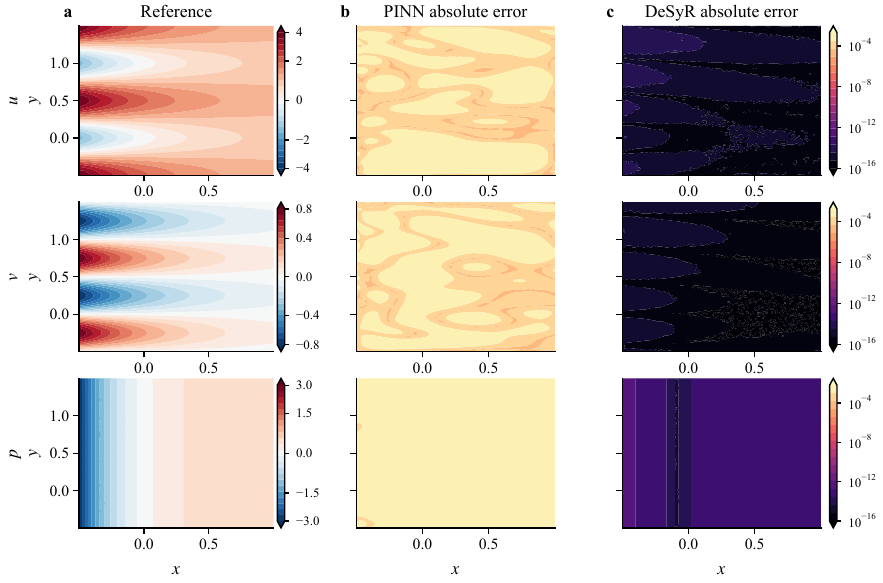}
\caption{Field-level recovery for the coupled Kovasznay flow. Rows correspond
to the streamwise velocity $u$, transverse velocity $v$, and pressure $p$,
while columns show the reference field, the pointwise absolute error of the
PINN teacher, and the pointwise absolute error of the refined DeSyR expression.}
\label{fig:kovasznay-main}
\end{figure}

\subsection{Mechanism and theory-aligned tests}\label{sec:search-and-refinement}

\subsubsection{Candidate-pool coverage under repeated searches}
\label{sec:candidate-pool-coverage}

Condition~C2 requires the retained Stage-A candidate pool to contain at least
one target-capable symbolic topology. Because algebraic topology coverage is
not directly observable from the archived outputs alone, we assess its
end-to-end consequence by examining how the number of separately seeded
symbolic searches retained per PINN teacher affects recovery. The archived
replay covers 12 scalar benchmark configurations, each with five independently
trained PINN teachers, giving 60 teacher-level instances. Consistent with
Stage~A, each archived search run contributes up to five representative Pareto
candidates selected by the Stage-A retention rule. For
each instance, the first $K\in\{1,2,3,5,10\}$ archived search runs are pooled
before the standard Stage~B refinement and Stage~C selection. A recovery is
counted when the selected refined expression has relative $L_2$ error no larger
than $10^{-10}$.

Figure~\ref{fig:candidate-pool-coverage} shows that the pool formed from one
search run recovers 58 of the 60 instances. Including candidates from a second
search run recovers all 60 instances, with no further change for
$K=3$, $5$, or $10$. The two changes occur for Helmholtz teachers 0 and 4:
their selected relative errors decrease from $8.54\times10^{-1}$ and
$6.98\times10^{-1}$, respectively, to $2.31\times10^{-14}$ when the second
search is included. For the remaining 11 configurations, all five teacher-level
instances are already recovered with $K=1$; the configuration-level counts are
reported in Appendix Table~\ref{tab:candidate-pool-detail}.

Within this archived protocol and its fixed per-search retention rule, these
results indicate that pooling candidates from a second independently seeded
symbolic search can mitigate occasional limitations in the retained candidate
set. The replay does not distinguish between a target-capable topology that was
not generated by a search and one that was generated but excluded by the
five-candidate truncation. It therefore provides an end-to-end indicator
relevant to candidate coverage rather than a direct measurement of algebraic
topology coverage. Nor does it compare or validate alternative within-front
retention rules, or imply that the same saturation point applies under other
search budgets, seed orderings, or problem distributions.

\begin{figure}[H]
\centering
\includegraphics[width=\textwidth]{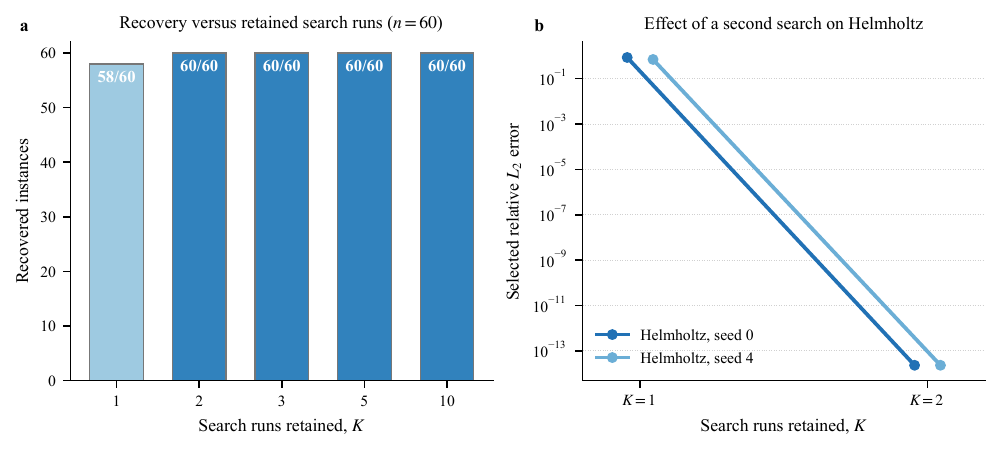}
\caption{Effect of the number of archived symbolic-search runs retained in
each candidate pool. Panel~(a) reports end-to-end recovery across 12 scalar
benchmark configurations and five independently trained PINN teachers per
configuration under the fixed Stage-A rule that each run contributes up to five
representative Pareto candidates. Panel~(b) shows the selected relative
$L_2$ errors for the two Helmholtz instances whose recovered expressions change
when a second search run is included. Recovery denotes a selected refined
expression with relative $L_2$ error not exceeding $10^{-10}$.}
\label{fig:candidate-pool-coverage}
\end{figure}

\subsubsection{Fixed-topology coefficient correction}\label{sec:refinement-contribution}

Sine--Poisson 1D provides a direct stage-wise test of fixed-topology coefficient refinement. On $x\in[0,1]$, we consider the boundary-value problem from PR-GPSR \cite{oh2023genetic}
\begin{equation}
\begin{aligned}
u_{xx}+\pi^2\sin(\pi x) &= 0,\\
u(0)=u(1) &= 0,
\end{aligned}
\label{eq:sine-poisson-1d-problem}
\end{equation}
with exact solution
\begin{equation}
u^\star(x)=\sin(\pi x).
\label{eq:sine-poisson-1d-reference}
\end{equation}
Across all five PINN seeds, the ultimately selected candidates have the same
search-stage topology, $\sin(\alpha x)$, so Stage~B updates only the frequency
parameter $\alpha$ while holding that topology fixed during optimization. The
median relative $L_2$ errors of the PINN teacher, the pre-refit version of the
ultimately selected candidate, and the refined expression are
$5.42\times10^{-6}$, $1.87\times10^{-5}$, and $1.98\times10^{-15}$,
respectively. The modest increase from the PINN error to the pre-refit error
reflects the symbolic compression step: although the ultimately selected
search-stage candidate already has the target-capable form $\sin(\alpha x)$,
its provisional frequency parameter remains slightly offset from the exact
value $\pi$. Stage~B corrects this residual parameter mismatch without
reopening the topology search, reducing the median error by approximately ten
orders of magnitude.

As shown in Fig.~\ref{fig:stagewise-profiles}, the reference, PINN, pre-refit,
and refined solution profiles are visually almost indistinguishable, whereas
the pointwise-error panel clearly exposes the improvement introduced by
coefficient refinement. Across all five seeds, Stage~B moves the pre-refit
frequency estimate toward $\pi$ and reduces the resulting solution error to
near machine precision while keeping the search-stage topology fixed throughout
coefficient optimization.

\begin{figure}[H]
\centering
\includegraphics[width=0.92\textwidth]{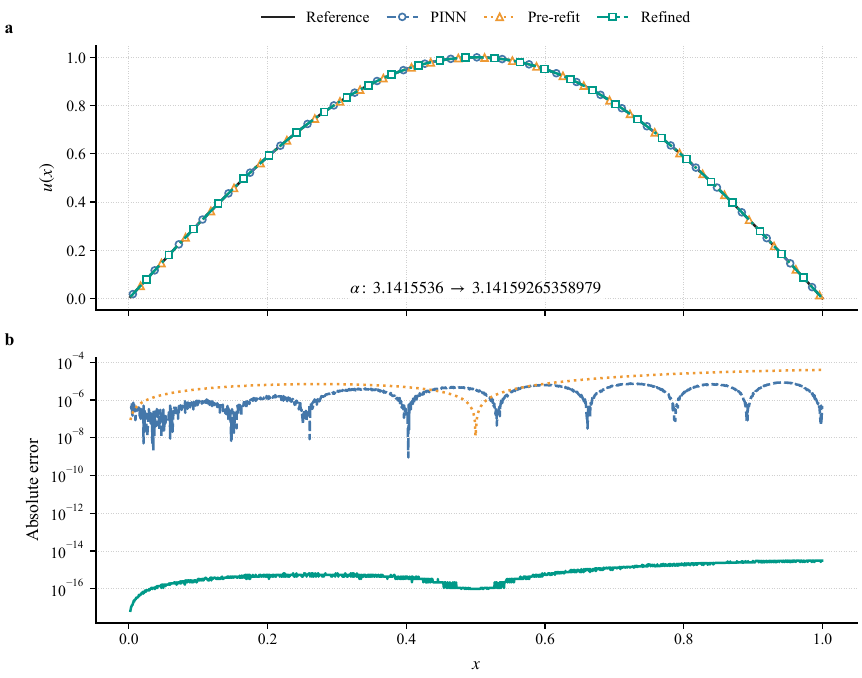}
\caption{Stage-wise recovery for the Sine--Poisson 1D problem using a
representative PINN seed. Panel (a) compares the reference solution, PINN
teacher, pre-refit version of the ultimately selected candidate, and refined
expression; staggered markers are used to distinguish the nearly coincident
profiles. Panel (b) shows the corresponding pointwise absolute errors,
highlighting the substantial error reduction achieved by fixed-topology
coefficient refinement.}
\label{fig:stagewise-profiles}
\end{figure}

\begin{table}[H]
\centering
\footnotesize
\setlength{\tabcolsep}{4pt}
\renewcommand{\arraystretch}{1.12}

\caption{Representative coefficient corrections under fixed symbolic
topologies. For each problem, the reported PINN seed is the one whose teacher
error is closest to the median over five seeds. During Stage-B optimization,
only the continuous coefficients are updated while the search-stage topology
is held fixed; subsequent algebraic simplification may remove terms whose
refined coefficients become exactly zero.}
\label{tab:coefficient-updates}

\resizebox{\textwidth}{!}{%
\begin{tabular}{@{}c l l c r r c@{}}
\toprule
ID
& Problem
& Fixed topology
& Coefficient
& Pre-refit
& Refined
& \shortstack{Relative $L_2$ error\\(pre-refit $\rightarrow$ refined)} \\
\midrule

05
& Sine--Poisson 1D
& $\sin(\alpha x)$
& $\alpha$
& $3.1415536$
& $3.14159265358979$
& $2.48\times10^{-5}\rightarrow1.98\times10^{-15}$ \\

\addlinespace[3pt]

02
& Multifreq. Poisson
& $a_0+a_1x+\sin(a_2x)+\cos(a_3x)$
& $a_0$
& $0.004771$
& $0$
& \multirow{4}{*}{$3.90\times10^{-3}\rightarrow7.84\times10^{-17}$} \\

&
&
& $a_1$
& $-0.100054$
& $-0.1$
& \\

&
&
& $a_2$
& $0.700030$
& $0.7$
& \\

&
&
& $a_3$
& $1.500109$
& $1.5$
& \\

\addlinespace[3pt]

03
& Euler--Bernoulli
& $a_4x^4+a_3x^3+a_2x^2+a_1x$
& $a_4$
& $1.16\times10^{-6}$
& $2.08\times10^{-6}$
& \multirow{4}{*}{$1.51\times10^{-2}\rightarrow3.01\times10^{-14}$} \\

&
&
& $a_3$
& $-2.30\times10^{-5}$
& $-4.17\times10^{-5}$
& \\

&
&
& $a_2$
& $-1.16\times10^{-4}$
& $0$
& \\

&
&
& $a_1$
& $2.30\times10^{-3}$
& $2.08\times10^{-3}$
& \\

\addlinespace[3pt]

13
& Burgers
& $a_0+a_1\tanh(a_2t+a_3x)$
& $a_0$
& $0.499879$
& $0.5$
& \multirow{4}{*}{$3.92\times10^{-4}\rightarrow3.71\times10^{-17}$} \\

&
&
& $a_1$
& $0.500090$
& $0.5$
& \\

&
&
& $a_2$
& $2.492718$
& $2.5$
& \\

&
&
& $a_3$
& $-4.987499$
& $-5$
& \\

\bottomrule
\end{tabular}%
}
\end{table}

Table~\ref{tab:coefficient-updates} summarizes representative coefficient corrections under fixed symbolic topologies. For Sine--Poisson 1D, refinement adjusts the frequency parameter $\alpha$ from $3.1415536$ to $3.14159265358979$, reducing the relative $L_2$ error from $2.48\times10^{-5}$ to $1.98\times10^{-15}$. In the Multifrequency Poisson case, the constant term, linear coefficient, and two frequency parameters are further calibrated while the mixed trigonometric structure is preserved, reducing the error from $3.90\times10^{-3}$ to $7.84\times10^{-17}$.
For Euler--Bernoulli, refinement recalibrates the polynomial coefficients, including driving the spurious quadratic coefficient to zero, and reduces the error from $1.51\times10^{-2}$ to $3.01\times10^{-14}$. In the Burgers case, the four free coefficients of the nonlinear travelling-wave expression are adjusted toward their exact values, reducing the error from $3.92\times10^{-4}$ to $3.71\times10^{-17}$. These results show that substantial gains in recovery accuracy can be achieved through continuous coefficient refinement on a frozen search-stage topology, without requiring an additional topology search.

\subsubsection{Ablation of coefficient-refinement objectives}\label{sec:refinement-objective}

\paragraph{Linear fixed-topology parameterizations.}
We first consider fixed expression spaces that are linear in their coefficient
vectors. The six settings comprise Multifrequency Poisson,
Euler--Bernoulli, Convection--diffusion, Diffusion, Helmholtz, and
Sine--Poisson 1D. In each setting, the exact solution is representable in the
prescribed linear basis, while five PINN teachers provide the field values used
by the teacher term. The governing equation and the corresponding boundary or
initial conditions form the physics term. Consequently, the coefficient
estimation problem has the linear fixed-topology form analyzed in
Corollary~\ref{cor:mixed-bias}; only the physics weight $\beta$ is varied.

Figure~\ref{fig:linear-objective-bias} shows the resulting relative errors for
the five teachers of each problem. As $\beta$ increases, the mixed estimates
progressively approach the physics-only solution. Away from the floating-point
accuracy floor, the estimated tail slopes are close to $-1$ for all six bases;
the numerical slope estimates and fixed bases are listed in
Table~\ref{tab:linear-objective-detail}. The polynomial Euler--Bernoulli basis
shows a delayed onset of this asymptotic regime, whereas the other five cases
exhibit approximately inverse-weight behavior from smaller values of $\beta$.
These observations are consistent with the $O(\beta^{-1})$ finite-weight
coefficient bias characterized in Corollary~\ref{cor:mixed-bias}.

\begin{figure}[htbp]
\centering
\includegraphics[width=\textwidth]{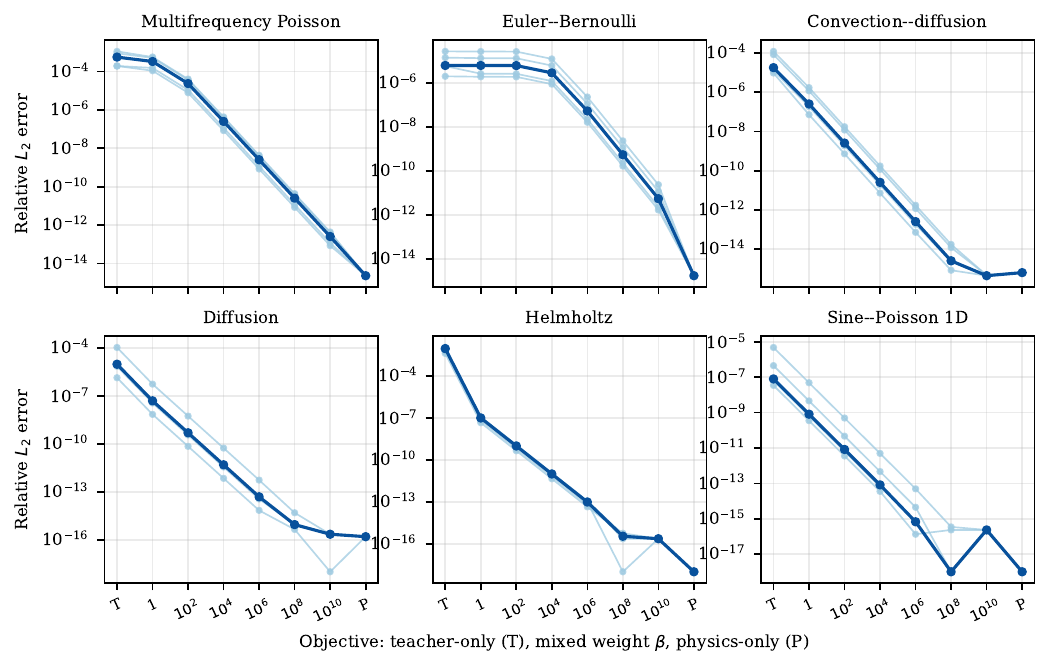}
\caption{Relative errors under teacher-only fitting (T), mixed
data--physics objectives with increasing physics weight $\beta$, and
physics-only refinement (P) for six fixed bases linear in their coefficient
vectors. Light traces show the five PINN teachers in each problem and dark
traces show their medians. The mixed solutions approach the physics-only
endpoint as $\beta$ increases. Points at the numerical accuracy floor are
excluded from the slope estimates reported in
Table~\ref{tab:linear-objective-detail}.}
\label{fig:linear-objective-bias}
\end{figure}

\paragraph{Recovered fixed topologies.}
We next examine the objective choice on the fixed expressions selected by the
full recovery procedure. This ablation holds the symbolic topology,
collocation points, and initialization fixed across ten scalar benchmark
configurations and five PINN seeds per configuration, giving 50 problem--seed
combinations, while varying only the information used to estimate the
coefficients. Under teacher-only fitting, the median relative $L_2$ error is
$8.85\times10^{-5}$. Increasing the physics weight from $\beta=1$ to
$\beta=10^6$ lowers the median to $1.45\times10^{-13}$, and the physics-only
endpoint reaches $1.86\times10^{-17}$. The corresponding equation residuals likewise decrease from teacher-only fitting through the mixed objectives to physics-only refinement; complete quartiles are reported in Table~\ref{tab:objective-detail}.

Figure~\ref{fig:objective-refinement} provides the distributional view of
these results. Because the selected expressions include nonlinear
parameterizations, this comparison extends beyond the linear assumptions of
Corollary~\ref{cor:mixed-bias}. It exhibits the same practical progression from
teacher-only fitting through increasingly physics-weighted mixed objectives to
physics-only refinement, but no asymptotic rate is assigned to the nonlinear
cases.

\begin{figure}[H]
\centering
\includegraphics[width=\textwidth]{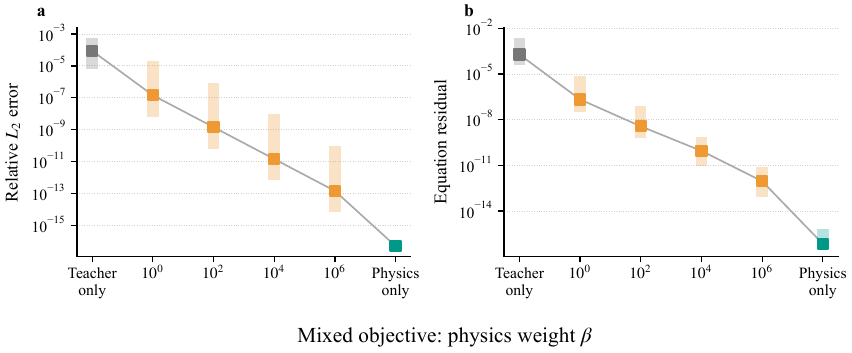}
\caption{Ablation of coefficient-refinement objectives under fixed symbolic
topology, collocation points, and initialization across ten scalar benchmark
configurations and five PINN seeds per configuration. Panels (a) and (b)
report the median relative $L_2$ error and equation residual
$R_{\mathrm{eq}}$, respectively, with interquartile ranges. Results are shown
for teacher-only fitting, mixed data--physics objectives with finite physics
weight $\beta$, and physics-only refinement.}
\label{fig:objective-refinement}
\end{figure}

\subsubsection{Initialization sensitivity and local identifiability}
\label{sec:robustness-identifiability}

We first examine whether teacher-derived coefficients provide effective
initializations for Stage~B. The study covers 12 scalar configurations. Ten
configurations contain free coefficients, yielding 50 teacher-derived
initializations and 250 independently generated random initializations drawn
from $U(-1,1)$; Wave and Fokker--Planck-1 provide ten zero-parameter identity
controls. Figure~\ref{fig:reliability-diagnostics}(a) compares the refined
errors for the ten free-parameter configurations. For Euler--Bernoulli,
Telegraph-1, and Burgers, uninformed random initialization can produce
competitive solutions and occasionally yields lower median errors than the
teacher-derived start. This behavior does not simply track the local Jacobian
conditioning reported in panel~(b): Euler--Bernoulli, for example, has the
largest condition number among the audited expressions. The distinction is
expected because Jacobian conditioning is a local property at the refined
solution, whereas initialization sensitivity concerns whether an optimizer
reaches a low-error solution from an uninformed starting point. Across the broader
set of configurations, teacher-derived initialization is nevertheless more
reliable: it consistently reaches low-error solutions and avoids the
high-error outcomes observed from random starts in several problems. Its main
benefit is therefore not uniformly lower error in every case, but greater
robustness across the heterogeneous configurations examined here. Detailed
initialization results for the free-parameter configurations are reported in
Table~\ref{tab:initialization-detail}; the zero-parameter identity controls
require no initialization and are excluded from this comparison.

We next assess local identifiability using a broader audit of 13 scalar
fixed-topology expressions with free coefficients. For each audited expression,
we form the Jacobian of the collocation-residual map with respect to the
coefficient vector at the refined solution. All audited Jacobians are
numerically full column rank in the finite-precision rank audit summarized in
Table~\ref{tab:rank-detail}, providing local numerical evidence
that the free parameter directions are distinguishable under the chosen
residual equations.
Figure~\ref{fig:reliability-diagnostics}(b) reports the corresponding Jacobian
condition numbers, which characterize the local conditioning of the adopted
coefficient parameterization and residual scaling at the refined solutions.
The values span several orders of magnitude, with Euler--Bernoulli exhibiting
the largest condition number.

Together, the rank and conditioning diagnostics provide numerical support for
the full-column-rank Jacobian condition underlying the local identifiability
analysis in Section~\ref{sec:local}. Because these quantities are evaluated at
the numerically refined solutions rather than throughout the parameter space,
they do not establish global uniqueness or guarantee convergence from
arbitrary initializations. Zero-parameter cases require no rank test and are
reported separately, while the coupled Kovasznay case is excluded from this
scalar audit. Detailed rank results are provided in
Table~\ref{tab:rank-detail}.

\begin{figure}[H]
\centering
\includegraphics[width=\textwidth]{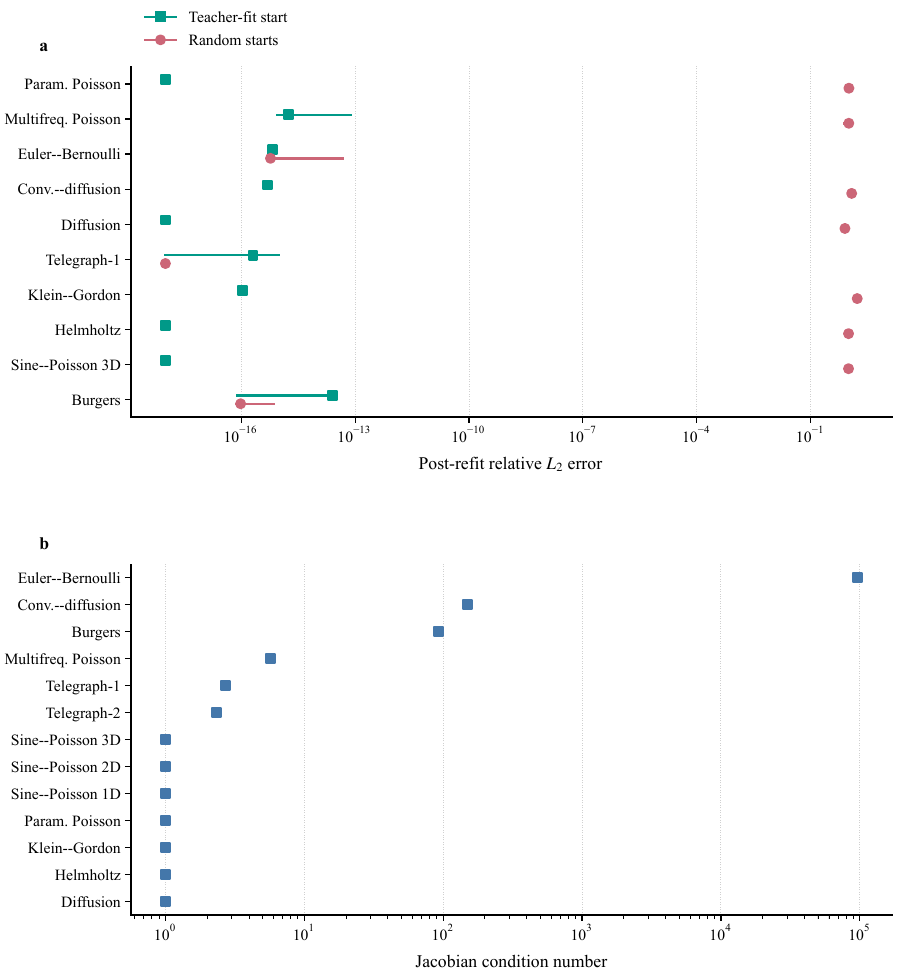}
\caption{Reliability diagnostics for fixed-topology coefficient refinement.
Panel (a) compares the post-refinement relative $L_2$ errors obtained from
teacher-derived and random coefficient initializations for ten free-parameter
configurations. Teacher-derived summaries contain five starts, one per PINN
teacher, whereas random-start summaries contain 25 runs, with five random
starts paired with each teacher; markers denote medians and horizontal
intervals indicate interquartile ranges. Panel (b) reports the Jacobian
condition numbers for 13 audited scalar fixed-topology expressions with free
coefficients, characterizing local parameter conditioning at the refined
solutions. Stored floating-point zero errors in panel (a) are displayed at the
plotting floor of $10^{-18}$ on the logarithmic axis.}
\label{fig:reliability-diagnostics}
\end{figure}

\subsubsection{Scope and Representational Limitations of the Operator Library}
\label{sec:library-misspecification}

We consider the Parametric Poisson problem on $x\in[0,1]$,
\begin{equation}
\begin{aligned}
u_{xx}(x)+16\sin(4x) &= 0,\\
u(0) &= 0,\\
u(1) &= \sin(4),
\end{aligned}
\label{eq:parametric-poisson-problem}
\end{equation}
whose exact solution is
\begin{equation}
u^\star(x)=\sin(4x).
\label{eq:parametric-poisson-reference}
\end{equation}
Restricting the symbolic search to a polynomial library excludes the sine
operator and therefore places the exact solution outside the admissible
expression space. Under this misspecified library, fixed-topology coefficient
refinement yields the sixth-degree polynomial
\begin{equation}
\begin{aligned}
\tilde{u}(x)
=x\bigl(&-4.299578x^5+10.069427x^4+0.470956x^3\\
        &-11.053548x^2+0.056077x+3.999863\bigr).
\end{aligned}
\label{eq:misspecified-polynomial}
\end{equation}
The refined expression has complexity 27. Refinement converges and reduces the
constraint residual to $R_{\mathrm{con}}=6.99\times10^{-15}$, showing that the
prescribed boundary conditions are satisfied to high numerical accuracy.
However, the independently evaluated equation residual remains
$R_{\mathrm{eq}}=6.31\times10^{-2}$, while the relative $L_2$ error remains
$5.36\times10^{-4}$. Verification on the independent interior points therefore
exposes the residual discrepancy associated with the misspecified
representation space, as shown in Fig.~\ref{fig:incomplete-library}.

This result delineates the operating boundary of the framework. DeSyR can
correct coefficient errors on a fixed topology when the target is representable
in the corresponding Stage-B parameterization, but coefficient refinement
cannot remove representation error caused by operators that are absent from
the prescribed search library. In such cases, verification on independent
interior points flags the remaining model-form error, distinguishing a close
approximation from a successful exact recovery.

\begin{figure}[H]
\centering
\includegraphics[width=\textwidth]{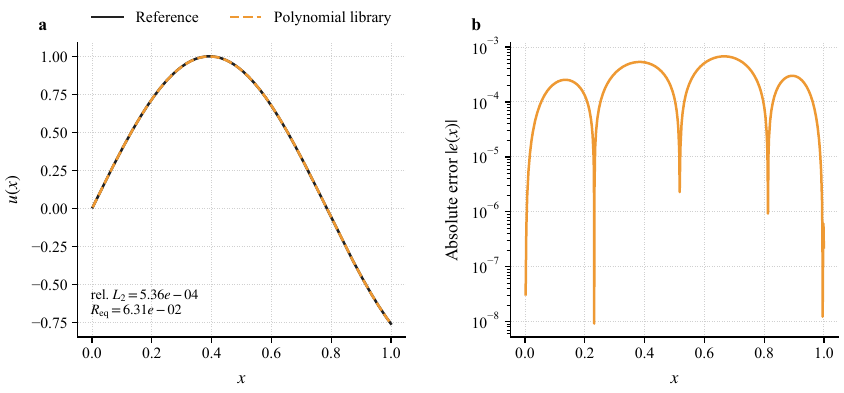}
\caption{Recovery under a misspecified polynomial operator library. Panel (a)
compares the sine reference with the refined polynomial candidate, and panel
(b) shows its pointwise absolute error. The non-negligible independently
evaluated equation residual exposes the representation error that coefficient
refinement cannot eliminate.}
\label{fig:incomplete-library}
\end{figure}

The preceding experiment considers an insufficient operator library. We next examine the complementary finite-budget effect of enriching the library with additional operators that may distract the symbolic search, as summarized in Table~\ref{tab:library-enrichment}. In this frozen-teacher study, the PINN teachers, binary operators, numerical settings, Stage-B refinement, and Stage-C selection are held fixed. For Sine--Poisson 1D, all five teachers recover the same expression under both the baseline and enriched libraries. For Burgers, enrichment changes the recovery count from $4/5$ to $3/5$ and increases the median time from 275.5 to 377.8~s. In the high-frequency Helmholtz case, however, the enriched library recovers only two of five teacher instances using two independently seeded searches, compared with five of five under the baseline $\{\sin\}$ library. An audit of the raw Pareto fronts for the prescribed separated-sine family shows that, in all three failed enriched-library instances, no recognized target-family candidate appears in either raw Pareto front or in the retained Stage-B pool. Increasing the search budget to $K=5$ restores recovery to five of five instances, but increases the median search-and-refinement time from 344.0 to 754.2~s. Within this tested setting, these results indicate that operator-library design and finite search budget jointly determine candidate coverage. They do not imply library-agnostic symbolic discovery, nor do they establish a generally sufficient value of $K$.

\begin{table}[H]
\centering
\scriptsize
\setlength{\tabcolsep}{3.5pt}
\renewcommand{\arraystretch}{1.08}
\caption{Exploratory frozen-teacher study of sensitivity to unary-operator library enrichment. Baseline and enriched settings use identical PINN teachers, numerical configurations, Stage-B refinement, and Stage-C selection. Recovery denotes the number of teachers for which the selected refined expression attains a relative $L_2$ error no greater than $10^{-10}$. Candidate refits are reported as converged/total, and the reported time includes both symbolic search and coefficient refinement. Results are aggregated over five teachers and are reported separately from the frozen formal benchmark protocol.}
\label{tab:library-enrichment}
\begin{tabular}{@{}l l c c c r@{}}
\toprule
Problem & Unary library & $K$ & Recovery &
Converged/total refits & Median time (s) \\
\midrule
Sine--Poisson 1D & $\{\sin\}$ & 2 & $5/5$ & $22/22$ & 46.0 \\
Sine--Poisson 1D & $\{\sin,\cos,\exp,\tanh\}$ & 2 & $5/5$ & $36/36$ & 50.9 \\
Burgers & $\{\tanh\}$ & 2 & $4/5$ & $47/50$ & 275.5 \\
Burgers & $\{\tanh,\sin,\cos,\exp\}$ & 2 & $3/5$ & $47/50$ & 377.8 \\
Helmholtz & $\{\sin\}$ & 2 & $5/5$ & $46/46$ & 295.6 \\
Helmholtz & $\{\sin,\cos,\exp,\tanh\}$ & 2 & $2/5$ & $45/50$ & 344.0 \\
Helmholtz & $\{\sin,\cos,\exp,\tanh\}$ & 5 & $5/5$ & $111/125$ & 754.2 \\
\bottomrule
\end{tabular}
\end{table}

\subsection{Selection robustness and computational cost}
\label{sec:selection-cost}

\subsubsection{Stage-C selection-gate ablation}

We assess the contribution of the Stage-C selection rules by replaying 60
frozen scalar candidate pools with one rule removed at a time. Removing the
convergence-eligibility gate changes only one final selection, but the
replacement is an unconverged candidate with complexity 47, and the worst
relative $L_2$ error increases from $1.16\times10^{-13}$ to
$4.29\times10^{-3}$. In contrast, removing the complexity preference changes
12 selected expressions without increasing the worst error, consistent with
its role in favoring simpler expressions among candidates that remain
comparable under the preceding gates. The teacher-compatibility and
physics-equivalence gates are inactive on these archived candidate pools, so
this replay does not provide evidence about their behavior in cases where
those gates become active. Thus, within these archived pools, the replay
identifies the convergence-eligibility gate as an important safeguard against
a severe worst-case failure, while the complexity preference primarily
promotes parsimonious selection. The complete replay is shown in
Appendix Fig.~\ref{fig:supp-selection-gates}.

The convergence status of the final selected expression alone does not fully characterize the numerical behavior of all candidates entering Stage~B. Across 17 archived scalar configurations, 3,745 of 3,774 candidate refits involving at least one free coefficient (99.23\%) are recorded as converged. The remaining 29 cases comprise 27 timeouts (0.72\%) and two finite but unconverged refits (0.05\%); no non-finite outcomes are observed. These candidate-level statistics provide complementary empirical support for the operational convergence check in Condition~C3, while not constituting a global convergence guarantee.

\subsubsection{Wall-clock cost}

The dominant computational cost is problem dependent. PINN training dominates
the fourth-order Euler--Bernoulli configuration, whereas the symbolic-recovery
stage dominates Wave and Burgers. Median total runtimes range from
approximately 100 seconds for Parametric Poisson to roughly 18--25 minutes
across the most computationally demanding configurations. These wall-clock
timings are implementation- and hardware-dependent empirical cost indicators
rather than asymptotic complexity estimates. Complete configuration-level
timing distributions and representative stage-wise breakdowns are provided in
Appendix Fig.~\ref{fig:supp-runtime} and Table~\ref{tab:supp-runtime}.

\FloatBarrier

\section{Discussion}
\label{sec:discussion}

\subsection{Methodological implications and diagnostics}

DeSyR assigns teacher data and governing physics to distinct optimization
tasks. The PINN supplies a global field approximation that provides a practical
surrogate for guiding combinatorial topology search. Once a topology is frozen,
the governing equation and prescribed constraints are used to determine the
final coefficients. This separation does not amount to replacing
teacher-guided symbolic regression with a fully physics-driven search:
Stage~A remains teacher guided, and Stage~C still uses teacher compatibility
to exclude refined candidates that are no longer compatible with the
teacher-guided solution branch. The key distinction is that teacher data are
absent from the Stage-B objective that determines the reported coefficients.

This division differs at the objective level from teacher-only distillation,
which estimates topology and constants from network samples, and from mixed
data--physics formulations, which retain both information sources in a common
coefficient objective. The fixed-topology analysis explains why this
difference matters. Teacher fitting inherits the projected teacher error,
whereas, for linear fixed-topology parameterizations, a finite-weight mixed
objective retains an $O(\beta^{-1})$ teacher-dependent contribution when
$\Phi^{\top}\varepsilon\neq0$. Physics-only refinement conditionally removes
this specific source of bias under well-posedness, fixed-topology
representability, attainment of zero residual, and discrete determinacy.
These results isolate the coefficient-estimation mechanism while holding the
selected topology fixed.

Successful recovery depends on the three operational conditions C1--C3 in
Section~\ref{sec:decoupling}. C1 requires teacher-based ranking and
compatibility to provide sufficiently informative guidance for candidate
screening; C2 requires the retained candidate pool to contain at least one
target-capable topology; and C3 requires at least one target-capable candidate
entering Stage~B to yield a finite, converged refinement. Together, these
conditions separate the roles of teacher guidance, candidate coverage, and
coefficient optimization, while also emphasizing the dependence of recovery
on the prescribed expression library.

The verification and audit components make several of these mechanisms
empirically assessable. Under the polynomial-only library in
Fig.~\ref{fig:incomplete-library}, the refined expression satisfies the
prescribed constraints to high numerical accuracy but retains an independently
evaluated equation residual of $6.31\times10^{-2}$, showing that equation
verification can flag a representation-space mismatch that coefficient
refinement does not eliminate. The initialization audit supports the use of
teacher-derived starting values, the Jacobian audit provides local numerical
rank and conditioning evidence for the selected scalar expressions, and the
frozen-pool gate replay shows that the convergence-eligibility rule can prevent
a severe selection failure in the archived candidate pools.

The candidate-level audit separates the numerical convergence behavior of all Stage-B refits from the convergence status of the final selected expression. The operator-library enrichment study further shows that enlarging the search space can require a larger search budget to preserve a target-capable topology in the candidate pool. Together, these complementary diagnostics provide empirical support for Conditions~C2 and C3 under the tested finite-budget settings, while neither establishing library-independent candidate coverage nor implying global convergence.

The theoretical results describe recovery under a fixed symbolic structure.
For finite collocation sets, exact fixed-topology recovery requires sufficient
discrete determinacy. For nonlinear parameterizations, the corresponding
identifiability and convergence guarantees are local and are characterized
through the Jacobian conditions in Section~\ref{sec:local}. These requirements
do not establish topology-discovery guarantees or global nonlinear convergence,
but they identify concrete quantities that can be examined when extending the
framework to new expression families and governing operators.

\subsection{Scope and future directions}

The present study focuses on controlled benchmark problems with prescribed operator libraries, regular domains, and analytic reference solutions. Extending DeSyR to noisy or uncertain observations, irregular geometries, discontinuous or multiscale solutions, and richer symbolic spaces will require renewed assessment of candidate-pool coverage, coefficient-refinement robustness, and verification reliability under these more challenging conditions. Such extensions may also require search and refinement strategies that better accommodate larger expression spaces and less regular solution structures.

A further direction is to move beyond fixed problem-specific operator libraries toward adaptive operator screening or hierarchical library expansion guided jointly by teacher information and physics-based criteria. Such strategies could improve representational coverage while controlling the combinatorial growth of the symbolic search space.

A supplementary fixed-topology study examines sensitivity to perturbations in the prescribed constraints while keeping the governing equation unchanged. All 48 refits converge, although the error relative to the unperturbed reference increases with the perturbation level. These results provide limited empirical evidence on the sensitivity of Stage~B to constraint perturbations, but they neither address noise in teacher-guided topology discovery nor establish a general stability guarantee.

Beyond recovery accuracy itself, the resulting closed-form expressions open several directions for downstream use. Future work may examine their utility for sensitivity analysis, reduced-order modeling, parameter studies, and repeated evaluation within design-optimization workflows, where explicit symbolic representations may offer advantages in interpretability and computational efficiency.
\section{Conclusions}
\label{sec:conclusion}

DeSyR separates symbolic topology discovery from final coefficient determination. A PINN guides the stochastic structure search and provides provisional coefficients, while the governing equation and prescribed constraints define a teacher-free refinement objective once the symbolic topology is fixed. For linear fixed-topology parameterizations, the analysis characterizes teacher-error inheritance and the finite-weight bias induced by mixed data--physics objectives. Under well-posedness, representability, zero-residual attainment, and discrete determinacy, physics-only refinement conditionally recovers the exact coefficients. For nonlinear parameterizations, the corresponding identifiability and convergence guarantees are local and make the role of initialization explicit.

Across 15 differential-equation problems and 18 tested configurations, physics-only refinement reaches near-machine-precision accuracy in many cases. In the reported same-topology comparisons for which both pre-refinement and post-refinement errors are strictly positive, the relative $L_2$ error is reduced by eight to fourteen orders of magnitude. The nonlinear and coupled cases further demonstrate that the decoupled procedure can operate effectively beyond linear scalar problems. At the same time, the operator-library misspecification experiment shows that coefficient refinement cannot remove representation error when the target solution lies outside the prescribed symbolic space. The complementary enrichment study further shows that, under a finite search budget, enlarging the operator library can reduce candidate coverage and may therefore require additional search effort to retain a target-capable topology.

Within the tested representable settings, these results support the central premise that an approximate neural teacher can guide symbolic structure discovery without necessarily imposing its error scale on the final recovered coefficients, provided that a target-capable topology is retained and the subsequent physics-only refinement converges.

\FloatBarrier
\clearpage

\appendix
\counterwithin{table}{section}
\counterwithin{figure}{section}
\renewcommand{\thetable}{\Alph{section}.\arabic{table}}
\renewcommand{\thefigure}{\Alph{section}.\arabic{figure}}

\section{Proofs of theoretical results}\label{app:proofs}
\begin{proof}[Proof of Lemma~\ref{lem:inheritance}]
By (L-rank), $v^{\top}\Phi^{\top}\Phi v=\|\Phi v\|_2^2>0$ for
$v\neq 0$, so $\Phi^{\top}\Phi$ is positive definite and invertible. Setting
the gradient of the strictly convex quadratic to zero gives the normal
equations
$\Phi^{\top}\Phi\,\hat a_{\mathrm{LS}}=\Phi^{\top}y_\theta$.
Assumption~\ref{asm4:representability} gives
$u^\star(x_i^S)=s(x_i^S;a^\star)=(\Phi a^\star)_i$, hence
$y_\theta=\Phi a^\star+\varepsilon$; substituting yields
\eqref{eq:ls-bias}. For the thin (economy-size) SVD
$\Phi=U\Sigma V^{\top}$ with $U^{\top}U=I_p$, orthogonal $V$, and
$\Sigma=\operatorname{diag}(\sigma_1,\dots,\sigma_p)$, one has
$(\Phi^{\top}\Phi)^{-1}\Phi^{\top}=V\Sigma^{-1}U^{\top}$. Orthogonal
invariance of the spectral norm gives
$\|(\Phi^{\top}\Phi)^{-1}\Phi^{\top}\|_2
=\|\Sigma^{-1}\|_2=1/\sigma_{\min}(\Phi)$, and taking norms yields
\eqref{eq:ls-bound}.
\end{proof}

\begin{proof}[Proof of Lemma~\ref{lem:decomposition}]
Equation \eqref{eq:pointwise} follows by adding and subtracting: $s(x;\hat a)-u^\star=[s(x;\hat a)-s(x;a^\star)]+[s(x;a^\star)-u^\star]$, where linearity gives the first term and Assumption~\ref{asm4:representability} makes the second term zero. Equation \eqref{eq:general-decomp} is the same rearrangement around $a^\dagger$. For \eqref{eq:norm-decomp}, write $\tilde\Phi\hat a-u^\star_v=\tilde\Phi(\hat a-a^\dagger)+(\tilde\Phi a^\dagger-u^\star_v)$ and apply the triangle inequality and submultiplicativity; the final specialization uses Lemma~\ref{lem:inheritance}.
\end{proof}

\begin{proof}[Proof of Proposition~\ref{prop:exact-recovery}]
Equation~\eqref{eq:Jrefit} gives
$\mathcal{J}_{\mathrm{refit}}(\hat a)=0\Longleftrightarrow \mathbf{F}(\hat a)=0$.
By Assumption~\ref{asm4:zero-residual},
$\mathcal{J}_{\mathrm{refit}}(\hat a)=0$. Equation~\eqref{eq:Jrefit} then
gives $\mathbf{F}(\hat a)=0$.

Assumptions~\ref{asm4:wellposed} and~\ref{asm4:representability} imply
$\mathbf{F}(a^\star)=0$. Indeed,
$s(\cdot;a^\star)=u^\star$ is the classical solution, so
$\mathcal{N}[s(\cdot;a^\star)]=f$ and
$\mathcal{B}_\ell[s(\cdot;a^\star)]=g_\ell$ pointwise; hence all refinement
residuals vanish at $a^\star$. Since Assumption~\ref{asm4:determinacy} makes
$\mathbf{F}$ injective on a neighborhood $U$ containing both $a^\star$ and
$\hat a$, the equality
$\mathbf{F}(\hat a)=\mathbf{F}(a^\star)=0$ implies $\hat a=a^\star$.
Assumption~\ref{asm4:representability} then gives
$s(\cdot;\hat a)=u^\star$.

Finally, $\mathbf{F}$ and $\mathcal{J}_{\mathrm{refit}}$ are built only from
$\mathcal{N}$, $\mathcal{B}_\ell$, $f$, $g_\ell$, the refinement points, and the
refinement weights. They contain no teacher samples or reference-solution
values once the topology, coefficient parameterization, and refinement point
set are fixed, which proves (ii). Statement (iii) follows from the refinement
procedure in Section~\ref{sec:stage-b}: conditional on these fixed objects, the
only teacher-dependent quantity entering the Stage-B optimization is the
initialization $a_0(\theta)$.
\end{proof}

\begin{proof}[Proof of Corollary~\ref{cor:mixed-bias}]
For (a), the Hessian of $J_{\mathrm{mix}}^{\beta}$ is
$2(\Phi^{\top}\Phi+\beta A^{\top}A)$. By (L-rank) and (PD), both
$\Phi^{\top}\Phi$ and $A^{\top}A$ are positive definite; hence the Hessian is
positive definite for every $\beta>0$. The objective is therefore strictly
convex, and its first-order optimality condition gives
\eqref{eq:mixed-min}.

For (b), (L-op) together with
Assumptions~\ref{asm4:wellposed}--\ref{asm4:representability} gives
$b=Aa^\star$. Substituting $y_\theta=\Phi a^\star+\varepsilon$ into
\eqref{eq:mixed-min} yields \eqref{eq:mixed-bias}.

For (c), write $M=A^{\top}A\succ0$ and $P=\Phi^{\top}\Phi\succ0$. Since
\begin{equation}
\beta M+P
=
\beta M
\left(
I+\beta^{-1}M^{-1}P
\right),
\label{eq:proof-factorization}
\end{equation}
for sufficiently large $\beta$,
$\|\beta^{-1}M^{-1}P\|_2<1$, so the Neumann expansion gives
\begin{equation}
(\beta M+P)^{-1}
=
\beta^{-1}M^{-1}
-
\beta^{-2}M^{-1}PM^{-1}
+
O(\beta^{-3}).
\label{eq:proof-neumann-expansion}
\end{equation}
Substitution into \eqref{eq:mixed-bias} gives \eqref{eq:mixed-asym}. For (d),
matrix inversion is continuous on the open set of invertible matrices, so the
limit $\beta\to0^+$ gives the teacher least-squares solution. For (e),
$\Phi^{\top}\Phi+\beta A^{\top}A$ is invertible for every $\beta>0$, and an
invertible linear map preserves nonzero vectors.
\end{proof}

\begin{proof}[Proof of Proposition~\ref{prop:local}]
Statement (a) follows by applying the chain rule to
$J(a)=\|\mathbf{F}(a)\|_2^2$.

For (b), $\mathbf{F}(a^\star)=0$ eliminates the residual-dependent term in the
Hessian, yielding
\begin{equation}
\nabla^2 J(a^\star)
=
2D\mathbf{F}(a^\star)^{\top}D\mathbf{F}(a^\star).
\label{eq:proof-local-hessian}
\end{equation}
By (R2), this matrix is positive definite. Moreover,
$J(a^\star)=0$ and, by part (a), $\nabla J(a^\star)=0$. By continuity of
$\nabla^2 J$, after possibly shrinking to a sufficiently small convex
neighborhood of $a^\star$, the Hessian remains uniformly positive definite.
Taylor's theorem then gives the stated local strong-convexity lower bound,
which implies that $a^\star$ is a strict local minimizer and the unique
zero-residual point in that neighborhood.

For (c), part (b) gives a nonsingular positive-definite Hessian at $a^\star$,
and (R3) provides the required local Hessian regularity. The standard local
Newton theorem therefore yields quadratic convergence for all initializations
in a sufficiently small neighborhood of $a^\star$. Because
$\mathbf{F}(a^\star)=0$, the residual-dependent part of the exact Hessian
vanishes at the solution, so the Gauss--Newton Hessian approximation coincides
with the exact Hessian there. Under (R1), the full-column-rank condition (R2), and the zero-residual property, the standard Gauss--Newton local convergence result likewise yields quadratic convergence.

For (d), the equivalence follows from the stationarity equation and the
identity
\begin{equation}
\ker D\mathbf{F}(a)^{\top}
=
\bigl(\operatorname{col}D\mathbf{F}(a)\bigr)^{\perp}.
\label{eq:proof-kernel-identity}
\end{equation}
At a spurious stationary point, $\mathbf{F}(a)\neq0$, and therefore
$J(a)=\|\mathbf{F}(a)\|_2^2>0$.
\end{proof}

\section{Benchmark definitions and fixed configurations}\label{app:definitions}
The governing equations, domains, constraints, reference solutions, and
problem-specific unary operators for the 18 configurations are summarized in
Table~\ref{tab:app-definitions}.

\begin{landscape}
\begin{table}[p]
\centering
\scriptsize
\setlength{\tabcolsep}{3pt}
\renewcommand{\arraystretch}{1.1}
\caption{Benchmark definitions for the 18 configurations. The table lists the
governing equations, computational domains, prescribed constraints, reference
solutions, and problem-specific unary operators. Here, $u^\star$ denotes the
analytic reference solution used, where applicable, to construct manufactured
forcing terms and prescribed boundary or initial data, and
$\partial\Omega_x$ denotes the spatial boundary for time-dependent problems.
For the Kovasznay case, $\mathbf{u}=(u,v)$ denotes the velocity field. Complete
search-operator settings are given in
Table~\ref{tab:app-search-settings}.}
\label{tab:app-definitions}
\resizebox{\linewidth}{!}{%
\begin{tabular}{@{}c l l l l l l@{}}
\toprule
ID & Problem & Governing equation & Domain & Constraints & Reference solution & Problem-specific unary operators \\
\midrule
01 & Param. Poisson & $u_{xx}+16\sin(4x)=0$ & $[0,1]$ & $u(0)=0$, $u(1)=\sin4$ & $\sin(4x)$ & $\{\sin\}$ \\
02 & Multifreq. Poisson & $u_{xx}+0.49\sin(0.7x)+2.25\cos(1.5x)=0$ & $[-10,10]$ & $u(\pm10)=u^\star(\pm10)$ & $-0.1x+\sin(0.7x)+\cos(1.5x)$ & $\{\sin,\cos\}$ \\
03 & Euler--Bernoulli & $u^{(4)}=5\times10^{-5}$ \cite{oh2023genetic} & $[0,10]$ & $u(0)=u(10)=u''(0)=u''(10)=0$ & $\frac{5\times10^{-5}}{24}(x^4-20x^3+1000x)$ & -- \\
04 & Conv.--diff. & $0.2u_{xx}-u_x=0$ & $[0,1]$ & $u(0)=0$, $u(1)=1$ & $(e^{5x}-1)/(e^5-1)$ & $\{\exp\}$ \\
05 & Sine--Poisson 1D & $u_{xx}+\pi^{2}\sin(\pi x)=0$ \cite{oh2023genetic} & $[0,1]$ & $u(0)=u(1)=0$ & $\sin(\pi x)$ & $\{\sin\}$ \\
06 & Diffusion & $u_t-u_{xx}+(1-\pi^{2})e^{-t}\sin(\pi x)=0$ \cite{majumdar2023symbolic,gong2025strusr} & $[-1,1]\times[0,1]$ & $u(\pm1,t)=0$, $u(x,0)=\sin(\pi x)$ & $e^{-t}\sin(\pi x)$ & $\{\sin,\exp\}$ \\
07a & Wave & $u_{tt}-u_{xx}=0$ \cite{majumdar2022physics} & $[0,\pi]\times[0,1]$ & $u(0,t)=u(\pi,t)=u(x,0)=0$, $u_t(x,0)=\sin x$ & $\sin t\sin x$ & $\{\sin\}$ \\
07b & Telegraph-1 & $u_{tt}+2u_t+u-u_{xx}=0$ \cite{majumdar2022physics} & $[0,1]^2$ & $u|_{\partial\Omega_x}=u^\star|_{\partial\Omega_x}$, $u(\cdot,0)=u^\star(\cdot,0)$, $u_t(\cdot,0)=(u^\star)_t(\cdot,0)$ & $e^{1.5x-2.5t}$ & $\{\exp\}$ \\
08 & Telegraph-2 & $u_{tt}+1.76u_t+0.88^{2}u-u_{xx}=0$ \cite{majumdar2022physics} & $[0,1]^2$ & $u|_{\partial\Omega_x}=u^\star|_{\partial\Omega_x}$, $u(\cdot,0)=u^\star(\cdot,0)$, $u_t(\cdot,0)=(u^\star)_t(\cdot,0)$ & $e^{0.88x}+e^{-0.88t}$ & $\{\exp\}$ \\
09a & Fokker--Planck-1 & $u_t-u_x-u_{xx}=0$ \cite{majumdar2022physics} & $[0,1]^2$ & $u|_{\partial\Omega_x}=u^\star|_{\partial\Omega_x}$, $u(\cdot,0)=u^\star(\cdot,0)$ & $x+t$ & $\{\exp\}$ \\
09b & Fokker--Planck-2 & $u_t-xu_x-\tfrac12x^{2}u_{xx}=0$ \cite{majumdar2022physics} & $[0,1]^2$ & $u|_{\partial\Omega_x}=u^\star|_{\partial\Omega_x}$, $u(\cdot,0)=u^\star(\cdot,0)$ & $xe^{t}$ & $\{\exp\}$ \\
09c & Fokker--Planck-3 & $u_t-(x+1)u_x-x^{2}e^{t}u_{xx}=0$ \cite{majumdar2022physics} & $[0,1]^2$ & $u|_{\partial\Omega_x}=u^\star|_{\partial\Omega_x}$, $u(\cdot,0)=u^\star(\cdot,0)$ & $(x+1)e^{t}$ & $\{\exp\}$ \\
10 & Klein--Gordon & $u_{tt}-u_{xx}+u^{3}=f$, $f=u^{\star}_{tt}-u^{\star}_{xx}+(u^{\star})^{3}$ & $[0,1]^2$ & $u|_{\partial\Omega_x}=u^\star|_{\partial\Omega_x}$, $u(\cdot,0)=u^\star(\cdot,0)$, $u_t(x,0)=0$ & $x\cos(5\pi t)+x^{3}t^{3}$ & $\{\cos\}$ \\
11 & Helmholtz & $u_{xx}+u_{yy}+u=(1-32\pi^{2})\sin(4\pi x)\sin(4\pi y)$ & $[-1,1]^2$ & $u=0$ on $\partial\Omega$ & $\sin(4\pi x)\sin(4\pi y)$ & $\{\sin\}$ \\
12 & Sine--Poisson 3D & $\Delta u+3\pi^{2}\prod_{q\in\{x,y,z\}}\sin(\pi q)=0$ \cite{oh2023genetic} & $[0,1]^3$ & $u=0$ on $\partial\Omega$ & $\sin(\pi x)\sin(\pi y)\sin(\pi z)$ & $\{\sin\}$ \\
13 & Burgers & $u_t+uu_x-0.05u_{xx}=0$ & $[-1,1]\times[0,1]$ & $u|_{\partial\Omega_x}=u^\star|_{\partial\Omega_x}$, $u(\cdot,0)=u^\star(\cdot,0)$ & $0.5-0.5\tanh(5x-2.5t)$ & $\{\tanh\}$ \\
\multirow[c]{3}{*}{14} & \multirow[c]{3}{*}{Kovasznay} &
\multirow[c]{3}{*}{$\mathbf u\cdot\nabla\mathbf u+\nabla p-\nu\nabla^{2}\mathbf u=0$, $\nabla\cdot\mathbf u=0$, $\mathrm{Re}=20$, $\nu=1/\mathrm{Re}$, $\lambda=\mathrm{Re}/2-\sqrt{\mathrm{Re}^2/4+4\pi^2}$ \cite{majumdar2022physics,majumdar2023symbolic}} &
\multirow[c]{3}{*}{$[-0.5,1]\times[-0.5,1.5]$} &
\multirow[c]{3}{*}{$u|_{\partial\Omega}=u^\star|_{\partial\Omega}$, $v|_{\partial\Omega}=v^\star|_{\partial\Omega}$; $p(0,0)=0$} &
$u=1-e^{\lambda x}\cos(2\pi y)$ &
\multirow[c]{3}{*}{$\{\sin,\cos,\exp\}$} \\
& & & & & $v=\frac{\lambda}{2\pi}e^{\lambda x}\sin(2\pi y)$ & \\
& & & & & $p=\frac12(1-e^{2\lambda x})$ & \\
15 & Sine--Poisson 2D & $\Delta u+2\pi^{2}\sin(\pi x)\sin(\pi y)=0$ \cite{oh2023genetic} & $[0,1]^2$ & $u=0$ on $\partial\Omega$ & $\sin(\pi x)\sin(\pi y)$ & $\{\sin\}$ \\
\bottomrule
\end{tabular}}
\end{table}
\end{landscape}

The fixed numerical settings are consolidated by configuration ID in
Tables~\ref{tab:app-pinn-settings} and~\ref{tab:app-search-settings}.

\subsection{PINN and sampling settings}

\begin{table}[H]
\centering
\scriptsize
\setlength{\tabcolsep}{3.2pt}
\renewcommand{\arraystretch}{1.08}
\caption{PINN architectures and sampling settings for the 18 configurations.
A notation such as $3\times40$ denotes three hidden layers with 40 neurons per
layer. Training-point counts are reported in the order
interior/boundary/initial, with a dash indicating that the corresponding point
class is not used. These counts refer to sampled locations rather than the
number of scalar residual components, since multiple conditions may be imposed
at the same location. The Kovasznay pressure anchor is an additional point
constraint and is not included in the boundary-location count. $N_S$ denotes
the number of frozen-teacher samples used for Stage-A symbolic search.
$N_r^{\mathrm{val}}$ and $N_r^f$ denote the numbers of interior
equation-residual points used for physics validation and Stage-B coefficient
refinement, respectively; constraint terms use their separately prescribed
constraint point sets.}
\label{tab:app-pinn-settings}
\begin{tabular}{@{}c l c c r r r@{}}
\toprule
ID & Problem & Hidden layers & Training points & $N_S$ & $N_r^{\mathrm{val}}$ & $N_r^f$ \\
\midrule
01 & Param. Poisson & $3\times40$ & 512/2/-- & 400 & 2000 & 2000 \\
02 & Multifreq. Poisson & $3\times50$ & 400/2/-- & 800 & 2000 & 2000 \\
03 & Euler--Bernoulli & $4\times50$ & 1024/4/-- & 1000 & 2000 & 2000 \\
04 & Conv.--diff. & $3\times30$ & 256/2/-- & 800 & 2001 & 2000 \\
05 & Sine--Poisson 1D & $4\times50$ & 512/2/-- & 500 & 2000 & 2000 \\
06 & Diffusion & $3\times50$ & 7500/600/600 & 3000 & 10000 & 5000 \\
07a & Wave & $4\times50$ & 2601/80/80 & 2000 & 5000 & 4000 \\
07b & Telegraph-1 & $4\times50$ & 2601/80/80 & 2000 & 5000 & 4000 \\
08 & Telegraph-2 & $4\times50$ & 2601/80/80 & 2000 & 5000 & 4000 \\
09a & Fokker--Planck-1 & $4\times50$ & 2601/80/80 & 2000 & 5000 & 4000 \\
09b & Fokker--Planck-2 & $4\times50$ & 2601/80/80 & 2000 & 5000 & 4000 \\
09c & Fokker--Planck-3 & $4\times50$ & 2601/80/80 & 2000 & 5000 & 4000 \\
10 & Klein--Gordon & $4\times64$ & 5000/2000/2000 & 1000 & 5000 & 4000 \\
11 & Helmholtz & $4\times50$ & 8000/800/-- & 2000 & 5000 & 4000 \\
12 & Sine--Poisson 3D & $4\times50$ & 10000/1200/-- & 4000 & 10000 & 8000 \\
13 & Burgers & $3\times50$ & 4000/800/1000 & 2000 & 10000 & 5000 \\
14 & Kovasznay & $3\times50$ & 10000/1200/-- & 3000 & 10000 & 6000 \\
15 & Sine--Poisson 2D & $4\times50$ & 3000/400/-- & 2000 & 5000 & 4000 \\
\bottomrule
\end{tabular}
\end{table}

\subsection{Symbolic-search and refinement settings}

\begin{table}[H]
\centering
\scriptsize
\setlength{\tabcolsep}{4pt}
\renewcommand{\arraystretch}{1.08}
\caption{Symbolic-search settings for the 18 configurations. Ten independently
seeded symbolic-search runs are performed per PINN teacher, and each run retains
up to five representative candidates by the Stage-A Pareto-front retention rule.
Configuration-specific refinement-point counts are reported in
Table~\ref{tab:app-pinn-settings}; additional coupled-search and refinement
settings for Kovasznay are given below.}
\label{tab:app-search-settings}
\begin{tabular}{@{}c l l l c c@{}}
\toprule
ID & Problem & Binary operators & Unary operators & Iterations $\times$ populations & Max. size \\
\midrule
01 & Param. Poisson & $+,-,\times,/$ & $\sin$ & $80\times12$ & 25 \\
02 & Multifreq. Poisson & $+,-,\times,/$ & $\sin,\cos$ & $100\times16$ & 30 \\
03 & Euler--Bernoulli & $+,-,\times$ & -- & $100\times16$ & 20 \\
04 & Conv.--diff. & $+,-,\times,/$ & $\exp$ & $100\times16$ & 30 \\
05 & Sine--Poisson 1D & $\times$ & $\sin$ & $100\times16$ & 15 \\
06 & Diffusion & $+,-,\times$ & $\sin,\exp$ & $100\times16$ & 25 \\
07a & Wave & $+,-,\times,/$ & $\sin$ & $100\times16$ & 30 \\
07b & Telegraph-1 & $+,-,\times,/$ & $\exp$ & $100\times16$ & 30 \\
08 & Telegraph-2 & $+,-,\times,/$ & $\exp$ & $100\times16$ & 30 \\
09a & Fokker--Planck-1 & $+,-,\times,/$ & $\exp$ & $100\times16$ & 30 \\
09b & Fokker--Planck-2 & $+,-,\times,/$ & $\exp$ & $100\times16$ & 30 \\
09c & Fokker--Planck-3 & $+,-,\times,/$ & $\exp$ & $100\times16$ & 30 \\
10 & Klein--Gordon & $+,-,\times,/$ & $\cos$ & $100\times16$ & 30 \\
11 & Helmholtz & $+,-,\times,/$ & $\sin$ & $80\times12$ & 30 \\
12 & Sine--Poisson 3D & $\times$ & $\sin$ & $100\times16$ & 25 \\
13 & Burgers & $+,-,\times,/$ & $\tanh$ & $100\times16$ & 30 \\
14 & Kovasznay & $+,-,\times$ & $\sin,\cos,\exp$ & $100\times16$ & 30 \\
15 & Sine--Poisson 2D & $\times$ & $\sin$ & $100\times16$ & 20 \\
\bottomrule
\end{tabular}
\end{table}

For the coupled Kovasznay configuration, the fieldwise shortlist contains at
most 12 candidates per field. Candidate groups are formed by the Cartesian
product of these shortlists and evaluated on a 750-point subset of the full
refinement set. Before screening refinement, the groups are ranked
lexicographically by joint physics score, mean teacher-relative $L_2$ error,
and total search complexity. The 16 highest-ranked groups then undergo joint
screening refinement using one initialization and at most 1000 residual
evaluations. The coupled Stage-C gates described in
Section~\ref{sec:coupled} are applied to the screened groups, after which the
selected topology group is re-estimated on the complete set of 6000 refinement
points using the standard multi-start settings.

Shared-factor augmentation scans multiplicative factors of the form
$\exp(r x_q)$ whose exponent is linear in a single independent variable and
whose rate satisfies $|r|\geq10^{-10}$. Rate observations are processed in
increasing order of $|r|$ and assigned to the first compatible cluster. Rates
associated with the same variable and sign are compatible when
\begin{equation}
|r-r_c|
\leq
0.35\max\{|r_c|,|r|,10^{-12}\},
\label{eq:coupled-rate-clustering}
\end{equation}
where $r_c$ is the current cluster median. A cluster is eligible only when it
is supported by at least two fields. Eligible clusters are ordered by decreasing
cross-field support, followed by increasing median search-stage loss, rate
dispersion, and absolute median rate; the first two are retained. Within each
field, trigonometric factors are ordered by their originating candidate loss,
expression-tree size, and symbolic string, and at most three are used. The
retained median rates and trigonometric factors generate at most six additional
candidates per field, selected by alternating teacher-fit and complexity
rankings.

Joint coefficient sharing is used here for the Kovasznay base-rate relation,
whose recovered cross-field exponential factors depend on $x$. Two exponential
rates are treated as numerically equal when
\begin{equation}
|r-r'|
<
10^{-9}\max\{1,|r|,|r'|\}.
\label{eq:coupled-rate-matching}
\end{equation}
A numerically matched rate occurring in at least two fields defines a shared
base parameter. Shared base rates are recorded in first-occurrence order under
the implemented field-and-atom traversal. Each observed rate $r$ is tested
against that ordered list and tied to the first base rate $r_b$ for which
$k=\operatorname{round}(r/r_b)\in\{1,2,3,4\}$ and
$|r/r_b-k|\leq10^{-8}$; it is then represented by $k$ times the corresponding
shared parameter. Rates that satisfy none of these conditions remain
field-specific.

\section{Detailed numerical results}
The main text reports configuration-level medians in
Table~\ref{tab:overall-performance}. The consolidated table below supplements
those results with across-seed dispersion, avoiding repetition of the same 18
configurations in multiple class-specific layouts.

\begin{landscape}
\begin{table}[H]
\centering\scriptsize
\setlength{\tabcolsep}{3pt}\renewcommand{\arraystretch}{1.08}
\caption{Across-seed performance distributions for the 18 benchmark
configurations. Relative $L_2$ errors are reported as median [first quartile,
third quartile] over five independently initialized PINN teachers. Pre-refit
denotes the search-stage expression corresponding to the candidate ultimately
selected after refinement and Stage-C selection. The complexity column reports
the range of final expression complexity across the five seeds.}
\label{tab:app-complete-performance}
\resizebox{\linewidth}{!}{
\begin{tabular}{@{}c l l l l c@{}}
\toprule
ID & Problem & PINN rel. $L_2$ & Pre-refit rel. $L_2$ & Refined rel. $L_2$ & Final complexity range \\
\midrule
01 & Param. Poisson & $1.09\times10^{-5}$ [$8.80\times10^{-6}$, $1.25\times10^{-5}$] & $9.82\times10^{-6}$ [$8.47\times10^{-6}$, $1.76\times10^{-5}$] & $0^{\dagger}$ [$0^{\dagger}$, $0^{\dagger}$] & 4--4 \\
02 & Multifreq. Poisson & $4.29\times10^{-3}$ [$3.59\times10^{-3}$, $1.76\times10^{-2}$] & $3.90\times10^{-3}$ [$3.38\times10^{-3}$, $1.76\times10^{-2}$] & $7.84\times10^{-17}$ [$7.84\times10^{-17}$, $5.36\times10^{-16}$] & 12--12 \\
03 & Euler--Bernoulli & $6.42\times10^{-6}$ [$5.95\times10^{-6}$, $1.45\times10^{-5}$] & $1.69\times10^{-3}$ [$7.88\times10^{-4}$, $1.01\times10^{-2}$] & $1.03\times10^{-14}$ [$3.03\times10^{-15}$, $1.05\times10^{-14}$] & 14--14 \\
04 & Conv.--diff. & $4.34\times10^{-5}$ [$2.93\times10^{-5}$, $1.09\times10^{-4}$] & $4.71\times10^{-4}$ [$4.89\times10^{-5}$, $5.11\times10^{-4}$] & $4.09\times10^{-16}$ [$4.09\times10^{-16}$, $1.60\times10^{-15}$] & 8--8 \\
05 & Sine--Poisson 1D & $5.42\times10^{-6}$ [$1.31\times10^{-6}$, $7.00\times10^{-6}$] & $1.87\times10^{-5}$ [$9.80\times10^{-6}$, $2.31\times10^{-5}$] & $1.98\times10^{-15}$ [$1.98\times10^{-15}$, $1.98\times10^{-15}$] & 4--4 \\
06 & Diffusion & $8.92\times10^{-5}$ [$7.39\times10^{-5}$, $2.33\times10^{-4}$] & $1.29\times10^{-5}$ [$1.21\times10^{-5}$, $1.86\times10^{-5}$] & $1.94\times10^{-15}$ [$1.94\times10^{-15}$, $1.94\times10^{-15}$] & 9--9 \\
07a & Wave & $2.44\times10^{-4}$ [$2.40\times10^{-4}$, $2.64\times10^{-4}$] & $0^{\dagger}$ [$0^{\dagger}$, $0^{\dagger}$] & $0^{\dagger}$ [$0^{\dagger}$, $0^{\dagger}$] & 5--5 \\
07b & Telegraph-1 & $3.75\times10^{-4}$ [$2.97\times10^{-4}$, $4.52\times10^{-4}$] & $1.90\times10^{-4}$ [$1.78\times10^{-4}$, $3.30\times10^{-4}$] & $0^{\dagger}$ [$0^{\dagger}$, $0^{\dagger}$] & 8--8 \\
08 & Telegraph-2 & $5.54\times10^{-5}$ [$4.75\times10^{-5}$, $5.63\times10^{-5}$] & $1.50\times10^{-5}$ [$1.01\times10^{-5}$, $2.02\times10^{-5}$] & $0^{\dagger}$ [$0^{\dagger}$, $0^{\dagger}$] & 9--9 \\
09a & Fokker--Planck-1 & $2.11\times10^{-4}$ [$1.11\times10^{-4}$, $2.60\times10^{-4}$] & $0^{\dagger}$ [$0^{\dagger}$, $0^{\dagger}$] & $0^{\dagger}$ [$0^{\dagger}$, $0^{\dagger}$] & 3--3 \\
09b & Fokker--Planck-2 & $5.57\times10^{-4}$ [$3.71\times10^{-4}$, $7.55\times10^{-4}$] & $0^{\dagger}$ [$0^{\dagger}$, $0^{\dagger}$] & $0^{\dagger}$ [$0^{\dagger}$, $0^{\dagger}$] & 4--4 \\
09c & Fokker--Planck-3 & $1.35\times10^{-4}$ [$6.81\times10^{-5}$, $2.25\times10^{-4}$] & $2.17\times10^{-5}$ [$5.43\times10^{-7}$, $3.89\times10^{-5}$] & $0^{\dagger}$ [$0^{\dagger}$, $0^{\dagger}$] & 6--6 \\
10 & Klein--Gordon & $9.27\times10^{-3}$ [$7.75\times10^{-3}$, $1.16\times10^{-2}$] & $6.95\times10^{-4}$ [$3.17\times10^{-4}$, $1.31\times10^{-3}$] & $1.85\times10^{-14}$ [$1.85\times10^{-14}$, $1.85\times10^{-14}$] & 14--14 \\
11 & Helmholtz & $4.94\times10^{-2}$ [$4.68\times10^{-2}$, $4.96\times10^{-2}$] & $6.52\times10^{-3}$ [$6.32\times10^{-3}$, $1.92\times10^{-2}$] & $2.31\times10^{-14}$ [$2.31\times10^{-14}$, $2.31\times10^{-14}$] & 9--9 \\
12 & Sine--Poisson 3D & $1.96\times10^{-4}$ [$1.89\times10^{-4}$, $2.08\times10^{-4}$] & $4.07\times10^{-5}$ [$1.80\times10^{-5}$, $6.77\times10^{-5}$] & $3.52\times10^{-15}$ [$3.52\times10^{-15}$, $3.52\times10^{-15}$] & 13--13 \\
13 & Burgers & $6.47\times10^{-4}$ [$4.88\times10^{-4}$, $8.99\times10^{-4}$] & $3.92\times10^{-4}$ [$2.59\times10^{-4}$, $4.60\times10^{-4}$] & $3.71\times10^{-17}$ [$3.71\times10^{-17}$, $3.71\times10^{-17}$] & 12--12 \\
14 & Kovasznay & $2.72\times10^{-3}$ [$1.44\times10^{-3}$, $2.78\times10^{-3}$] & $2.65\times10^{-3}$ [$1.35\times10^{-3}$, $2.71\times10^{-3}$] & $3.13\times10^{-15}$ [$2.47\times10^{-15}$, $1.73\times10^{-14}$] & 30--35 \\
15 & Sine--Poisson 2D & $4.39\times10^{-5}$ [$3.71\times10^{-5}$, $5.04\times10^{-5}$] & $3.10\times10^{-5}$ [$2.98\times10^{-5}$, $3.38\times10^{-5}$] & $2.84\times10^{-15}$ [$2.84\times10^{-15}$, $2.84\times10^{-15}$] & 9--9 \\
\bottomrule
\end{tabular}}
\par\vspace{2pt}
{\scriptsize $\dagger$ Stored as an exact floating-point zero; no display floor
was applied.}
\end{table}
\end{landscape}

\subsection{Candidate-pool coverage under repeated searches}
\label{app:candidate-pool-coverage}

Table~\ref{tab:candidate-pool-detail} gives the configuration-level breakdown
under the first-$K$ replay used in Fig.~\ref{fig:candidate-pool-coverage}.
Each entry counts successful end-to-end recoveries among five independently
trained PINN teachers for the indicated configuration. The separately seeded
symbolic searches enlarge the candidate pool for a fixed teacher and are not
additional independent replicates.

\begin{table}[htbp]
\centering
\scriptsize
\setlength{\tabcolsep}{4.5pt}
\renewcommand{\arraystretch}{1.10}
\caption{Configuration-level end-to-end recovery under the first-$K$
candidate-pool replay. Entries report successful recoveries out of five
independently trained PINN teachers; success requires the selected refined
expression to have relative $L_2$ error no larger than $10^{-10}$. The last
column gives the smallest tested $K$ for which all five teacher-level instances
are recovered.}
\label{tab:candidate-pool-detail}
\begin{tabular}{@{}l c c c c c c@{}}
\toprule
Configuration & $K=1$ & $K=2$ & $K=3$ & $K=5$ & $K=10$ & Smallest $K$ (5/5) \\
\midrule
Parametric Poisson & 5/5 & 5/5 & 5/5 & 5/5 & 5/5 & 1 \\
Multifrequency Poisson & 5/5 & 5/5 & 5/5 & 5/5 & 5/5 & 1 \\
Euler--Bernoulli & 5/5 & 5/5 & 5/5 & 5/5 & 5/5 & 1 \\
Convection--diffusion & 5/5 & 5/5 & 5/5 & 5/5 & 5/5 & 1 \\
Diffusion & 5/5 & 5/5 & 5/5 & 5/5 & 5/5 & 1 \\
Wave & 5/5 & 5/5 & 5/5 & 5/5 & 5/5 & 1 \\
Telegraph-1 & 5/5 & 5/5 & 5/5 & 5/5 & 5/5 & 1 \\
Klein--Gordon & 5/5 & 5/5 & 5/5 & 5/5 & 5/5 & 1 \\
Fokker--Planck-1 & 5/5 & 5/5 & 5/5 & 5/5 & 5/5 & 1 \\
Helmholtz & \cellcolor{candidatepoolhighlight}3/5 & 5/5 & 5/5 & 5/5 & 5/5 & 2 \\
Sine--Poisson 3D & 5/5 & 5/5 & 5/5 & 5/5 & 5/5 & 1 \\
Burgers & 5/5 & 5/5 & 5/5 & 5/5 & 5/5 & 1 \\
\bottomrule
\end{tabular}
\end{table}

For the Helmholtz library-enrichment study in Table~\ref{tab:library-enrichment}, we audit whether candidates belonging to the prescribed target family appear on the raw Pareto fronts and whether they are subsequently retained for Stage~B refinement. The raw-front and retained-pool counts coincide under all three settings, indicating that no target-family candidate recognized by this audit is discarded during candidate retention. The prescribed family is \(C\sin(ax+b)\sin(cy+d)+e\). Because this is a strict structural predicate, the audit may undercount more general expression trees that become equivalent to the target only after coefficient refinement and symbolic simplification.

\begin{table}[H]
\centering
\scriptsize
\setlength{\tabcolsep}{4pt}
\renewcommand{\arraystretch}{1.10}
\caption{Structural audit of the raw Pareto fronts and retained Stage-B pools for the Helmholtz library-enrichment study. Entries report the number of teacher instances for which the corresponding set contains at least one candidate belonging to the prescribed family \(C\sin(ax+b)\sin(cy+d)+e\).}
\label{tab:raw-front-audit}
\begin{tabular}{@{}l c c c@{}}
\toprule
Condition & Raw fronts & Retained pools & Successful refits \\
\midrule
Baseline, \(K=2\) & \(5/5\) & \(5/5\) & \(5/5\) \\
Enriched, \(K=2\) & \(2/5\) & \(2/5\) & \(2/5\) \\
Enriched, \(K=5\) & \(5/5\) & \(5/5\) & \(5/5\) \\
\bottomrule
\end{tabular}
\end{table}

\begin{landscape}
\subsection{Coefficient-refinement objective ablation}
\paragraph{Linear fixed-topology parameterizations.}
The table below reports the fixed bases and slope estimates underlying
Fig.~\ref{fig:linear-objective-bias}. Each row uses five independently
initialized PINN teachers, giving 30 teacher instances in total.

\begin{center}
\scriptsize
\setlength{\tabcolsep}{4pt}
\renewcommand{\arraystretch}{1.12}
\captionof{table}{Linear fixed-topology objective ablation used for the
theory-aligned experiment in Fig.~\ref{fig:linear-objective-bias}. Each basis
admits an exact representation of the solution and is linear in its $p$
coefficients. The tail
slope is the least-squares slope of $\log_{10}$ relative $L_2$ error against
$\log_{10}\beta$ over the last three finite-weight points whose median error
exceeds the numerical accuracy floor of $10^{-15}$. The fitted $\beta$ values
are listed for each problem.}
\label{tab:linear-objective-detail}
\begin{tabular}{@{}l p{6.3cm} c c c c@{}}
\toprule
Problem & Fixed basis $s(\boldsymbol{x};a)$ & $p$ & Teachers &
Slope-fit $\beta$ values & Tail slope \\
\midrule
Multifrequency Poisson &
$a_1x+a_2\sin(0.7x)+a_3\cos(1.5x)$ & 3 & 5 &
$\{10^6,10^8,10^{10}\}$ & $-1.000$ \\
Euler--Bernoulli &
$a_1x+a_2x^2+a_3x^3+a_4x^4$ & 4 & 5 &
$\{10^6,10^8,10^{10}\}$ & $-0.999$ \\
Convection--diffusion &
$a_1+a_2e^{5x}$ & 2 & 5 & $\{10^4,10^6,10^8\}$ & $-0.999$ \\
Diffusion &
$a_1e^{-t}\sin(\pi x)$ & 1 & 5 & $\{10^2,10^4,10^6\}$ & $-1.000$ \\
Helmholtz &
$a_1\sin(4\pi x)\sin(4\pi y)$ & 1 & 5 &
$\{10^2,10^4,10^6\}$ & $-1.001$ \\
Sine--Poisson 1D &
$a_1\sin(\pi x)$ & 1 & 5 & $\{1,10^2,10^4\}$ & $-0.999$ \\
\bottomrule
\end{tabular}
\end{center}

\paragraph{Recovered fixed topologies.}
\begin{center}
\scriptsize
\setlength{\tabcolsep}{5pt}
\renewcommand{\arraystretch}{1.10}
\captionof{table}{Coefficient-refinement objective ablation across 50 problem--seed
combinations from ten scalar configurations with free coefficients. Relative
$L_2$ errors and equation residuals $R_{\mathrm{eq}}$ are reported as median
[first quartile, third quartile]. Equation residuals are evaluated on the
independent interior verification points using the definition in
Section~\ref{sec:stage-c}.}
\label{tab:objective-detail}
\begin{tabular}{@{}l c l l@{}}
\toprule
Objective & $n$ & Relative $L_2$ error & Equation residual $R_{\mathrm{eq}}$ \\
\midrule
Teacher-only & 50 & $8.85\times10^{-5}$ [$6.81\times10^{-6}$, $5.51\times10^{-4}$] & $1.92\times10^{-4}$ [$3.84\times10^{-5}$, $2.25\times10^{-3}$] \\
Mixed, $\beta=10^{0}$ & 50 & $1.44\times10^{-7}$ [$6.58\times10^{-9}$, $2.06\times10^{-5}$] & $2.14\times10^{-7}$ [$3.44\times10^{-8}$, $7.57\times10^{-6}$] \\
Mixed, $\beta=10^{2}$ & 50 & $1.45\times10^{-9}$ [$6.58\times10^{-11}$, $9.08\times10^{-7}$] & $3.68\times10^{-9}$ [$5.94\times10^{-10}$, $7.56\times10^{-8}$] \\
Mixed, $\beta=10^{4}$ & 50 & $1.45\times10^{-11}$ [$6.58\times10^{-13}$, $9.61\times10^{-9}$] & $8.94\times10^{-11}$ [$8.62\times10^{-12}$, $7.57\times10^{-10}$] \\
Mixed, $\beta=10^{6}$ & 50 & $1.45\times10^{-13}$ [$6.88\times10^{-15}$, $9.61\times10^{-11}$] & $9.14\times10^{-13}$ [$8.66\times10^{-14}$, $7.58\times10^{-12}$] \\
Physics-only & 50 & $1.86\times10^{-17}$ [$0^{\dagger}$, $1.07\times10^{-16}$] & $4.53\times10^{-17}$ [$0^{\dagger}$, $7.03\times10^{-16}$] \\
\bottomrule
\end{tabular}
\par\vspace{2pt}
{\scriptsize $\dagger$ Stored as an exact floating-point zero; no display floor
was applied.}
\end{center}

\subsection{Initialization sensitivity}
\begin{center}
\scriptsize
\setlength{\tabcolsep}{3pt}
\renewcommand{\arraystretch}{1.08}
\captionof{table}{Initialization sensitivity of fixed-topology coefficient refinement
for ten scalar configurations with free coefficients. For each of five PINN
seeds per configuration, one teacher-derived initialization is compared with
five independent random initializations drawn from $U(-1,1)$, yielding
$n_T=5$ and $n_R=25$ per configuration. Post-refinement relative $L_2$ errors
are reported as median [first quartile, third quartile].}
\label{tab:initialization-detail}
\resizebox{\linewidth}{!}{%
\begin{tabular}{@{}l c l c l l@{}}
\toprule
Problem & $n_T$ & Teacher-derived init. & $n_R$ & Random init. & Random-init. range \\
\midrule
Param. Poisson & 5 & $0^{\dagger}$ [$0^{\dagger}$, $0^{\dagger}$] & 25 & $1.02\times10^{0}$ [$1.02\times10^{0}$, $1.02\times10^{0}$] & $0^{\dagger}$--$1.02\times10^{0}$ \\
Multifreq. Poisson & 5 & $1.79\times10^{-15}$ [$9.11\times10^{-16}$, $7.79\times10^{-14}$] & 25 & $1.02\times10^{0}$ [$7.94\times10^{-1}$, $1.02\times10^{0}$] & $7.88\times10^{-1}$--$1.02\times10^{0}$ \\
Euler--Bernoulli & 5 & $6.65\times10^{-16}$ [$5.88\times10^{-16}$, $8.11\times10^{-16}$] & 25 & $5.88\times10^{-16}$ [$5.88\times10^{-16}$, $4.70\times10^{-14}$] & $4.65\times10^{-16}$--$2.59\times10^{-13}$ \\
Conv.--diff. & 5 & $4.84\times10^{-16}$ [$4.39\times10^{-16}$, $5.64\times10^{-16}$] & 25 & $1.21\times10^{0}$ [$1.21\times10^{0}$, $1.21\times10^{0}$] & $1.21\times10^{0}$--$1.21\times10^{0}$ \\
Diffusion & 5 & $0^{\dagger}$ [$0^{\dagger}$, $0^{\dagger}$] & 25 & $8.04\times10^{-1}$ [$8.04\times10^{-1}$, $8.04\times10^{-1}$] & $0^{\dagger}$--$8.04\times10^{-1}$ \\
Telegraph-1 & 5 & $2.04\times10^{-16}$ [$0^{\dagger}$, $9.60\times10^{-16}$] & 25 & $0^{\dagger}$ [$0^{\dagger}$, $0^{\dagger}$] & $0^{\dagger}$--$0^{\dagger}$ \\
Klein--Gordon & 5 & $1.07\times10^{-16}$ [$1.07\times10^{-16}$, $1.07\times10^{-16}$] & 25 & $1.69\times10^{0}$ [$1.69\times10^{0}$, $1.69\times10^{0}$] & $1.69\times10^{0}$--$1.69\times10^{0}$ \\
Helmholtz & 5 & $0^{\dagger}$ [$0^{\dagger}$, $0^{\dagger}$] & 25 & $1\times10^{0}$ [$1\times10^{0}$, $1\times10^{0}$] & $1\times10^{0}$--$1\times10^{0}$ \\
Sine--Poisson 3D & 5 & $0^{\dagger}$ [$0^{\dagger}$, $0^{\dagger}$] & 25 & $1\times10^{0}$ [$1\times10^{0}$, $1\times10^{0}$] & $0^{\dagger}$--$1\times10^{0}$ \\
Burgers & 5 & $2.58\times10^{-14}$ [$7.71\times10^{-17}$, $2.95\times10^{-14}$] & 25 & $9.59\times10^{-17}$ [$7.50\times10^{-17}$, $7.02\times10^{-16}$] & $3.71\times10^{-17}$--$7.02\times10^{-15}$ \\
\bottomrule
\end{tabular}}
\par\vspace{2pt}
{\scriptsize $\dagger$ Stored as an exact floating-point zero; no display floor
was applied.}
\end{center}
\end{landscape}

\subsection{Sensitivity to Perturbations in Prescribed Constraints}
\label{app:constraint-noise}

This supplementary study isolates the sensitivity of fixed-topology coefficient refinement to perturbations in the prescribed constraints. The governing equation is kept unperturbed, while the symbolic topology is fixed to a target-capable form; independent Gaussian noise is introduced only into the prescribed boundary or initial values. For each nonzero noise level
$\eta\in\{10^{-6},10^{-4},10^{-2}\}$, five independent perturbation draws are generated for Multifrequency Poisson, Helmholtz, and Burgers, with perturbation standard deviation
$\eta\max\{\operatorname{RMS}(g),1\}$.
The coefficients are then re-estimated using the standard physics-only refinement objective with
$\lambda_c^{\mathrm{refit}}=100$.
Relative errors are evaluated against the unperturbed reference solution, while governing-equation residuals are evaluated using the unperturbed equation at independent interior points.

All 48 refits, including the three clean baselines, are recorded as converged. As shown in Fig.~\ref{fig:supp-constraint-noise}, both the solution error and governing-equation residual generally increase with $\eta$, with problem-dependent sensitivity. At $\eta=10^{-2}$, the median relative $L_2$ errors are $6.20\times10^{-4}$, $3.68\times10^{-7}$, and $7.20\times10^{-4}$ for Multifrequency Poisson, Helmholtz, and Burgers, respectively. These results provide empirical sensitivity evidence for the tested fixed-topology refinement protocol only; they neither establish a general noise-stability guarantee nor characterize end-to-end symbolic discovery under noisy observations.

\begin{figure}[H]
\centering
\includegraphics[width=\textwidth]{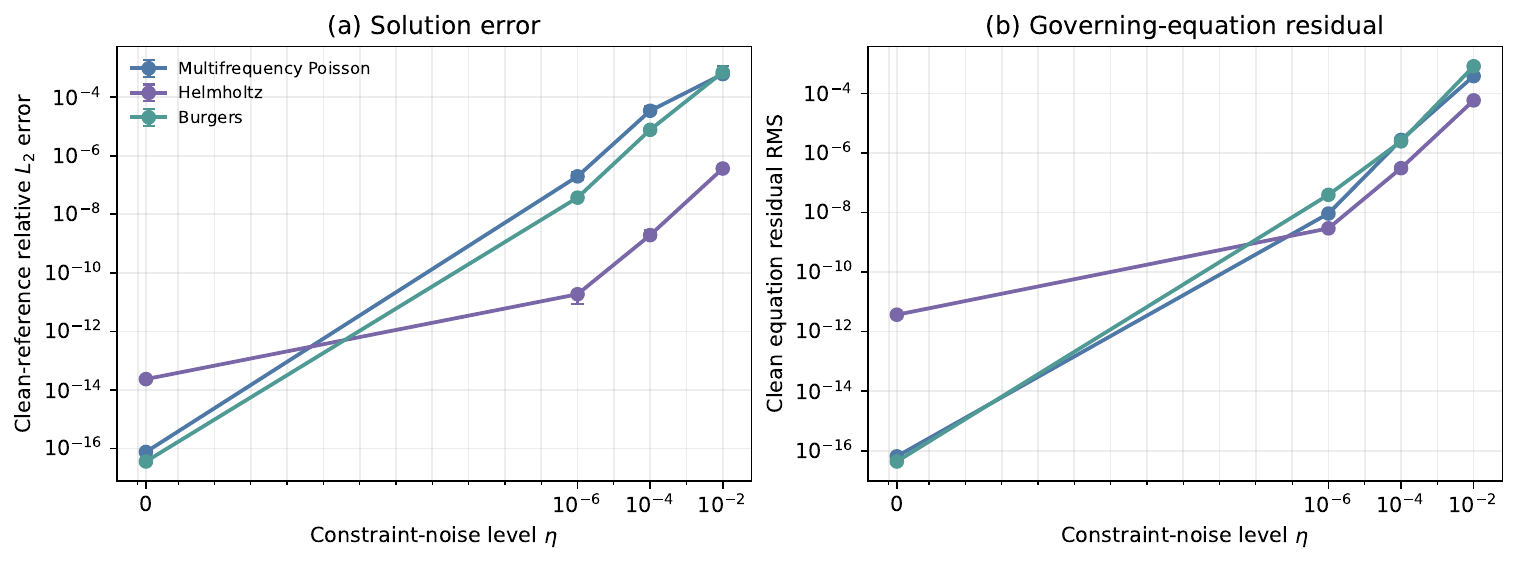}
\caption{Empirical sensitivity of fixed-topology physics-only coefficient refinement to perturbations in the prescribed constraints. The symbolic topology is fixed to a target-capable form and the governing equation remains unperturbed; independent Gaussian noise is applied only to the prescribed boundary or initial values. Markers denote medians, and error bars indicate the first and third quartiles over five independent draws at each nonzero noise level. The clean baseline consists of one deterministic run per configuration.}
\label{fig:supp-constraint-noise}
\end{figure}

\begin{landscape}
\subsection{Rank and conditioning diagnostics}
\begin{table}[H]
\centering
\scriptsize
\setlength{\tabcolsep}{3pt}
\renewcommand{\arraystretch}{1.08}
\caption{Rank and conditioning diagnostics for the Jacobian
$D\mathbf{F}(\hat a)$ of the discrete refinement-residual map evaluated at the
refined coefficients. The full-rank counts are those recorded in the
finite-precision rank audit. For runs with free coefficients, the minimum
singular value and condition number are reported as median [first quartile,
third quartile] over five PINN seeds. Zero-parameter runs are reported
separately, for which rank and conditioning metrics are not applicable.}
\label{tab:rank-detail}
\resizebox{\linewidth}{!}{%
\begin{tabular}{@{}l c c c c l l@{}}
\toprule
Problem & Runs & Free-param. runs & Full-rank runs & Zero-param. runs &
$\sigma_{\min}$ & Condition number \\
\midrule
Param. Poisson & 5 & 5 & 5 & 0 & $4.29\times10^{2}$ [$4.29\times10^{2}$, $4.29\times10^{2}$] & $1\times10^{0}$ [$1\times10^{0}$, $1\times10^{0}$] \\
Multifreq. Poisson & 5 & 5 & 5 & 0 & $7.96\times10^{1}$ [$7.96\times10^{1}$, $7.96\times10^{1}$] & $5.73\times10^{0}$ [$5.73\times10^{0}$, $5.73\times10^{0}$] \\
Euler--Bernoulli & 5 & 5 & 5 & 0 & $1.06\times10^{0}$ [$1.06\times10^{0}$, $1.06\times10^{0}$] & $9.58\times10^{4}$ [$9.58\times10^{4}$, $9.58\times10^{4}$] \\
Conv.--diff. & 5 & 5 & 5 & 0 & $9.93\times10^{0}$ [$9.93\times10^{0}$, $9.93\times10^{0}$] & $1.49\times10^{2}$ [$1.49\times10^{2}$, $1.49\times10^{2}$] \\
Diffusion & 5 & 5 & 5 & 0 & $3.07\times10^{2}$ [$3.07\times10^{2}$, $3.07\times10^{2}$] & $1\times10^{0}$ [$1\times10^{0}$, $1\times10^{0}$] \\
Wave & 5 & 0 & 0 & 5 & -- & -- \\
Telegraph-1 & 5 & 5 & 5 & 0 & $3.70\times10^{2}$ [$3.70\times10^{2}$, $3.70\times10^{2}$] & $2.70\times10^{0}$ [$2.70\times10^{0}$, $2.70\times10^{0}$] \\
Telegraph-2 & 5 & 5 & 5 & 0 & $1.81\times10^{2}$ [$7.60\times10^{1}$, $1.81\times10^{2}$] & $2.34\times10^{0}$ [$2.34\times10^{0}$, $8.62\times10^{0}$] \\
Klein--Gordon & 5 & 5 & 5 & 0 & $3.82\times10^{3}$ [$3.82\times10^{3}$, $3.82\times10^{3}$] & $1\times10^{0}$ [$1\times10^{0}$, $1\times10^{0}$] \\
Fokker--Planck-1 & 5 & 0 & 0 & 5 & -- & -- \\
Fokker--Planck-2 & 5 & 0 & 0 & 5 & -- & -- \\
Fokker--Planck-3 & 5 & 0 & 0 & 5 & -- & -- \\
Helmholtz & 5 & 5 & 5 & 0 & $8.12\times10^{3}$ [$8.12\times10^{3}$, $8.12\times10^{3}$] & $1\times10^{0}$ [$1\times10^{0}$, $1\times10^{0}$] \\
Sine--Poisson 3D & 5 & 5 & 5 & 0 & $9.81\times10^{2}$ [$9.81\times10^{2}$, $9.81\times10^{2}$] & $1\times10^{0}$ [$1\times10^{0}$, $1\times10^{0}$] \\
Burgers & 5 & 5 & 5 & 0 & $4.77\times10^{0}$ [$4.77\times10^{0}$, $4.77\times10^{0}$] & $9.26\times10^{1}$ [$9.26\times10^{1}$, $9.26\times10^{1}$] \\
Sine--Poisson 1D & 5 & 5 & 5 & 0 & $2.37\times10^{2}$ [$2.37\times10^{2}$, $2.37\times10^{2}$] & $1\times10^{0}$ [$1\times10^{0}$, $1\times10^{0}$] \\
Sine--Poisson 2D & 5 & 5 & 5 & 0 & $5.62\times10^{2}$ [$5.62\times10^{2}$, $5.62\times10^{2}$] & $1\times10^{0}$ [$1\times10^{0}$, $1\times10^{0}$] \\
\bottomrule
\end{tabular}}
\end{table}
\end{landscape}

\section{Additional numerical figures}
The figures below provide grouped recovery profiles, field-level
visualizations, and distribution-level diagnostics that complement the
representative evidence presented in the main text. Representative profile and
field figures use one PINN seed per configuration: the seed whose PINN relative
$L_2$ error is closest to the five-seed median. All methods shown within a
configuration use that same PINN seed. For logarithmic pointwise-error
visualizations, stored zeros and values below the displayed numerical floor are
shown at that plotting floor.

\begin{figure}[!htbp]
\centering
\includegraphics[width=\textwidth]{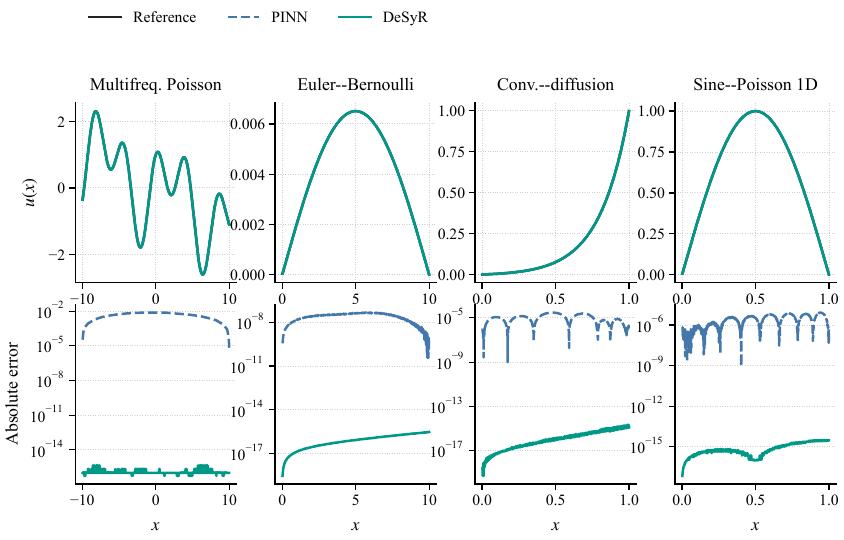}
\caption{Representative one-dimensional recovery profiles. The upper row
compares the reference solution, PINN teacher, and refined DeSyR expression
for Multifrequency Poisson, Euler--Bernoulli, Convection--diffusion, and
Sine--Poisson 1D. The lower row shows the corresponding pointwise absolute
errors of the PINN and DeSyR solutions.}
\label{fig:supp-structural-profiles}
\end{figure}

\begin{figure}[H]
\centering
\includegraphics[width=\textwidth]{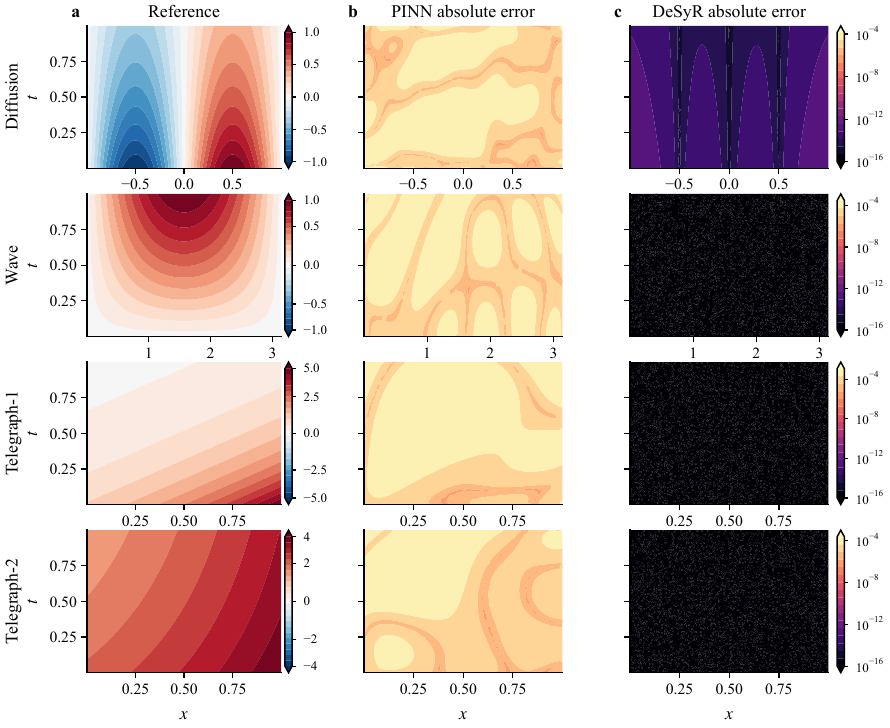}
\caption{Space--time recovery results for Diffusion, Wave, Telegraph-1, and
Telegraph-2. Rows correspond to the four benchmark configurations, while columns
show the reference field, the pointwise absolute error of the PINN teacher,
and the pointwise absolute error of the refined DeSyR expression. The PINN and
DeSyR absolute-error maps within each row use the same logarithmic color scale.}
\label{fig:supp-spacetime-a}
\end{figure}

\begin{figure}[H]
\centering
\includegraphics[width=\textwidth]{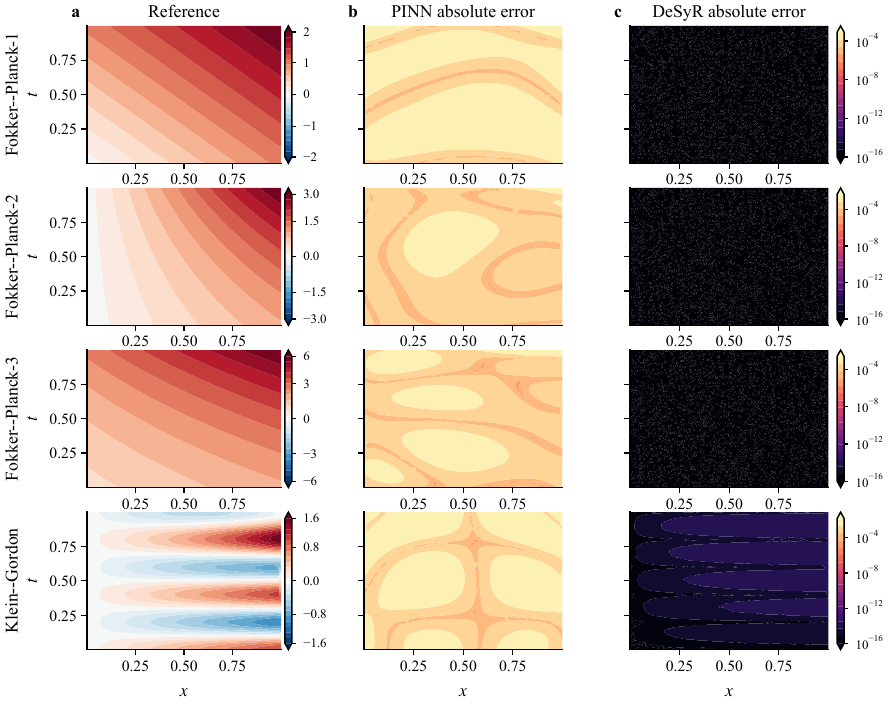}
\caption{Space--time recovery results for Fokker--Planck-1,
Fokker--Planck-2, Fokker--Planck-3, and Klein--Gordon. Rows correspond to the
four benchmark configurations, while columns show the reference field, the
pointwise absolute error of the PINN teacher, and the pointwise absolute error
of the refined DeSyR expression. The PINN and DeSyR absolute-error maps within
each row use the same logarithmic color scale.}
\label{fig:supp-spacetime-b}
\end{figure}

\begin{figure}[H]
\centering
\includegraphics[width=\textwidth]{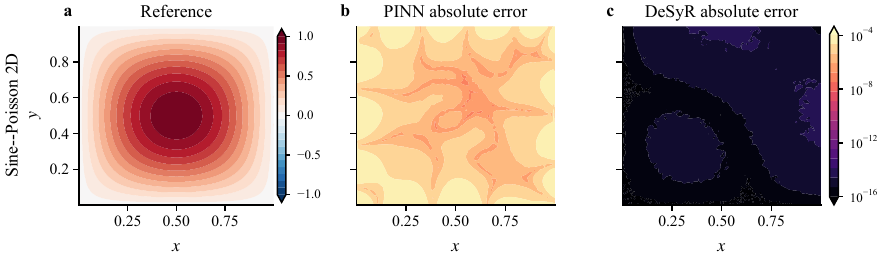}
\caption{Two-dimensional recovery results for Sine--Poisson 2D. The panels
show the reference field, the pointwise absolute error of the PINN teacher,
and the pointwise absolute error of the refined DeSyR expression. The PINN and
DeSyR absolute-error maps use the same logarithmic color scale.}
\label{fig:supp-sine2d}
\end{figure}

\begin{figure}[H]
\centering
\includegraphics[width=\textwidth]{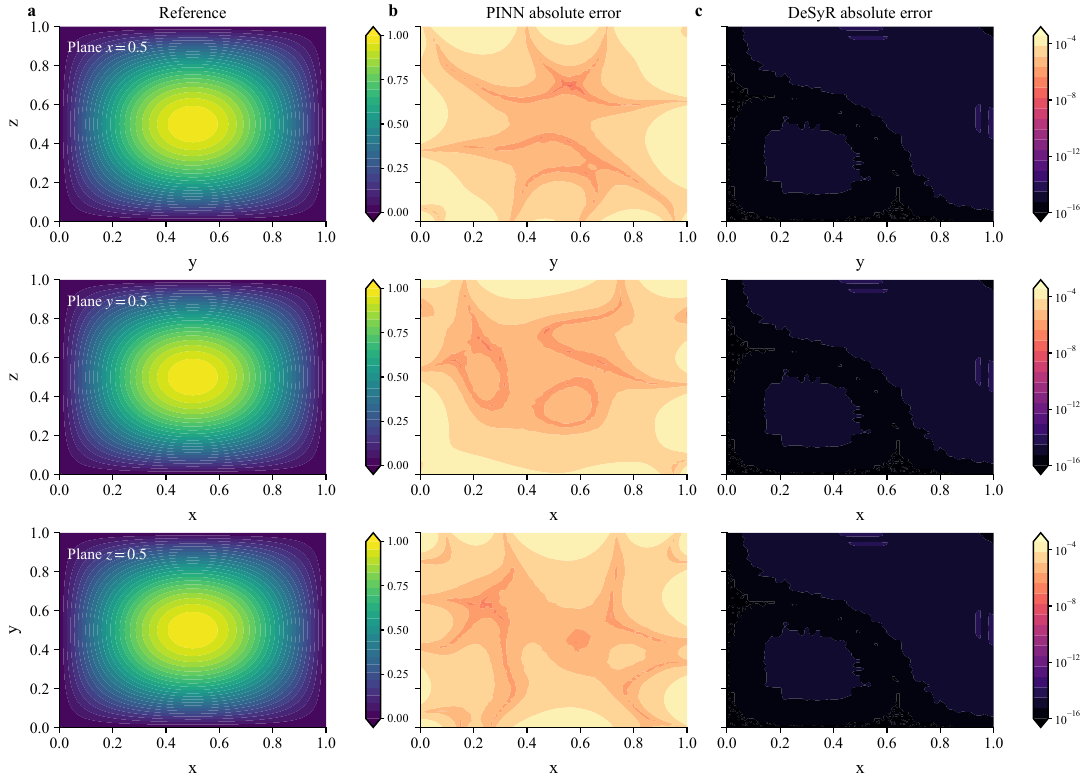}
\caption{Three-dimensional recovery results for Sine--Poisson 3D on the
orthogonal central planes $x=0.5$, $y=0.5$, and $z=0.5$. Columns show the
reference field, the pointwise absolute error of the PINN teacher, and the
pointwise absolute error of the refined DeSyR expression. Within each row, the
PINN and DeSyR absolute-error maps use the same logarithmic color scale.}
\label{fig:supp-sine3d}
\end{figure}

\begin{figure}[H]
\centering
\includegraphics[width=\textwidth]{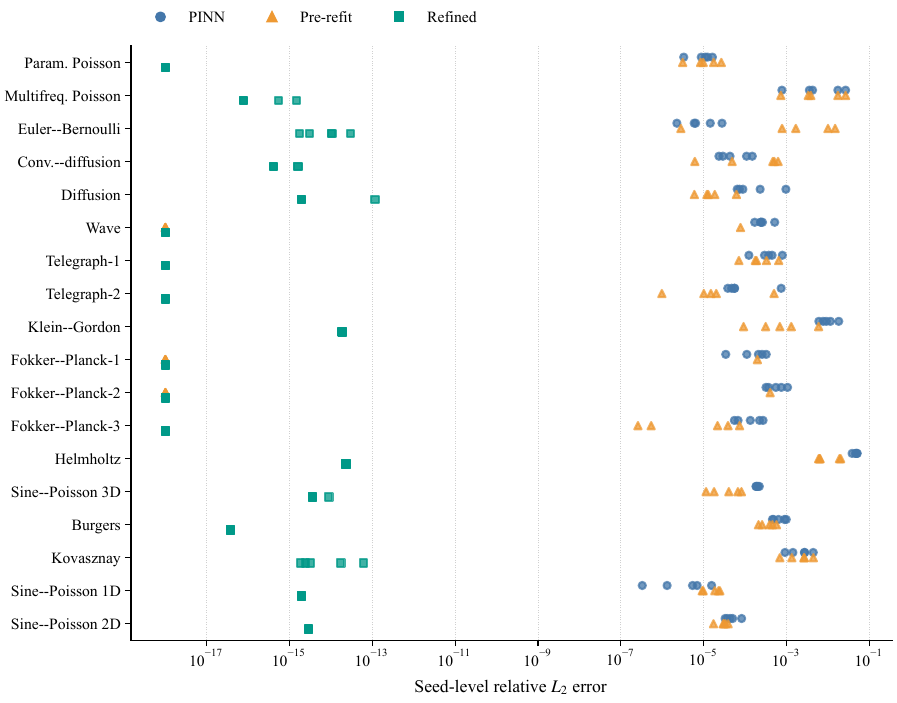}
\caption{Seed-level relative $L_2$ errors for all 18 configurations. For each
configuration, the PINN, pre-refit, and refined errors are shown separately
for each of the five PINN seeds. Markers within the floor region indicate
stored floating-point zero values.}
\label{fig:supp-seed-errors}
\end{figure}

\begin{figure}[H]
\centering
\includegraphics[width=\textwidth]{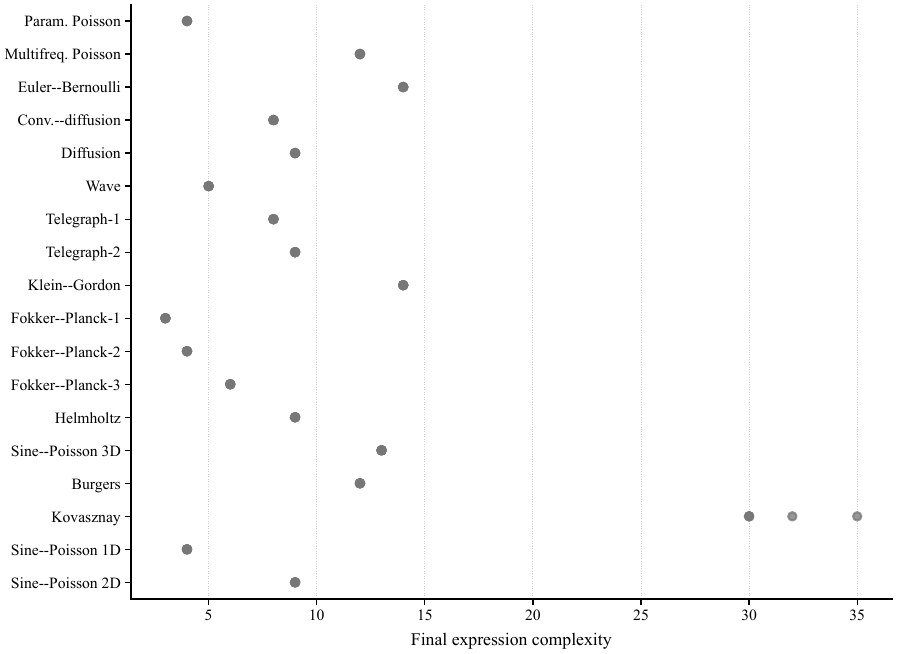}
\caption{Final expression complexity across the five PINN seeds for all 18
configurations. Coincident markers indicate identical final complexities across
seeds, while the Kovasznay spread reflects algebraically equivalent
representations of the coupled solution.}
\label{fig:supp-complexity}
\end{figure}

\begin{figure}[H]
\centering
\includegraphics[width=\textwidth]{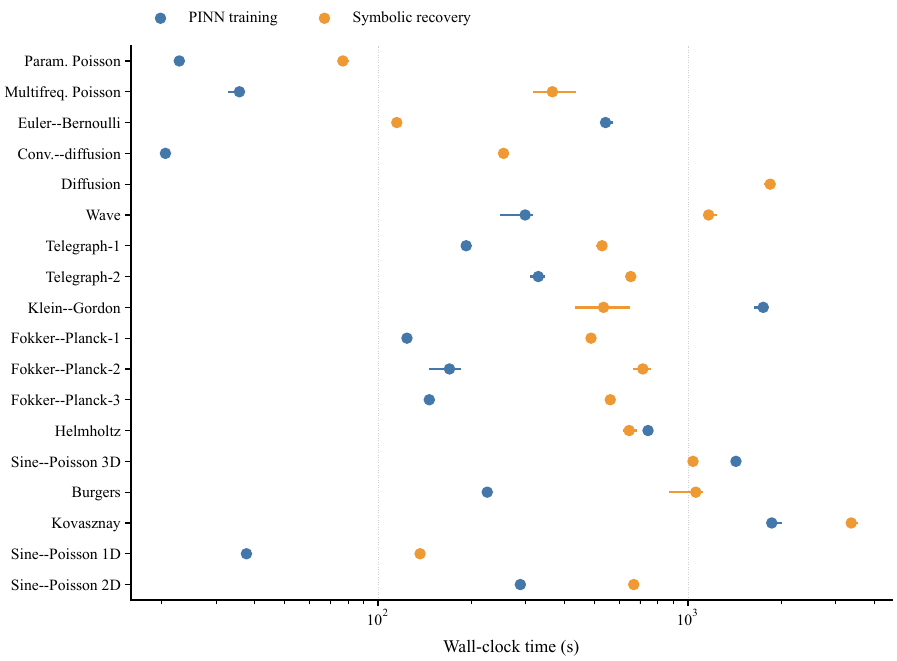}
\caption{Configuration-level wall-clock times for the 18 configurations.
Markers denote medians and horizontal intervals indicate interquartile ranges
for PINN training and the symbolic-recovery stage, which includes symbolic
search, coefficient refinement, and the subsequent Stage-C selection and
verification operations. Diffusion has no recorded training time, while Burgers
training is summarized over four runs; all other reported summaries use five
runs.}
\label{fig:supp-runtime}
\end{figure}

\begin{table}[H]
\centering
\scriptsize
\setlength{\tabcolsep}{4pt}
\renewcommand{\arraystretch}{1.08}
\caption{Stage-wise wall-clock costs for five representative configurations
spanning distinct computational regimes. Values are medians over available
recorded runs. Training and recovery counts are reported separately because
some PINN teachers were loaded from existing checkpoints. Recovery includes
symbolic search, coefficient refinement, and the subsequent Stage-C selection
and verification operations. The total-time count gives the number of per-seed
end-to-end pipeline timings used for the total-time median; when a teacher is
loaded from a checkpoint, that timing includes checkpoint loading in place of
PINN training. Total time is computed per run before taking the median and
therefore need not equal the sum of the reported stage-wise medians. Timeouts
count candidate-refinement attempts that reached the configured per-candidate
time limit and can therefore exceed the number of recovery runs. Timings are
hardware- and implementation-dependent.}
\label{tab:supp-runtime}
\begin{tabular}{@{}c l c r c r c r c@{}}
\toprule
ID & Problem & $n_{\mathrm{train}}$ & Training (s) & $n_{\mathrm{rec}}$
& Recovery (s) & $n_{\mathrm{total}}$ & Total time & Timeouts \\
\midrule
01 & Param. Poisson & 5 & 22.8 & 5 & 77.0 & 5 & 99.8 s & 0 \\
03 & Euler--Bernoulli & 5 & 542.1 & 5 & 114.9 & 5 & 655.2 s & 0 \\
07a & Wave & 5 & 298.2 & 5 & 1165.5 & 5 & 24.8 min & 11 \\
11 & Helmholtz & 5 & 743.1 & 5 & 645.9 & 5 & 23.1 min & 10 \\
13 & Burgers & 4 & 225.0 & 5 & 1059.9 & 5 & 18.6 min & 6 \\
\bottomrule
\end{tabular}
\end{table}

\begin{figure}[H]
\centering
\includegraphics[width=\textwidth]{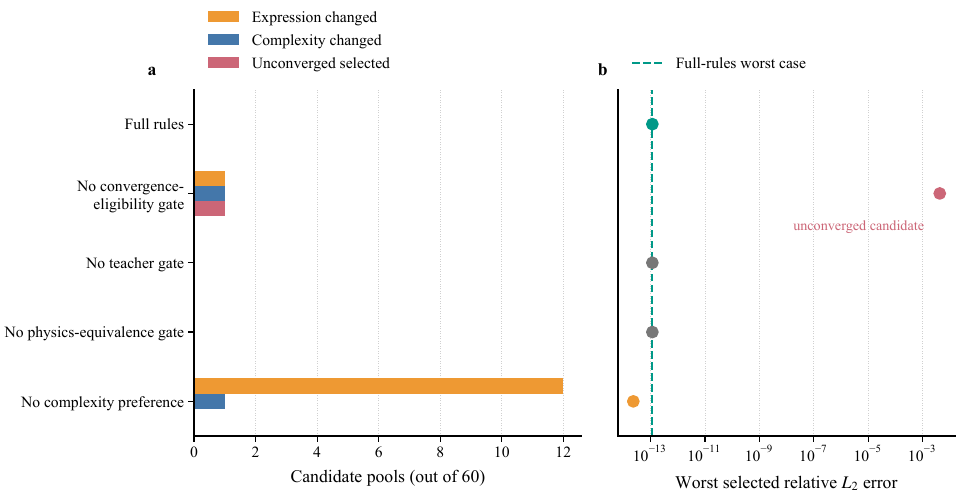}
\caption{Stage-C selection-gate replay on 60 frozen scalar candidate pools.
Panel (a) reports how many final selections change when each rule is removed,
including changes in expression, complexity, and convergence status; absent
bars indicate zero changes. Panel (b) shows the corresponding worst selected
relative $L_2$ error, with the dashed line marking the worst case under the
full rule set. Removing the convergence-eligibility gate admits one unconverged candidate
and substantially degrades the worst-case error, whereas removing the
complexity preference changes 12 selections without increasing the worst
error. The teacher-compatibility and physics-equivalence gates are inactive in
these archived pools and therefore do not change the final selections in this
replay.}
\label{fig:supp-selection-gates}
\end{figure}

\clearpage

\section*{Acknowledgements}

The authors gratefully acknowledge financial support from the National Natural Science Foundation of China (Grant No.~11971337), the General Research Fund of the Hong Kong Research Grants Council (Grant Nos.~15221123 and 15216424), and the Sichuan Provincial Department of Science and Technology (Project No.~2026NSFSC0138). This work was also supported by the Key Laboratory of Numerical Simulation of Sichuan Provincial Universities (Grant No.~KLNS-2023SZFZ002), the project ``Construction of a Remote Sensing Monitoring and Service System for the Ecological Environment of Typical Nature Reserves in Aba Prefecture'' (Project No.~R25CGZH0005), the Key Laboratory of Mathematical Meteorology (Grant No.~2025Z0340), and the Hong Kong Polytechnic University Internal Research Fund (Grant Nos.~P0058468 and P0056171).

\section*{Data availability}

The data supporting the findings of this study are available from the corresponding author upon reasonable request.

\section*{Declaration of competing interest}

The authors declare that they have no known competing financial interests or personal relationships that could have appeared to influence the work reported in this paper.

\bibliography{references}

\end{document}